\setlist[enumerate]{label=(\alph*)}
\newenvironment{proofof}[1][Proof]{\par\noindent{\bf #1\ }}{\hfill\BlackBox\\[2mm]}
\DeclareMathOperator{\proj}{proj}
\DeclareMathOperator{\sgn}{sgn}
\DeclareMathOperator{\hada}{\circ}
\DeclareMathOperator{\hadadiff}{\circledcirc}
\DeclareMathOperator{\diag}{diag}
\DeclareMathOperator{\insphere}{in}
\DeclareMathOperator{\SOAE}{SOAE}
\DeclareMathOperator{\out}{out}
\newcommand{\tvect}[3]{\left(#1,\ #2,\ #3\right)\transp}
\newcommand{\tdvect}[2]{\tvect{#1}{\dots}{#2}}
\newcommand{\tdvectbig}[2]{\big(#1,\ \dots,\ #2\big)\transp}
\newcommand{\intervalcc}[2]{\left[#1,\ #2\right]}
\newcommand{\intervaloo}[2]{\left(#1,\ #2\right)}
\newcommand{\N}{\mathbb{N}}
\newcommand{\R}{\mathbb{R}}
\newcommand{\0}{\mathcal{O}}
\newcommand{\transp}{^T}
\newcommand{\comp}{^C}
\newcommand{\norm}[1]{\left\Vert#1\right\Vert}
\newcommand{\bnorm}[1]{\big\Vert#1\big\Vert}
\newcommand{\abs}[1]{\left\vert #1 \right\vert}
\newcommand{\splx}{\triangle}
\newcommand{\scp}[2]{\left<#1,\ #2\right>}
\newcommand{\bscp}[2]{\big<#1,\ #2\big>}
\newcommand{\discint}[2]{\{#1,\dotsc,#2\}}
\newcommand{\inint}[2]{\in\discint{#1}{#2}}
\newcommand{\charusing}[2]{\overset{\text{\footnotesize #2}}{#1}}
\newcommand{\equsing}[1]{\charusing{=}{#1}}
\newcommand{\smallsum}{\scalebox{1.2}{$\sum$}}
\renewcommand{\P}{\wp}
\renewcommand{\emptyset}{\varnothing}
\begin{document}
\title{Sparse Activity and Sparse Connectivity in Supervised Learning}

\author{\name Markus Thom \email markus.thom@uni-ulm.de\\
       \addr driveU / Institute of Measurement, Control and Microtechnology\\
       Ulm University\\
       89081 Ulm, Germany
       \AND
       \name G\"{u}nther Palm \email guenther.palm@uni-ulm.de\\
       \addr Institute of Neural Information Processing\\
       Ulm University\\
       89081 Ulm, Germany}

\editor{Aapo Hyv\"{a}rinen}

\maketitle

\begin{abstract}%
Sparseness is a useful regularizer for learning in a wide range of applications, in particular in neural networks.
This paper proposes a model targeted at classification tasks, where sparse activity and sparse connectivity are used to enhance classification capabilities.
The tool for achieving this is a sparseness-enforcing projection operator which finds the closest vector with a pre-defined sparseness for any given vector.
In the theoretical part of this paper, a comprehensive theory for such a projection is developed.
In conclusion, it is shown that the projection is differentiable almost everywhere and can thus be implemented as a smooth neuronal transfer function.
The entire model can hence be tuned end-to-end using gradient-based methods.
Experiments on the MNIST database of handwritten digits show that classification performance can be boosted by sparse activity or sparse connectivity.
With a combination of both, performance can be significantly better compared to classical non-sparse approaches.
\end{abstract}

\begin{keywords}
supervised learning, sparseness projection, sparse activity, sparse connectivity
\end{keywords}

\section{Introduction}
\label{sect:introduction}
Sparseness is a concept of efficiency in neural networks, and exists in two variants in that context \citep{Laughlin2003}.
The \emph{sparse activity} property means that only a small fraction of neurons is active at any time.
The \emph{sparse connectivity} property means that each neuron is connected to only a limited number of other neurons.
Both properties have been observed in mammalian brains \citep{Hubel1959,Olshausen2004,Mason1991,Markram1997} and have inspired a variety of machine learning algorithms.
A notable result was achieved through the sparse coding model of \citet{Olshausen1996}.
Given small patches from images of natural scenes, the model is able to produce Gabor-like filters, resembling properties of simple cells found in mammalian primary visual cortex \citep{Hubel1959,Vinje2000}.
Another example is the optimal brain damage method of \citet{LeCun1990}, which can be used to prune synaptic connections in a neural network, making connectivity sparse.
Although only a small fraction of possible connections remains after pruning, this is sufficient to achieve equivalent classification results.
Since then, numerous approaches on how to measure sparseness have been proposed, see \citet{Hurley2009} for an overview, and how to achieve sparse solutions of classical machine learning problems.

The $L_0$ pseudo-norm is a natural sparseness measure.
Its computation consists of counting the number of non-vanishing entries in a vector.
Using it rather than other sparseness measures has been shown to induce biologically more plausible properties \citep{Rehn2007}.
However, finding of optimal solutions subject to the $L_0$ pseudo-norm turns out to be NP-hard \citep{Natarajan1995,Weston2003}.
Analytical properties of this counting measure are very poor, for it is non-continuous, rendering the localization of approximate solutions difficult.
The Manhattan norm of a vector is a convex relaxation of the $L_0$ pseudo-norm \citep{Donoho2006a}, and has been employed in a vast range of applications.
This sparseness measure has the significant disadvantage of not being scale-invariant, so that an intuitive notion of sparseness cannot be derived from it.

\subsection{Hoyer's Normalized Sparseness Measure}
\label{sect:intro_hoyer}
A normalized sparseness measure $\sigma$ based on the ratio of the $L_1$ or Manhattan norm and the $L_2$ or Euclidean norm of a vector has been proposed by \citet{Hoyer2004},
\begin{displaymath}
  \sigma\colon\R^n\setminus\set{0}\to\intervalcc{0}{1}\text{,}\qquad x\mapsto\frac{\sqrt{n} - \nicefrac{\norm{x}_1}{\norm{x}_2}}{\sqrt{n}-1}\text{,}
\end{displaymath}
where higher values indicate more sparse vectors.
$\sigma$ is well-defined because $\norm{x}_2 \leq \norm{x}_1 \leq \sqrt{n}\norm{x}_2$ holds for all $x\in\R^n$ \citep{Laub2004}.
As $\sigma(\alpha x) = \sigma(x)$ for all $\alpha \neq 0$ and all $x\in\R^n\setminus\set{0}$, $\sigma$ is also scale-invariant.
As composition of differentiable functions, $\sigma$ is differentiable on its entire domain.

This sparseness measure fulfills all criteria of \citet{Hurley2009} except for Dalton's fourth law, which states that the sparseness of a vector should be identical to the sparseness of the vector resulting from multiple concatenation of the original vector.
This property, however, is not crucial for a proper sparseness measure.
For example, sparseness of connectivity in a biological brain increases quickly with its volume, so that connectivity in a human brain is about $170$~times more sparse than in a rat brain \citep{Karbowski2003}.
It follows that $\sigma$ features all desirable properties of a proper sparseness measure.

A sparseness-enforcing projection operator, suitable for projected gradient descent algorithms, was proposed by \citet{Hoyer2004} for optimization with respect to $\sigma$.
For a pre-defined target degree of sparseness $\sigma^*\in\intervaloo{0}{1}$, the operator finds the closest vector of a given scale that has sparseness $\sigma^*$ given an arbitrary vector.
This can be expressed formally as Euclidean projection onto parameterizations of the sets
\begin{displaymath}
  S^{(\lambda_1,\lambda_2)} := \Set{s\in\R^n | \norm{s}_1 = \lambda_1\text{ and }\norm{s}_2 = \lambda_2}\text{ and }
  S_{\geq 0}^{(\lambda_1,\lambda_2)} := S^{(\lambda_1,\lambda_2)} \cap \R_{\geq 0}^n\text{.}
\end{displaymath}
The first set is for achieving unrestricted projections, whereas the latter set is useful in situations where only non-negative solutions are feasible, for example in non-negative matrix factorization problems.
The constants $\lambda_1, \lambda_2 > 0$ are target norms and can be chosen such that all points in these sets achieve a sparseness of $\sigma^*$.
For example, if $\lambda_2$ was set to unity for yielding normalized projections, then $\lambda_1$ can be easily derived from the definition of $\sigma$.

Hoyer's original algorithm for computation of such a projection is an alternating projection onto a hyperplane representing the $L_1$ norm constraint, a hypersphere representing the $L_2$ norm constraint, and the non-negative orthant.
A slightly modified version of this algorithm has been proved to be correct by \citet{Theis2005} in the special case when exactly one negative entry emerges that is zeroed out in the orthant projection.
However, there is still no mathematically satisfactory proof for the general case.

\subsection{Contributions of this Paper}
This paper improves upon previous work in the following ways.
Section~\ref{sect:projalgorithms} proposes a simple algorithm for carrying out sparseness-enforcing projections with respect to Hoyer's sparseness measure.
Further, an improved algorithm is proposed and compared with Hoyer's original algorithm.
Because the projection itself is differentiable, it is the ideal tool for achieving sparseness in gradient-based learning.
This is exploited in Section~\ref{sect:soae}, where the sparseness projection is used to obtain a classifier that features both sparse activity and sparse connectivity in a natural way.
The benefit of these two key properties is demonstrated on a real-world classification problem, proving that sparseness acts as regularizer and improves classification results.
The final sections give an overview of related concepts and conclude this paper.

On the theoretical side, a first rigorous and mathematically satisfactory analysis of the properties of the sparseness-enforcing projection is provided.
This is lengthy and technical and therefore deferred into several appendixes.
Appendix~\ref{sect:notation} fixes the notation and gives an introduction to general projections.
In Appendix~\ref{sect:proj_symm}, certain symmetries of subsets of the Euclidean space and their effect on projections onto such sets is studied.
The problem of finding projections onto sets where Hoyer's sparseness measure attains a constant value is addressed in Appendix~\ref{sect:projfuncproof}.
Ultimately, the algorithms proposed in Section~\ref{sect:projalgorithms} are proved to be correct.
Appendix~\ref{sect:analytical_properties} investigates analytical properties of the sparseness projection and concludes with an efficient algorithm that computes its gradient.
The gradients for optimization of the parameters of the architecture proposed in Section~\ref{sect:soae} are collected in the final Appendix~\ref{sect:soae_gradients}.

\section{Algorithms for the Sparseness-Enforcing~Projection~Operator}
\label{sect:projalgorithms}

The projection onto a set is a fundamental concept, for example see \cite{Deutsch2001}:
\begin{definition}
\label{dfn:projection}
Let $x\in\R^n$ and $\emptyset\neq M\subseteq\R^n$.
Then every point in
\begin{displaymath}
  \proj_M(x) := \set{y\in M | \norm{y - x}_2 \leq \norm{z - x}_2\text{ for all }z\in M}
\end{displaymath}
is called \emph{Euclidean projection} of $x$ onto $M$.
When there is exactly one point $y$ in $\proj_M(x)$, then $y = \proj_M(x)$ is used as an abbreviation.
\end{definition}
Because $\R^n$ is finite-dimensional, $\proj_M(x)$ is nonempty for all $x\in\R^n$ if and only if $M$ is closed, and $\proj_M(x)$ is a singleton for all $x\in\R^n$ if and only if $M$ is closed and convex \citep{Deutsch2001}.
In the literature, the elements from $\proj_M(x)$ are also called \emph{best approximations} to $x$ from $M$.

Projections onto sets that fulfill certain symmetries are of special interest in this paper and are formalized and discussed in Appendix~\ref{sect:proj_symm} in greater detail.
It is notable that projections onto a \emph{permutation-invariant} set $M$, that is a set where membership is stable upon coordinate permutation, are \emph{order-preserving}.
This is proved in Lemma~\ref{lem:proj_props}\ref{lem:proj_props_a}.
As a consequence, when a vector is sorted in ascending or descending order, then its projection onto $M$ is sorted accordingly.
If $M$ is \emph{reflection-invariant}, that is when the signs of arbitrary coordinates can be swapped without violating membership in $M$, then the projection onto $M$ is \emph{orthant-preserving}, as shown in Lemma~\ref{lem:proj_props}\ref{lem:proj_props_b}.
This means that a point and its projection onto $M$ are located in the same orthant.
By exploiting this property, projections onto $M$ can be yielded by recording and discarding the signs of the coordinates of the argument, projecting onto $M\cap\R_{\geq 0}^n$, and finally restoring the signs of the coordinates of the result using the signs of the argument.
This is formalized in Lemma~\ref{lem:proj_nonneg_sols}.

As an example for these concepts, consider the set $Z := \set{x\in\R^n | \norm{x}_0 = \kappa}$ of all vectors with exactly $\kappa\in\N$ non-vanishing entries.
$Z$ is clearly both permutation-invariant and reflection-invariant.
Therefore, the projection with respect to an $L_0$ pseudo-norm constraint must be both order-preserving and orthant-preserving.
In fact, the projection onto $Z$ consists simply of zeroing out all entries but the $\kappa$ that are greatest in absolute value \citep{Blumensath2009}.
This trivially fulfills the aforementioned properties of order-preservation and orthant-preservation.

Permutation-invariance and reflection-invariance are closed under intersection and union operations.
Therefore, the unrestricted target set $S^{(\lambda_1,\lambda_2)}$ for the $\sigma$ projection is permutation-invariant and reflection-invariant.
It is hence enough to handle projections onto $S_{\geq 0}^{(\lambda_1,\lambda_2)}$ in the first place, as projections onto the unrestricted target set can easily be recovered.

In the remainder of this section, let $n\in\N$ be the problem dimensionality and let $\lambda_1,\lambda_2>0$ be the fixed target norms, which must fulfill $\lambda_2 \leq \lambda_1 \leq \sqrt{n}\lambda_2$ to avoid the existence of only trivial solutions.
In the applications of the sparseness projection in this paper, $\lambda_2$ is always set to unity to achieve normalized projections, and $\lambda_1$ is adjusted as explained in Section~\ref{sect:intro_hoyer} to achieve the target degree of sparseness $\sigma^*$.
The related problem of finding the best approximation to a point $x$ regardless of the concrete scaling, that is computing projections onto $\Set{s\in\R^n\setminus\set{0} | \sigma(s) = \sigma^*}$, can be solved by projecting $x$ onto $S^{(\lambda_1,\lambda_2)}$ and rescaling the result $p$ such as to minimize $\norm{x - \alpha p}_2$ under variation of $\alpha\in\R$, which yields $\alpha = \nicefrac{\scp{x}{p}}{\norm{p}_2^2}$.
This method is justified theoretically by Remark~\ref{rem:projfunc_scaling}.

\subsection{Alternating Projections}
\label{sect:alternating_projections}
First note that the target set can be written as an intersection of simpler sets.
Let $e_1,\dotsc,e_n\in\R^n$ be the canonical basis of the $n$-dimensional Euclidean space $\R^n$.
Further, let $e := \sum_{i=1}^ne_i\in\R^n$ be the vector where all entries are identical to unity.
Then $H := \set{a\in\R^n | e\transp a = \lambda_1}$ denotes the target hyperplane where the coordinates of all points sum up to $\lambda_1$.
In the non-negative orthant $\R_{\geq 0}^n$, this is equivalent to the $L_1$ norm constraint.
Further, define $K := \set{q\in\R^n | \norm{q}_2 = \lambda_2}$ as the target hypersphere of all points satisfying the $L_2$ norm constraint.
This yields the following factorization:
\begin{displaymath}
  S_{\geq 0}^{(\lambda_1,\lambda_2)} = \R_{\geq 0}^n \cap H \cap K =: D\text{.}
\end{displaymath}
For computation of projections onto an intersection of a finite number of closed and convex sets, it is enough to perform alternating projections onto the members of the intersection \citep{Deutsch2001}.
As $K$ is clearly non-convex, this general approach has to be altered to work in this specific setup.

First, consider $L := H\cap K$, which denotes the intersection of the $L_1$ norm target hyperplane and the $L_2$ norm target hypersphere.
$L$ essentially possesses the structure of a hypercircle, that is, all points in $L$ lie also in $H$ and there is a central point $m\in H$ and a real number $\rho\geq 0$ such that all points in $L$ have squared distance $\rho$ from $m$.
It will be shown in Appendix~\ref{sect:projfuncproof} that $m = \nicefrac{\lambda_1}{n}\cdot e\in\R^n$ and $\rho = \lambda_2^2 - \nicefrac{\lambda_1^2}{n}$.
The intersection of the non-negative orthant with the $L_1$ norm hyperplane, $C := \R_{\geq 0}^n\cap H$, is a scaled canonical simplex.
Its barycenter coincides with the barycenter $m$ of $L$.
Finally, for an index set $I\subseteq\discint{1}{n}$ let $L_I := \set{a\in L | a_i = 0\text{ for all }i\not\in I}$ denote the subset of points from $L$, where all coordinates with index not in $I$ vanish.
Its barycenter is given by $m_I = \nicefrac{\lambda_1}{d}\cdot\sum_{i\in I}e_i\in\R^n$.
With these preparations, a simple algorithm can be proposed; it computes the sparseness-enforcing projection with respect to a constraint induced by Hoyer's sparseness measure $\sigma$.
\begin{algorithm}[t]
  \caption{Proposed algorithm for computing the sparseness-enforcing projection operator for Hoyer's sparseness measure $\sigma$.}
  \label{alg:projfunc}
  \SetAlgoLined
  \KwIn{$x\in\R^n$ and $\lambda_1,\lambda_2\in\R_{> 0}$ with $\lambda_2 \leq \lambda_1 \leq \sqrt{n}\lambda_2$.}
  \KwOut{$s\in\proj_D(x)$ where $D = S_{\geq 0}^{(\lambda_1,\lambda_2)}$.}
  \BlankLine

  \tcp{Project onto target hyperplane $H$ and target hypercircle $L$.}
  $r := \proj_H(x)$\nllabel{algl:projH}\;
  $s \in \proj_L(r)$\nllabel{algl:projL}\;

  \tcp{Perform alternating projections until feasible solution is found.}
  \While{$s\not\in\R_{\geq 0}^n$}
  {\nllabel{algl:projwhile}%
    \tcp{Project onto scaled canonical simplex $C$.}
    $r := \proj_C(s)$\nllabel{algl:projC}\;
    \tcp{Project onto $L$ keeping already vanished coordinates at zero.}
    $s \in \proj_{L_I}(r)$ where $I := \set{i\inint{1}{n} | r_i \neq 0}$\nllabel{algl:projLI}\;
  }
\end{algorithm}
\begin{theorem}
\label{thm:projfunc}
For every $x\in\R^n$, Algorithm~\ref{alg:projfunc} computes an element from $\proj_D(x)$.
If $r\neq m$ after line~\ref{algl:projH} and $r\neq m_I$ after line~\ref{algl:projC} in all iterations, then $\proj_D(x)$ is a singleton.
\end{theorem}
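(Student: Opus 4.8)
The plan is to prove two separate claims: first, that Algorithm~\ref{alg:projfunc} terminates and returns an element of $\proj_D(x)$; second, that under the stated genericity condition the projection is unique. Let me think about how I'd structure each part.

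For **termination and correctness**, the key structural facts are already set up in the excerpt. Since $D = \R_{\geq 0}^n \cap H \cap K$ and the target hypercircle $L = H \cap K$ is a sphere centered at $m$ within the affine hyperplane $H$, the projection onto $L$ is easy to compute explicitly (move radially within $H$). The algorithm alternates between projecting onto the scaled simplex $C = \R_{\geq 0}^n \cap H$ and projecting onto $L_I$, where $I$ tracks the currently-positive coordinates. The natural termination argument is monotonicity: I would show that each full loop iteration strictly decreases the distance to the feasible set $D$ (or, equivalently, that the number of active coordinates $|I|$ is nonincreasing and the sequence of iterates makes definite progress). Since there are only finitely many possible index sets $I \subseteq \discint{1}{n}$, and coordinates that vanish at the simplex-projection step stay vanished (this is exactly why $L_I$ rather than $L$ is used in line~\ref{algl:projLI}), the active set is monotonically shrinking, forcing termination after at most $n$ iterations.

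For **correctness of the output**, once the loop exits we have $s \in \R_{\geq 0}^n \cap L = D$, so $s$ is at least feasible. The harder part is showing $s$ is actually a \emph{closest} feasible point. Here I would invoke the symmetry machinery: $D$ is permutation-invariant, so by Lemma~\ref{lem:proj_props}\ref{lem:proj_props_a} the projection is order-preserving, which lets me reduce to sorted vectors and identify which coordinates must be zeroed. I expect the core of the correctness argument to be a variational/KKT-style characterization: a feasible point $s$ is a projection of $x$ iff $x - s$ satisfies the appropriate normal-cone condition for $D$ at $s$. I would verify that the algorithm's fixed point satisfies exactly these optimality conditions — nonnegativity of the relevant multipliers on the zeroed coordinates, and orthogonality to the hypercircle on the active ones. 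This is where the detailed geometry of projecting onto the intersection of a simplex and a sphere enters, and it is \textbf{the main obstacle}: unlike the convex case, $K$ is nonconvex, so standard alternating-projection convergence theorems do not apply directly, and I must rule out convergence to a spurious stationary point that is feasible but not optimal.

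For the **uniqueness** claim, the hypotheses $r \neq m$ after line~\ref{algl:projH} and $r \neq m_I$ after line~\ref{algl:projC} are precisely the degeneracy-avoidance conditions. The projection onto a sphere is single-valued except at the center, where every boundary point is equidistant; so $r \neq m$ guarantees line~\ref{algl:projL} returns a unique $s$, and likewise $r \neq m_I$ guarantees a unique projection onto $L_I$ at each iteration. I would argue that, because the entire algorithm is then a composition of single-valued maps, the output is uniquely determined; combined with the correctness result showing every element of $\proj_D(x)$ arises this way, this forces $\proj_D(x)$ to be a singleton. The remaining subtlety is confirming that the simplex projection in line~\ref{algl:projC} is \emph{always} single-valued (the simplex is convex and closed, so this is immediate from the standard theory cited after Definition~\ref{dfn:projection}), and that no other source of multiplicity is introduced; I would defer the full verification to the appendix, as the excerpt indicates, and here only record why the two stated inequalities are exactly the conditions that eliminate the two possible failure modes of single-valuedness.
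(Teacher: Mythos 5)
Your plan has a genuine gap at the core of the correctness argument. You propose to certify the output by a variational characterization: ``a feasible point $s$ is a projection of $x$ iff $x - s$ satisfies the appropriate normal-cone condition for $D$ at $s$.'' That ``iff'' is false here: since $D$ contains the nonconvex hypersphere constraint $K$, the normal-cone (KKT/first-order) condition is only \emph{necessary} for being a nearest point, never sufficient, so verifying it at the algorithm's fixed point cannot rule out a stationary-but-not-nearest feasible point. You name this obstacle yourself (``rule out convergence to a spurious stationary point'') but offer no mechanism to overcome it, which means the central step of the proof is missing. The paper's proof avoids optimality conditions entirely and instead establishes a much stronger loop invariant: every intermediate projection preserves the whole solution set, that is $\proj_D(x) = \proj_D(r) = \proj_D(s)$ after each step --- Lemma~\ref{lem:plane}\ref{lem:plane_b} for the projection onto $H$, Lemma~\ref{lem:sphere}\ref{lem:sphere_c} for $L$, Lemma~\ref{lem:splx}\ref{lem:splx_h} for the simplex step, and Lemma~\ref{lem:splxrcsn}\ref{lem:splxrcsn_d} for $L_I$. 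Once the loop exits, $s\in D$, and a feasible point trivially projects to itself, so $\proj_D(x) = \proj_D(s) = \set{s}$; no spurious stationary points can occur because the invariant is a statement about global minimizers, not first-order conditions. The genuinely hard geometry is concentrated in Lemma~\ref{lem:subsplx_proj} and Corollary~\ref{cor:subsplx_proj}, which show that the projection onto $D$ of a suitable point on a face $C_I$ stays on that face --- exactly what makes the invariant survive the simplex step.

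The uniqueness part of your plan inherits the same gap. From single-valuedness of every step (when $r\neq m$ and $r\neq m_I$) you only get that the algorithm's \emph{output} is uniquely determined; to conclude that $\proj_D(x)$ is a singleton you assert that ``every element of $\proj_D(x)$ arises this way,'' and nothing in your correctness argument (which at best shows the output is \emph{some} nearest point) delivers that. In the paper this comes for free from the set-equality invariant: $\proj_D(x) = \proj_D(s)$ with $s\in D$ forces $\proj_D(x)=\set{s}$. Finally, a smaller point on termination: you need the active set to shrink \emph{strictly} in every iteration, not merely be nonincreasing; the paper obtains this from Remark~\ref{rem:splx_proj_boundary} (a point outside $C$ projects onto the boundary $\partial C$, hence at least one further coordinate vanishes, by Proposition~\ref{prop:splx_boundary}), and your sketch should make that explicit.
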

As already pointed out, the idea of Algorithm~\ref{alg:projfunc} is that projections onto $D$ can be computed by alternating projections onto the geometric structures just defined.
The rigorous proof of correctness from Appendix~\ref{sect:projfuncproof} proceeds by showing that the set of solutions is not tampered by projection onto the intermediate structures $H$, $C$, $L$ and $L_I$.
Because of the non-convexity of $L$ and $L_I$, the relation between these sets and the simplex $C$ is non-trivial and needs long arguments to be described further, see especially Lemma~\ref{lem:subsplx_proj} and Corollary~\ref{cor:subsplx_proj}.

The projection onto the hyperplane $H$ is straightforward and discussed in Section~\ref{sect:l1nrmcnstrnt}.
As $L$ is essentially a hypersphere embedded in a subspace $H$ of $\R^n$, projections of points from $H$ onto $L$ are achieved by shifting and scaling, see Section~\ref{sect:l2nrmcnstrnt}.
The alternating projection onto $H$ and $L$ in the beginning of Algorithm~\ref{alg:projfunc} make the result of the projection onto $D$ invariant to positive scaling and arbitrary shifting of the argument, as shown in Corollary~\ref{cor:affine_invariance_projection}.
This is especially useful in practice, alleviating the need for certain pre-processing methods.
The formula for projections onto $L$ can be generalized for projections onto $L_I$ for an index set $I\subseteq\discint{1}{n}$, by keeping already vanished coordinates at zero, see Section~\ref{sect:selfsim_rec}.

Projections onto the simplex $C$ are more involved and discussed at length in Section~\ref{sect:splx_geom}.
The most relevant result is that if $x\in\R^n\setminus C$, then there exists a separator $\hat{t}\in\R$ such that $p := \proj_C(x) = \max\left(x - \hat{t}\cdot e,\ 0\right)$, where the maximum is taken element-wise \citep{Chen2011}.
In the cases considered in this paper it is always $\hat{t} \geq 0$ as shown in Lemma~\ref{lem:splx}.
This implies that all entries in $x$ that are less than $\hat{t}$ do not survive the projection, and hence the $L_0$ pseudo-norm of $x$ is strictly greater than that of $p$.
The simplex projection therefore enhances sparseness.

\begin{algorithm}[p]
  \caption{Computation of information for performing projections onto $C$, which is a scaled canonical simplex. This is an adapted version of the algorithm of \cite{Chen2011}.}
  \label{alg:projsplx_general}
  \SetAlgoLined
  \KwIn{$x\in\R^n\setminus C$ and $\lambda_1\in\R_{> 0}$.}
  \KwOut{$(\hat{t},\ d)\in\R\times\N$ such that $\proj_C(x) = \max\left(x - \hat{t}\cdot e,\ 0\right)$ and $\norm{\proj_C(x)}_0 = d$.}
  \BlankLine

  \tcp{Sort the input vector in descending order.}
  Let $\tau\in S_n$ such that $x_{\tau(1)} \geq \dots \geq x_{\tau(n)}$ and $y := P_\tau x\in\R^n$\nllabel{algl:projsplx_sort}\;
  \BlankLine

  \tcp{Find the only feasible separator $\hat{t}$.}
  $s := 0$\;
  \For{$i := 1$ \KwTo $n - 1$}
  {%
    $s := s + y_i$;\enspace
    $t := \frac{s - \lambda_1}{i}$\nllabel{algl:projsplx_general_tone}\;
    \lIf{$t \geq y_{i + 1}$}
    {%
      \KwRet $(t,\ i)$\;
    }
  }
  $s := s + y_n$;\enspace
  $t := \frac{s - \lambda_1}{n}$\nllabel{algl:projsplx_general_ttwo};\enspace
  \KwRet $(t,\ n)$\;
\end{algorithm}

\begin{algorithm}[p]
  \caption{Explicit and optimized variant of Algorithm~\ref{alg:projfunc}.}
  \label{alg:projfunc_explicit}
  \SetAlgoLined
  \KwIn{$x\in\R^n$ and $\lambda_1,\lambda_2\in\R_{> 0}$ with $\lambda_2 \leq \lambda_1 \leq \sqrt{n}\lambda_2$.}
  \KwOut{$s\in\proj_D(x)$ where $D = S_{\geq 0}^{(\lambda_1,\lambda_2)}$.}
  \BlankLine

  \SetKwBlock{KwProcProjL}{procedure {\tt proj\_L$(y\in\R^d)$}}{end}
  \KwProcProjL
  {%
    $\rho := \lambda_2^2 - \nicefrac{\lambda_1^2}{d}$\nllabel{algl:projfunc_explicit_projL_rho}\tcp*{Compute squared radius of $L_I$ (Lemma~\ref{lem:intersection_sphere_plane}).}
    $\varphi := \sum_{i=1}^d y_i^2 - \nicefrac{\lambda_1^2}{d}$\nllabel{algl:projfunc_explicit_projL_phi}\tcp*{$\varphi := \norm{y - m_I}_2^2$ (Remark~\ref{rem:point_m}).}
    \uIf{$\varphi = 0$}
    {%
      $\tdvect{y_1}{y_{d-1}} := \nicefrac{\lambda_1}{d} + \nicefrac{\sqrt{\rho}}{\sqrt{d(d-1)}}$\tcp*{$y$ equals the barycenter of $L_I$,}
      $y_d := \nicefrac{\lambda_1}{d} - \nicefrac{\sqrt{\rho(d-1)}}{\sqrt{d}}$\tcp*{pick a sorted projection (Remark~\ref{rem:projmontoL}).}
    }
    \lElse
    {%
      $y := \nicefrac{\lambda_1}{d}\cdot e + \sqrt{\nicefrac{\rho}{\varphi}}\cdot\left(y - \nicefrac{\lambda_1}{d}\cdot e\right)$\nllabel{algl:projfunc_explicit_projL_r}\tcp*{Pick unique projection (Lemma~\ref{lem:sphere}).}
    }
  }
  \BlankLine

  \tcp{Beginning of main body.}
  Let $\tau\in S_n$ such that $x_{\tau(1)} \geq \dots \geq x_{\tau(n)}$ and $y := P_\tau x\in\R^n$\tcp*{Sort the input vector.}
  $y := y + \nicefrac{1}{n}\cdot\left(\lambda_1 - \sum_{i=1}^n y_i\right)e$\nllabel{algl:projfunc_explicit_projH}\tcp*{Project onto $H$ (Lemma~\ref{lem:plane}).}
  $\text{\tt proj\_L}\left(y_1,\dotsc,y_n\right)$\nllabel{algl:projfunc_explicit_projL}\tcp*{Project in-place onto $L$.}
  \BlankLine

  \tcp{Perform alternating projections until feasible solution is found.}
  $d := n$\nllabel{algl:projfunc_explicit_d_init}\tcp*{Store current number of relevant entries of $y$.}
  \While{$\tdvect{y_1}{y_d}\not\in\R_{\geq 0}^d$}
  {\nllabel{algl:projfunc_explicit_while}%
    $(\hat{t},\ d) := \text{\tt proj\_C}\left(y_1,\dotsc,y_d\right)$\nllabel{algl:projfunc_explicit_projCcall}\tcp*{This is carried out by Algorithm~\ref{alg:projsplx_general}.}
    $\tdvect{y_1}{y_d} := \tdvect{y_1}{y_d} - \hat{t}$\tcp*{Project onto $C$ (Proposition~\ref{prop:projsplx}).}
    $\text{\tt proj\_L}\left(y_1,\dotsc,y_d\right)$\tcp*{Project onto $L_I$ where $I = \discint{1}{d}$ (Lemma~\ref{lem:splxrcsn}).}
  }
  \BlankLine

  \tcp{Undo sorting permutation and set remaining entries to zero.}
  $s\in\set{0}^n$;\enspace%
  \lFor{$i := 1$ \KwTo $d$}
  {%
    $s_{\tau(i)} := y_i$\;
  }
\end{algorithm}

The separator $\hat{t}$ and the number of nonzero entries in the projection onto $C$ can be computed with Algorithm~\ref{alg:projsplx_general}, which is an adapted version of the algorithm of \citet{Chen2011}.
In line~\ref{algl:projsplx_sort}, $S_n$ denotes the symmetric group and $P_\tau$ denotes the permutation matrix associated with a permutation $\tau\in S_n$.
The algorithm works by sorting its argument $x$ and then determining $\hat{t}$ as the mean value of the largest entries of $x$ minus the target $L_1$ norm $\lambda_1$.
The number of relevant entries for computation of $\hat{t}$ is equal to the $L_0$ pseudo-norm of the projection and is found by trying all feasible values, starting with the largest ones.
The computational complexity of Algorithm~\ref{alg:projsplx_general} is dominated by sorting the input vector and is thus quasilinear.

\subsection{Optimized Variant}
\label{sect:projfunc_optimized_variant}
Because of the permutation-invariance of the sets involved in the projections, it is enough to sort the vector that is to be projected onto $D$ once.
This guarantees that the working vector that emerges from subsequent projections is sorted also.
No additional sorting has then to be carried out when using Algorithm~\ref{alg:projsplx_general} for projections onto $C$.
This additionally has the side effect that the non-vanishing entries of the working vector are always concentrated in its first entries.
Hence all relevant information can always be stored in a small unit-stride array, to which access is more efficient than to a large sparse array.
Further, the index set $I$ of non-vanishing entries in the working vector is always of the form $I = \discint{1}{d}$, where $d$ is the number of nonzero entries.

Algorithm~\ref{alg:projfunc_explicit} is a variant of Algorithm~\ref{alg:projfunc} where these optimizations were applied, and where the explicit formulas for the intermediate projections were used.
The following result, which is proved in Appendix~\ref{sect:projfuncproof}, states that both algorithms always compute the same result:
\begin{theorem}
\label{thm:projfunc_improved}
Algorithm~\ref{alg:projfunc} is equivalent to Algorithm~\ref{alg:projfunc_explicit}.
\end{theorem}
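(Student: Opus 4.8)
The plan is to recognize Algorithm~\ref{alg:projfunc_explicit} as nothing but Algorithm~\ref{alg:projfunc} with three modifications applied, none of which alters the computed result: the abstract projection operators are replaced by the closed-form expressions gathered in the listing; the argument is sorted once at the outset and the sorting is undone at the very end; and the working vector is stored as a compact array of length $d$ rather than as a sparse vector in $\R^n$. I would establish the equivalence by showing that each of these three modifications is faithful, treating them one at a time and then knitting them together by an induction over the iterations of the while-loop.

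First I would confirm that every line of the explicit algorithm realizes the corresponding line of Algorithm~\ref{alg:projfunc}. The update in line~\ref{algl:projfunc_explicit_projH} is exactly $\proj_H$ by Lemma~\ref{lem:plane}. The procedure {\tt proj\_L} computes $\proj_L$, respectively $\proj_{L_I}$: in the generic case $\varphi = \norm{y - m_I}_2^2 > 0$ the point is shifted to the barycenter $m_I$ and rescaled by the positive factor $\sqrt{\nicefrac{\rho}{\varphi}}$, which is the unique best approximation by Lemma~\ref{lem:sphere}; in the degenerate case $\varphi = 0$ the argument coincides with $m_I$, whose projection onto $L_I$ is the entire hypercircle, and the algorithm merely selects one admissible element, as justified by Remark~\ref{rem:projmontoL}. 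Finally, Algorithm~\ref{alg:projsplx_general} together with the subtraction $\tdvect{y_1}{y_d} := \tdvect{y_1}{y_d} - \hat{t}$ returns $\max\left(y - \hat{t}\cdot e,\ 0\right)$ on its support, which is $\proj_C$ by Proposition~\ref{prop:projsplx}. Since the degenerate $\varphi = 0$ cases are precisely the barycenter situations $r = m$ and $r = m_I$ excluded in the uniqueness statement of Theorem~\ref{thm:projfunc}, any admissible choice made there corresponds to a valid run of Algorithm~\ref{alg:projfunc}, so the two algorithms agree exactly whenever the result is unique.

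Next I would justify sorting once. All of the sets $H$, $L$, $C$ and $L_I$ with $I$ of the form $\discint{1}{d}$ are permutation-invariant, so projecting the sorted vector $P_\tau x$ and applying $P_\tau^{-1}$ at the end yields an element of the original projection; this is exactly the reduction performed in the closing loop of Algorithm~\ref{alg:projfunc_explicit}. The crucial consequence of permutation-invariance is order-preservation, Lemma~\ref{lem:proj_props}\ref{lem:proj_props_a}: if the working vector enters a projection sorted in descending order, it leaves it sorted as well. This can be checked directly against the closed forms just verified, since the hyperplane step adds a constant to every coordinate, and {\tt proj\_L} applies the increasing affine map $t \mapsto \nicefrac{\lambda_1}{d} + \sqrt{\nicefrac{\rho}{\varphi}}\,\bigl(t - \nicefrac{\lambda_1}{d}\bigr)$ to each active coordinate; both manifestly preserve order, and the sorted point chosen in the $\varphi = 0$ branch is sorted by inspection. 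Hence the working vector remains sorted throughout.

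The main obstacle, and the step I would spend the most care on, is showing that the set of active coordinates is always a prefix $\discint{1}{d}$, so that the compact-array representation of Algorithm~\ref{alg:projfunc_explicit} faithfully mirrors the sparse vector and the index set $I$ of line~\ref{algl:projLI} in Algorithm~\ref{alg:projfunc}. Here the sortedness invariant pays off: because the working vector is sorted in descending order, the simplex projection $\max\left(y - \hat{t}\cdot e,\ 0\right)$ can only annihilate a suffix of coordinates, namely those with $y_i \leq \hat{t}$, which by construction of Algorithm~\ref{alg:projsplx_general} are exactly the indices beyond $d$. Consequently the surviving coordinates form the prefix $\discint{1}{d}$, the ensuing {\tt proj\_L} call operates on precisely these entries while leaving the zeroed tail untouched, and the index set of line~\ref{algl:projLI} is indeed $I = \discint{1}{d}$. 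Since the tail of the reconstructed vector $s$ is zero, the stopping conditions $s \not\in \R_{\geq 0}^n$ and $\tdvect{y_1}{y_d} \not\in \R_{\geq 0}^d$ coincide, so an induction over the while-loop shows that the two algorithms maintain identical working vectors at every step, up to the fixed permutation $\tau$, and terminate simultaneously with the same output.
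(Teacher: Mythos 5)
Your proposal is correct and follows essentially the same route as the paper's own proof: sort once, invoke permutation-invariance and order-preservation (Lemma~\ref{lem:proj_props}) to keep the working vector sorted, observe that the simplex projection therefore only annihilates a suffix so the active index set is always the prefix $\discint{1}{d}$, and conclude that the compact-array loop mirrors the loop of Algorithm~\ref{alg:projfunc} with the permutation undone at the end. Your additional care with the degenerate $\varphi = 0$ case and the explicit induction over the while-loop only makes explicit what the paper leaves implicit.
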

Projections onto $C$ increase the amount of vanishing entries in the working vector, which is of finite dimension $n$.
Hence, at most $n$ alternating projections are carried out, and the algorithm terminates in finite time.
Further, the complexity of each iteration is at most linear in the $L_0$ pseudo-norm of the working vector.
The theoretic overall computational complexity is thus at most quadratic in problem dimensionality $n$.

\subsection{Comparison with Hoyer's Original Algorithm}
\label{sect:comp_hoyer}
The original algorithm for the sparseness-enforcing projection operator proposed by \cite{Hoyer2004} is hard to understand, and correctness has been proved by \cite{Theis2005} in a special case only.
A simple alternative has been proposed with Algorithm~\ref{alg:projfunc} in this paper.
Based on the symmetries induced by Hoyer's sparseness measure $\sigma$ and by exploiting the projection onto a simplex, an improved method was given in Algorithm~\ref{alg:projfunc_explicit}.

The improved algorithm proposed in this paper always requires at most the same number of iterations of alternating projections as the original algorithm.
The original algorithm uses a projection onto the non-negative orthant $\R_{\geq 0}^n$ to achieve vanishing coordinates in the working vector.
This operation can be written as $\proj_{\R_{\geq 0}^n}(x) = \max(x,\ 0)$.
In the improved algorithm, a simplex projection is used for this purpose, expressed formally as $\proj_C(x) = \max\left(x - \hat{t}\cdot e,\ 0\right)$ with $\hat{t}\in\R$ chosen accordingly.
Due to the theoretical results on simplex geometry from Section~\ref{sect:splx_geom} and their application in Lemma~\ref{lem:splx} in Section~\ref{sect:selfsim_rec}, the number $\hat{t}$ is always non-negative.
Therefore, at least the same amount of entries is set to zero in the simplex projection compared to the projection onto the non-negative orthant, see also Corollary~\ref{cor:splx}.
Hence with induction for the number of non-vanishing entries in the working vector, the number of iterations the proposed algorithm needs to terminate is bounded by the number of iterations the original method needs to terminate given the same input.

The experimental determination of an estimate of the number of iterations required was carried out as follows.
Random vectors with sparseness $0.15$ were sampled and their sparse projections were computed using the respective algorithms, to gain the best normalized approximations with a target sparseness degree of $\sigma^* := 0.90$.
For both algorithms the very same vectors were used as input.
During the run-time of the algorithms, the number of iterations that were necessary to compute the result were counted.
Additionally, the number of nonzero entries in the working vector was recorded in each iteration.
This was done for different dimensionalities, and for each dimensionality $1000$~vectors were sampled.

\begin{figure}[t]
  \centering
  \includegraphics[page=1]{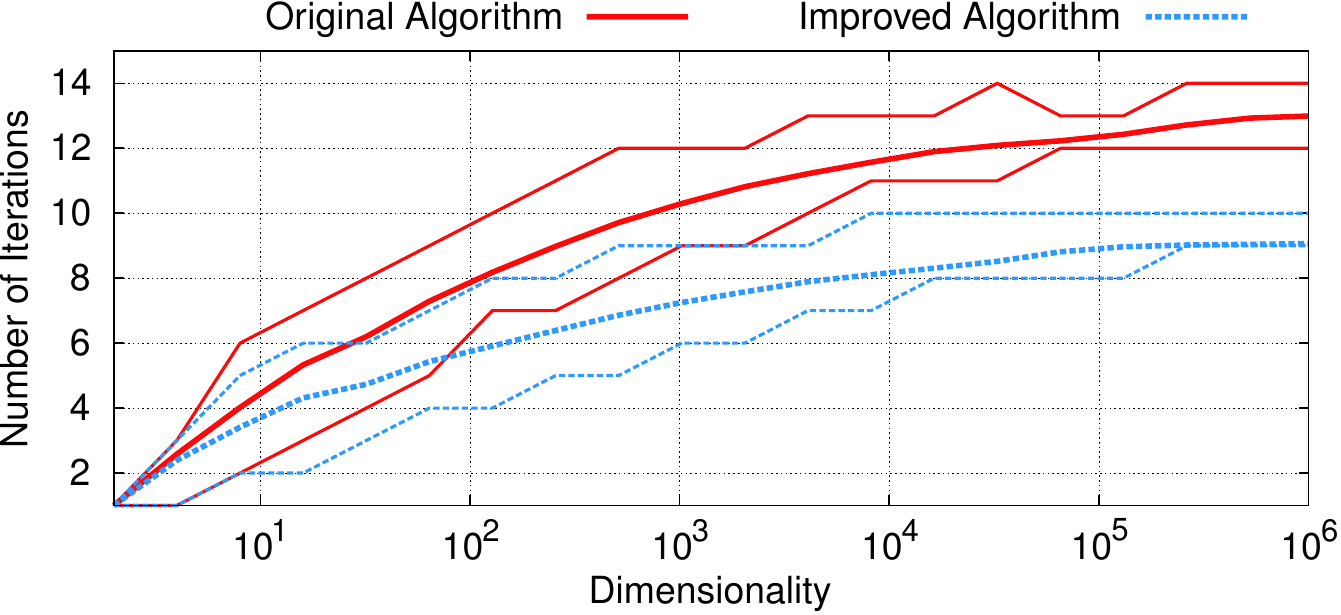}
  \caption{Comparison of the number of iterations of the original algorithm for the projection onto $D$ with the improved version as proposed in this paper. The sparseness-enforcing projection with target sparseness $0.90$ was carried out for input vectors of sparseness $0.15$. The thick lines indicate the mean number of iterations required for the projection, and the thin lines indicate the minimum and maximum number of iterations, respectively. Even for input vectors with a million entries, less than $14$~iterations are required to find the projection. With the improved algorithm, this reduces to at most $10$~iterations.}
  \label{fig:alg_iterations}
\end{figure}
\begin{figure}[p]
  \centering
  \includegraphics[page=2]{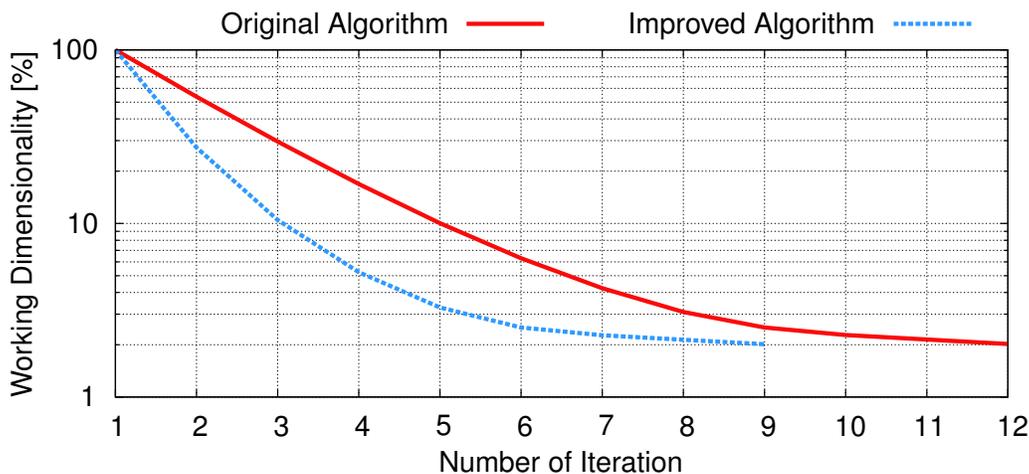}
  \caption{Comparison of the number of non-vanishing entries in the working vectors of the original algorithm and the improved algorithm during run-time. The algorithms were run with input vectors of dimensionality $1000$ and initial sparseness $0.15$ to compute projections with sparseness $0.90$. Standard deviations were always less than $1\%$; they were omitted in the plot to avoid clutter. The algorithm proposed in this paper reduces dimensionality more quickly and terminates earlier than the original algorithm.}
  \label{fig:alg_iteration_dims}
\end{figure}
\begin{figure}[p]
  \centering
  \includegraphics[page=3]{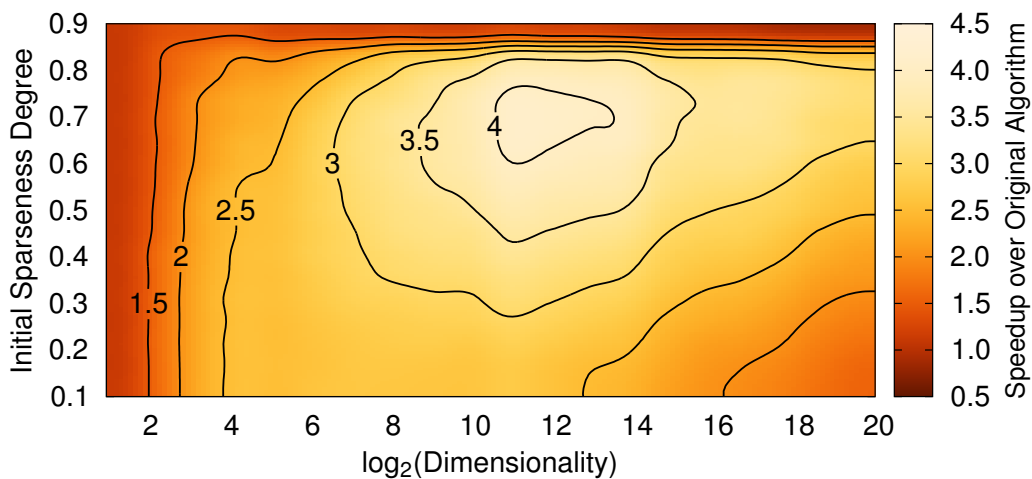}
  \caption{Ratio of the computation time of the original algorithm and the improved algorithm for a variety of input dimensionality and initial vector sparseness. Numbers greater than one indicate parameterizations where the proposed algorithm is more efficient than the original one. There is a large region where the speedup is decent.}
  \label{fig:alg_timings}
\end{figure}

Figure~\ref{fig:alg_iterations} shows statistics on the number of iterations the algorithms needed to terminate.
As was already observed by \cite{Hoyer2004}, the number of required iterations grows very slowly with problem dimensionality.
For $n = 10^6$, only between~$12$ and $14$~iterations were needed with the original algorithm to compute the result.
With Algorithm~\ref{alg:projfunc_explicit}, this can be improved to requiring~$9$ to~$10$ iterations, which amounts to roughly~$30\%$ less iterations.
Due to the small slope in the number of required iterations, it can be conjectured that this quantity is at most logarithmic in problem dimensionality $n$.
If this applies, the complexity of Algorithm~\ref{alg:projfunc_explicit} is at most quasilinear.
Because the input vector is sorted in the beginning, it is also not possible to fall below this complexity class.

The progress of working dimensionality reduction for problem dimensionality $n = 1000$ is depicted in Figure~\ref{fig:alg_iteration_dims}, averaged over the $1000$~input vectors from the experiment.
After the first iteration, that is after projecting onto $H$ and $L$, the working dimensionality still matches the input dimensionality.
Starting with the second iteration, dimensions are discarded by projecting onto $\R_{\geq 0}^n$ in the original algorithm and onto $C$ in the improved variant, which yields vanishing entries in the working vectors.
With the original algorithm, in the mean $54\%$ of all entries are nonzero after the second iteration, while with the improved algorithm only $27\%$ of the original $1000$~dimensions remain in the mean.
This trend continues in subsequent iterations such that the final working dimensionality is reached more quickly with the algorithm proposed in this paper.
Although using Algorithm~\ref{alg:projsplx_general} to perform the simplex projection is more expensive than just setting negative entries to zero in the orthant projection, the overhead quickly amortizes because of the boost in dimensionality reduction.

For determination of the relative speedup incorporated with both the simplex projection and the access to unit-stride arrays due to the permutation-invariance, both algorithms were implemented as C++~programs using an optimized implementation of the BLAS~library for carrying out the vector operations.
The employed processor was an Intel Core i7-990X.
For a range of different dimensionalities, a set of vectors with varying initial sparseness were sampled.
The number of the vectors for every pair of dimensionality and initial sparseness was chosen such that the processing time of the algorithms was several orders of magnitudes greater than the latency time of the operation system.
Then the absolute time needed for the algorithms to compute the projections with a target sparseness of~$0.90$ were measured, and their ratio was taken to compute the relative speedup.
The results of this experiment are depicted in Figure~\ref{fig:alg_timings}.
It is evident that the maximum speedup is achieved for vectors with a dimensionality between $2^9$ and $2^{15}$, and an initial sparseness greater than $0.40$.
For low initial sparseness, as is achieved by randomly sampled vectors, a speedup of about $2.5$ can be achieved for a broad spectrum of dimensionality between $2^4$ and $2^{13}$.

The improvements to the original algorithm are thus not only theoretical, but also noticeable in practice.
The speedup is especially useful when the projection is used as a neuronal transfer function in a classifier as proposed in Section~\ref{sect:soae}, because then the computational complexity of the prediction of class membership of unknown samples can be reduced.

\subsection{Function Definition and Differentiability}
\label{sect:projfunc_differentiability}
It is clear from Theorem~\ref{thm:projfunc} that the projection onto $D$ is unique almost everywhere.
Therefore the set $R := \Set{x\in\R^n | \abs{\proj_D(x)} \neq 1}$ is a null set.
However, $R\neq\emptyset$ as for example the projection is not unique for vectors where all entries are identical.
In other words, for $x := \xi e\in\R^n$ for some $\xi\in\R$ follows $\proj_H(x) = m$ and $\proj_L(m) = L$.
If $n = 2$ a possible solution is given by $\left(\alpha, \beta\right)\transp\in\proj_D(x)$ with $\alpha$ and $\beta$ given as stated in Remark~\ref{rem:projmontoL}, as in this case $\alpha$ and $\beta$ are positive.
Additionally, another solution is given by $\left(\beta, \alpha\right)\transp\in\proj_D(x)$ which is unequal to the other solution because of $\alpha\neq\beta$.
A similar argument can be used to show non-uniqueness for all $n\geq 2$.
As $R$ is merely a small set, non-uniqueness is not an issue in practical applications.

The sparseness-enforcing projection operator that is restricted to non-negative solutions can thus be cast almost everywhere as a function
\begin{displaymath}
  \pi_{\geq 0}\colon\R^n\setminus R\to D\text{,}\qquad x\mapsto\proj_D(x)\text{.}
\end{displaymath}
Exploiting reflection-invariance implies that the unrestricted variant of the projection
\begin{displaymath}
  \pi\colon\R^n\setminus R\to S^{(\lambda_1,\lambda_2)}\text{,}\qquad x\mapsto s\hada\pi_{\geq 0}\left(\abs{x}\right)\text{,}
\end{displaymath}
is well-defined, where $s\in\set{\pm 1}^n$ is given as described in Lemma~\ref{lem:proj_nonneg_sols}.
Note that computation of $\pi_{\geq 0}$ is a crucial prerequisite to computation of the unrestricted variant $\pi$.
It will be used exclusively in Section~\ref{sect:soae} because non-negativity is not necessary in the application proposed there.

If $\pi$ or $\pi_{\geq 0}$ is employed in an objective function that is to be optimized, the information whether these functions are differentiable is crucial for selecting an optimization strategy.
As an example, consider once more projections onto $Z := \set{x\in\R^n | \norm{x}_0 = \kappa}$ where $\kappa\in\N$ is a constant.
It was already mentioned in Section~\ref{sect:projalgorithms} that the projection onto $Z$ consists simply of zeroing out the elements that are smallest in absolute value.
Let $x\in\R^n$ be a point and let $\tau\in S_n$ be a permutation such that $\abs{x_{\tau(1)}} \geq \dots \geq \abs{x_{\tau(n)}}$.
Clearly, if $\abs{x_{\tau(\kappa)}} \neq \abs{x_{\tau(\kappa + 1)}}$ then $\proj_Z(x) = y$ where $y_i = x_i$ for $i\in\discint{\tau(1)}{\tau(\kappa)}$ and $y_i = 0$ for $i\in\discint{\tau(\kappa + 1)}{\tau(n)}$.
Moreover, when $\abs{x_{\tau(\kappa)}} \neq \abs{x_{\tau(\kappa + 1)}}$ then there exists a neighborhood $U$ of $x$ such that $\proj_Z(s) = \sum_{i=1}^\kappa s_{\tau(i)}e_{\tau(i)}$ for all $s\in U$.
With this closed-form expression, $s\mapsto\proj_Z(s)$ is differentiable in $x$ with gradient $\nicefrac{\partial\proj_Z(x)}{\partial x} = \diag\left(\sum_{i=1}^\kappa e_{\tau(i)}\right)$, that is the identity matrix where the entries on the diagonal belonging to small absolute values of $x$ have been zeroed out.
If the requirement on $x$ is not fulfilled, then a small distortion of $x$ is sufficient to find a point in which the projection onto $Z$ is differentiable.

In contrast to the $L_0$ projection, differentiability of $\pi$ and $\pi_{\geq 0}$ is non-trivial.
A full-length discussion is given in Appendix~\ref{sect:analytical_properties}, and concludes that both $\pi$ and $\pi_{\geq 0}$ are differentiable almost everywhere.
It is more efficient when only the product of the gradient with an arbitrary vector needs to be computed, see Corollary~\ref{cor:projfuncgraddgemv}.
Such an expression emerges in a natural way by application of the chain rule to an objective function where the sparseness-enforcing projection is used.
In practice this weaker form is thus mostly no restriction and preferable for efficiency reasons over the more general complete gradient as given in Theorem~\ref{thm:projfuncblockgrad}.

The derivative of $\pi_{\geq 0}$ is obtained by exploiting the structure of Algorithm~\ref{alg:projfunc}.
Because the projection onto $D$ is essentially a composition of projections onto $H$, $C$, $L$ and $L_I$, the overall gradient can be computed using the chain rule.
The gradients of the intermediate projections are simple expressions and can be combined to yield one matrix for each iteration of alternating projections.
Since these iteration gradients are basically sums of dyadic products, their product with an arbitrary vector can be computed by primitive vector operations.
With matrix product associativity, this process can be repeated to efficiently compute the product of the gradient of $\pi_{\geq 0}$ with an arbitrary vector.
For this, it is sufficient to record some intermediate quantities during execution of Algorithm~\ref{alg:projfunc_explicit}, which does not add any major overhead to the algorithm itself.
The gradient of the unrestricted variant $\pi$ can be deduced in a straightforward way from the gradient of $\pi_{\geq 0}$ because of their close relationship.

\section{Sparse Activity and Sparse Connectivity in Supervised Learning}
\label{sect:soae}
The sparseness-enforcing projection operator can be cast almost everywhere as vector-valued function $\pi$, which is differentiable almost everywhere, see Section~\ref{sect:projfunc_differentiability}.
This section proposes a hybrid of an auto-encoder network and a two-layer neural network, where the sparseness projection is employed as a neuronal transfer function.
The proposed model is called supervised online auto-encoder~(SOAE) and is intended for classification by means of a neural network that features sparse activity and sparse connectivity.
Because of the analytical properties of the sparseness-enforcing projection operator, the model can be optimized end-to-end using gradient-based methods.

\subsection{Architecture}
Figure~\ref{fig:soae_architecture} depicts the data flow in the proposed model.
There is one module for reconstruction capabilities and one module for classification capabilities.
The \emph{reconstruction module}, depicted on the left of Figure~\ref{fig:soae_architecture}, operates by converting an input sample $x\in\R^d$ into an \emph{internal representation} $h\in\R^n$, and then computing an approximation $\tilde{x}\in\R^d$ to the original input sample.
In doing so, the product $u\in\R^n$ of the input sample with a \emph{matrix of bases} $W\in\R^{d\times n}$ is computed, and a transfer function $f\colon\R^n\to\R^n$ is applied.
For sparse activity, $f$ can be chosen to be the sparseness-enforcing projection operator $\pi$ or the projection with respect to the $L_0$ pseudo-norm.
This guarantees that the internal representation is sparsely populated and close to $u$.
The reconstruction is achieved like in a linear generative model, by multiplication of the matrix of bases with the internal representation.
Hence the same matrix $W$ is used for both encoding and decoding, rendering the reconstruction module symmetric, or in other words with tied weights.
This approach is similar to principal component analysis \citep{Hotelling1933}, restricted Boltzmann machines for deep auto-encoder networks \citep{Hinton2006} and to sparse encoding symmetric machine \citep{Ranzato2008}.

By enforcing $W$ to be sparsely populated, the sparse connectivity property holds as well.
More formally, the aim is that $\sigma(We_i) = \sigma_W$ holds for all $i\inint{1}{n}$, where $\sigma_W\in\intervaloo{0}{1}$ is the target degree of connectivity sparseness and $We_i$ is the $i$-th column of $W$.
This condition was adopted from non-negative matrix factorization with sparseness constraints \citep{Hoyer2004}.
In the context of neural networks, the synaptic weights of individual neurons are stored in the columns of the weight matrix $W$.
The interpretation of this formal sparseness constraint is then that each neuron is only allowed to be sparsely connected with the input layer.

\begin{figure}[t]
  \centering
  \includegraphics[page=4]{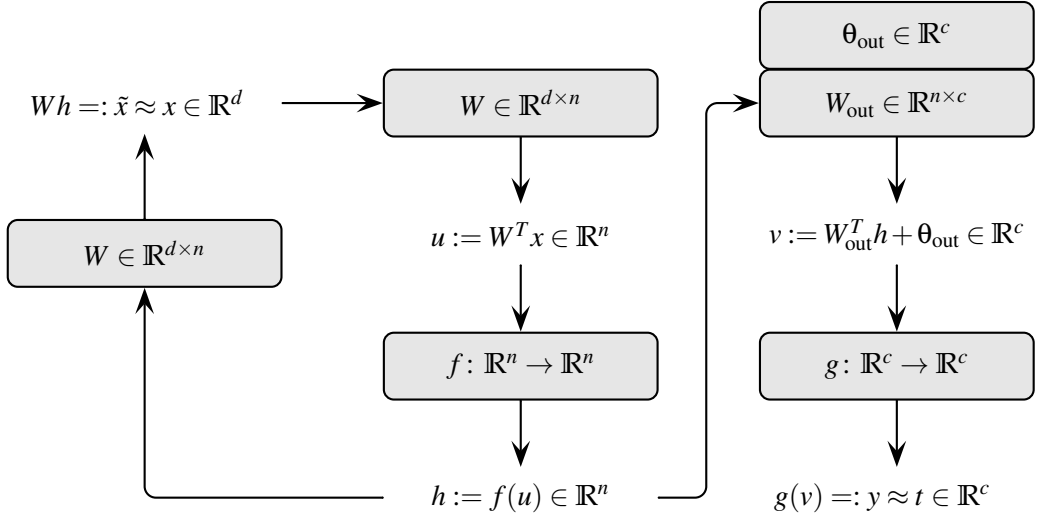}
  \caption{Architecture and data flow of supervised online auto-encoder~(SOAE). The circle on the left, mapping from an input sample $x$ to its approximation $\tilde{x}$ comprises the reconstruction module. The classification module consists of the mapping from $x$ to the classification decision $y$. The matrix of bases $W$ shall be sparsely populated to account for the sparse connectivity property. If the transfer function $f$ is set to the sparseness projection, the internal representation $h$ will be sparsely populated, fulfilling the sparse activity property.}
  \label{fig:soae_architecture}
\end{figure}

The \emph{classification module} is shown on the right-hand side of Figure~\ref{fig:soae_architecture}.
It computes a \emph{classification decision} $y\in\R^c$ by feeding $h$ through a one-layer neural network.
The network output $y$ is yielded through computation of the product with a matrix of weights $W_{\out}\in\R^{n\times c}$, addition of a threshold vector $\theta_{\out}\in\R^c$ and application of a transfer function $g\colon\R^c\to\R^c$.
This module shares the inference of the internal representation with the reconstruction module, which can also be considered a one-layer neural network.
Therefore the entire processing path from $x$ to $y$ forms a two-layer neural network \citep{Rumelhart1986}, where $W$ stores the synaptic weights of the hidden layer, and $W_{\out}$ and $\theta_{\out}$ are the parameters of the output layer.

The input sample $x$ shall be approximated by $\tilde{x}$, and the target vector for classification $t\in\R^c$ shall be approximated by $y$.
This is achieved by optimization of the parameters of SOAE, that is the quantities $W$, $W_{\out}$ and $\theta_{\out}$.
The goodness of the approximation $x\approx\tilde{x}$ is estimated using a differentiable similarity measure $s_R\colon\R^d\times\R^d\to\R$, and the approximation $y\approx t$ is assessed by another similarity measure $s_C\colon\R^c\times\R^c\to\R$.
For minimizing the deviation in both approximations, the objective function
\begin{displaymath}
  E_{\SOAE}\left(W,\ W_{\out},\ \theta_{\out}\right) := \left(1 - \alpha\right)\cdot s_R\left(\tilde{x},\ x\right) + \alpha\cdot s_C\left(y,\ t\right)
\end{displaymath}
shall be optimized, where $\alpha\in\intervalcc{0}{1}$ controls the trade-off between reconstruction and classification capabilities.
To incorporate sparse connectivity, feasible solutions are restricted to fulfill $\sigma(We_i) = \sigma_W$ for all $i\inint{1}{n}$.
If $\alpha = 0$, then SOAE is identical to a symmetric auto-encoder network with sparse activity and sparse connectivity.
In the case of $\alpha = 1$, SOAE forms a two-layer neural network for classification with a sparsely connected hidden layer and where the activity in the hidden layer is sparse.
The parameter $\alpha$ can also be used to blend continuously between these two extremes.
Note that $\tilde{x}$ only depends on $W$ but not on $W_{\out}$ or $\theta_{\out}$, but $y$ depends on $W$, $W_{\out}$ and $\theta_{\out}$.
Hence $W_{\out}$ and $\theta_{\out}$ are only relevant when $\alpha > 0$, whereas $W$ is essential for all choices of $\alpha$.

An appropriate choice for $s_R$ is the correlation coefficient \citep[see for example][]{Rodgers1988},
because it is normed to values in the interval $\intervalcc{-1}{1}$, invariant to affine-linear transformations, and differentiable.
If $f$ is set to $\pi$, then a model that is invariant to the concrete scaling and shifting of the occurring quantities can be yielded.
This follows because $\pi$ is also invariant to such transformations, see Corollary~\ref{cor:affine_invariance_projection}.
The similarity measure for classification capabilities $s_C$ is chosen to be the cross-entropy error function \citep{Bishop1995}, which was shown empirically by \citet{Simard2003} to induce better classification capabilities than the mean squared error function.
The softmax transfer function \citep{Bishop1995} is used as transfer function $g$ of the output layer.
It provides a natural pairing together with the cross-entropy error function \citep{Dunne1997} and supports multi-class classification.

\subsection{Learning Algorithm}
\label{sect:soae_learning_algorithm}
The proposed optimization algorithm for minimization of the objective function $E_{\SOAE}$ is projected gradient descent \citep{Bertsekas1999}.
Here, each update to the degrees of freedom is followed by application of the sparseness projection to the columns of $W$ to enforce sparse connectivity.
There are theoretical results on the convergence of projected gradient methods when projections are carried out onto convex sets \citep{Bertsekas1999}, but here the target set for projection is non-convex.
Nevertheless, the experiments described below show that projected gradient descent is an adequate heuristic in the situation of the SOAE~framework to tune the network parameters.
For completeness, the gradients of $E_{\SOAE}$ with respect to the network parameters are given in Appendix~\ref{sect:soae_gradients}.
Update steps are carried out after every presentation of a pair of an input sample and associated target vector.
This online learning procedure results in faster learning and improves generalization capabilities over batch learning \citep{Wilson2003,Bottou2004}.

A learning set with samples from $\R^d$ and associated target vectors from $\set{0,1}^c$ as one-of-$c$-codes is input to the algorithm.
The dimensionality of the internal representation $n$ and the target degree of sparseness with respect to the connectivity $\sigma_W\in\intervaloo{0}{1}$ are parameters of the algorithm.
Sparseness of connectivity increases for larger $\sigma_W$, as Hoyer's sparseness measure is employed in the definition of the set of feasible solutions.

Two possible choices for the hidden layer's transfer function $f$ to achieve sparse activity were discussed in this paper.
One possibility is to carry out the projection with respect to the $L_0$ pseudo-norm.
The more sophisticated method is to use the unrestricted sparseness-enforcing projection operator $\pi$ with respect to Hoyer's sparseness measure $\sigma$, which can be carried out by Algorithm~\ref{alg:projfunc_explicit}.
In both cases, a target degree for sparse activity is a parameter of the learning algorithm.
In case of the $L_0$ projection, this sparseness degree is denoted by $\kappa\inint{1}{n}$, and sparseness increases with smaller values of it.
For the $\sigma$ projection, $\sigma_H\in\intervaloo{0}{1}$ is used, where larger values indicate more sparse activity.

Initialization of the columns of $W$ is achieved by selecting a random subset of the learning set, similar to the initialization of radial basis function networks \citep{Bishop1995}.
This ensures significant activity of the hidden layer from the very start, resulting in strong gradients and therefore reducing training time.
The parameters of the output layer, that is $W_{\out}$ and $\theta_{\out}$, are initialized by sampling from a zero-mean Gaussian distribution with a standard deviation of $\nicefrac{1}{100}$.

In every epoch, a randomly selected subset of samples and associated target vectors from the learning set is used for stochastic gradient descent to update $W$, $W_{\out}$ and $\theta_{\out}$.
The results from Appendix~\ref{sect:soae_gradients} can be used to efficiently compute the gradient of the objective function.
There, the gradient for the transfer function $f$ only emerges as a product with a vector.
The gradient for the $L_0$ projection is trivial and was given as an example in Section~\ref{sect:projfunc_differentiability}.
If $f$ is Hoyer's sparseness-enforcing projection operator, it is possible to exploit that only the product of the gradient with a vector is needed.
In this case, it is more efficient to compute the result of the multiplication implicitly using Corollary~\ref{cor:projfuncgraddgemv} and thus avoid the computation of the entire gradient of $\pi$.

After every epoch, a sparseness projection is applied to the columns of $W$.
This guarantees that $\sigma(We_i) = \sigma_W$ holds for all $i\inint{1}{n}$, and therefore the sparse connectivity property is fulfilled.
The trade-off variable $\alpha$ which controls the weight of the reconstruction and the classification term is adjusted according to $\alpha(\nu) := 1 - \exp\left(-\nicefrac{\nu}{100}\right)$, where $\nu\in\N$ denotes the number of the current epoch.
Thus $\alpha$ starts at zero, increases slowly and asymptotically reaches one.
The emphasis at the beginning of the optimization is thus on reconstruction capabilities.
Subsequently, classification capabilities are incorporated slowly, and in the final phase of training classification capabilities exclusively are optimized.
This continuous variant of unsupervised pre-training \citep{Hinton2006} leads to parameters in the vicinity of a good minimizer for classification capabilities before classification is preferred over reconstruction through the trade-off parameter $\alpha$.
Compared to the choice $\alpha\equiv 1$ this strategy helps to stabilize the trajectory in parameter space and makes the objective function values settle down more quickly, such that the termination criterion is satisfied earlier.

\subsection{Description of Experiments}
\label{sect:description_experiments}
To assess the classification capabilities and the impact of sparse activity and sparse connectivity, the MNIST database of handwritten digits \citep{MNIST} was employed.
It is a popular benchmark data set for classification algorithms, and numerous results with respect to this data set are reported in the literature.
The database consists of $70\;000$ samples, divided into a learning set of $60\;000$ samples and an evaluation set of $10\;000$ samples.
Each sample represents a digit of size $28\times 28$ pixels and has a class label from $\discint{0}{9}$ associated with it.
Therefore the input and output dimensionalities are $d := 28^2 = 784$ and $c := 10$, respectively.
The classification error is given in percent of all $10\;000$ evaluation samples, hence $0.01\%$ corresponds to a single misclassified digit.

For generation of the original data set, the placement of the digits has been achieved based on their barycenter \citep{MNIST}.
Because of sampling and rounding errors, the localization uncertainty can hence be assumed to be less than one pixel in both directions.
To account for this uncertainty, the learning set was augmented by jittering each sample in each of eight possible directions by one pixel, yielding $540\;000$ samples for learning in total.
The evaluation set was left unchanged to yield results that can be compared to the literature.
As noted by \citet{Hinton2006}, the learning problem is no more permutation-invariant due to the jittering, as information on the neighborhood of the pixels is implicitly incorporated in the learning set.

However, classification results improve dramatically when such prior knowledge is used.
This was demonstrated by \citet{Schoelkopf1997} using the virtual support vector method, which improved a support vector machine with polynomial kernel of degree five from an error of $1.4\%$ to $1.0\%$ by jittering the support vectors by one pixel in four principal directions.
This result was extended by \citet{DeCoste2002}, where a support vector machine with a polynomial kernel of degree nine was improved from an error of $1.22\%$ to $0.68\%$ by jittering in all possible eight directions.
Further improvements can be achieved by generating artificial training samples using elastic distortions \citep{Simard2003}.
This reduced the error of a two-layer neural network with $800$~hidden units to $0.7\%$, compared to the $1.1\%$ error yielded when training on samples created by affine distortions.
Very big and very deep neural networks possess a large number of adaptable weights.
In conjunction with elastic and affine distortions such neural networks can yield errors as low as $0.35\%$ \citep{Ciresan2010a}.
The current record error of $0.23\%$ is held by an approach that combines distorted samples with a committee of convolutional neural networks \citep{Ciresan2012a}.
This is an architecture that has been optimized exclusively for input data that represents images, that is where the neighborhood of the pixels is hard-wired in the classifier.
To allow for a plain evaluation that does not depend on additional parameters for creating artificial samples, the jittered learning set with $540\;000$ samples is used throughout this paper.

The experimental methodology was as follows.
The number of hidden units was chosen to be $n := 1000$ in all experiments that are described below.
This is an increased number compared to the $800$~hidden units employed by \citet{Simard2003}, but promises to yield better results when an adequate number of learning samples is used.
As all tested learning algorithms are essentially gradient descent methods, an initial step size had to be chosen.
For each candidate step size, five runs of a two-fold cross validation were carried out on the learning set.
Then, for each step size the median of the ten resulting classification errors was computed.
The winning step size was then determined to be the one that achieved a minimum median of classification errors.

In every epoch, $21\;600$ samples were randomly chosen from the learning set and presented to the network.
This number of samples was chosen as it is $\nicefrac{1}{25}$-th of the jittered learning set.
The step size was multiplicatively annealed using a factor of $0.999$ after every epoch.
Optimization was terminated once the relative change in the objective function became very small and no more significant progress on the learning set could be observed.
The resulting classifiers were then applied to the evaluation set, and misclassifications were counted.

\subsection{Experimental Results}
\label{sect:experimental_results}
Two variants of the supervised online auto-encoder architecture as proposed in this section were trained on the augmented learning set.
In both variants, the target degree of sparse connectivity was set to $\sigma_W := 0.75$.
This choice was made because $96\%$~of all samples in the learning set possess a sparseness which is less than $0.75$.
Therefore, the resulting bases are forced to be truly sparsely connected compared to the sparseness of the digits.

The first variant is denoted by SOAE-$\sigma$.
Here, the sparseness-enforcing projection operator $\pi$ was used as transfer function $f$ in the hidden layer.
Target degrees of sparse activity $\sigma_H$ with respect to Hoyer's sparseness measure $\sigma$ were chosen from the interval $\intervalcc{0.20}{0.95}$ in steps of size $0.05$.
This variant was then trained on the jittered learning set using the method described in Section~\ref{sect:soae_learning_algorithm}.
For every value of $\sigma_H$, the resulting sparseness of activity was measured after training using the $L_0$ pseudo-norm.
For this, each sample of the learning set was presented to the networks, and the number of active units in the hidden layer was counted.
Figure~\ref{fig:soae-sigma-l0-connection} shows the resulting mean value and standard deviation of sparse activity.
If $\sigma_H = 0.20$ is chosen, then in the mean about $800$~of the total $1000$~hidden units are active upon presentation of a sample from the learning set.
For $\sigma_H = 0.80$ only one hundred units are active at any one time, and for $\sigma_H = 0.95$ there are only eleven active units.
The standard deviation of the activity decreases when sparseness increases, hence the mapping from $\sigma_H$ to the resulting number of active units becomes more accurate.

\begin{figure}[p]
  \centering
  \includegraphics[page=5]{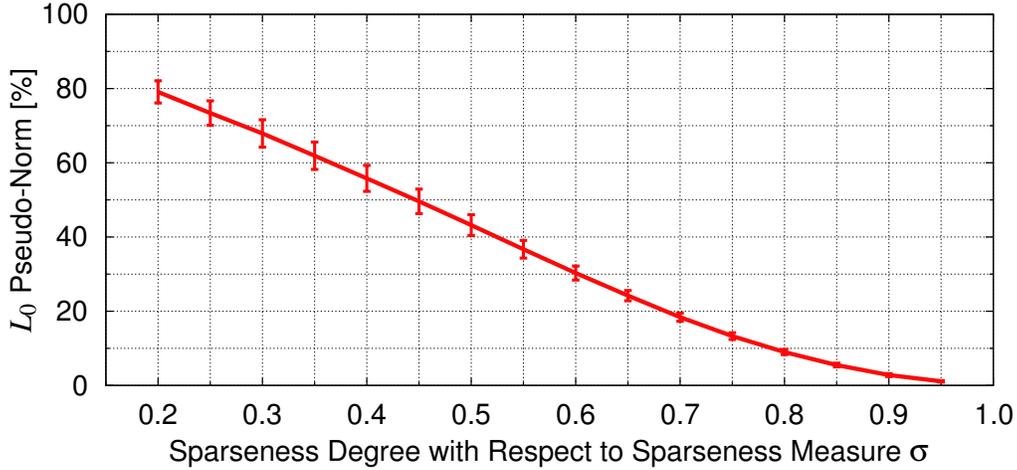}
  \caption{Resulting amount of nonzero entries in an internal representation $h$ with $1000$~entries, depending on the target degree of sparseness for activity $\sigma_H$ with respect to Hoyer's sparseness measure $\sigma$. For low values of $\sigma_H$, about $80\%$ of the entries are nonzero, whereas for very high sparseness degrees only $1\%$ of the entries do not vanish. The error bars indicate $\pm$~one standard deviation distance from the mean value. Standard deviation shrinks with increasing sparseness degree, making the mapping more accurate.}
  \label{fig:soae-sigma-l0-connection}
\end{figure}

The second variant, denoted SOAE-$L_0$, differs from SOAE-$\sigma$ in that the projection with respect to the $L_0$ pseudo-norm as transfer function $f$ was used.
The target sparseness of activity is given by a parameter $\kappa\inint{1}{n}$, which controls the exact number of units that are allowed to be active at any one time.
For the experiments, the values for $\kappa$ were chosen to match the mean activities from the SOAE-$\sigma$ experiments.
This way the results of both variants can be compared based on a unified value of activity sparseness.
The results are depicted in Figure~\ref{fig:soae-results-mnist}.
Usage of the $\sigma$ projection consequently outperforms the $L_0$ projection for all sparseness degrees.
Even for high sparseness of activity, that is when only about ten percent of the units are allowed to be active at any one time, good classification capabilities can be obtained with SOAE-$\sigma$.
For $\kappa\in\intervalcc{242}{558}$, the classification results of SOAE-$L_0$ reach an optimum.
SOAE-$\sigma$ is more robust, as classification capabilities first begin to collapse when sparseness is below $5\%$, whereas SOAE-$L_0$ starts to degenerate when sparseness falls below $20\%$.
For $\sigma_H\in\intervalcc{0.45}{0.85}$, roughly translating to between $5\%$ and $50\%$ activity, about equal classification performance is achieved using SOAE-$\sigma$.

\begin{figure}[p]
  \centering
  \includegraphics[page=6]{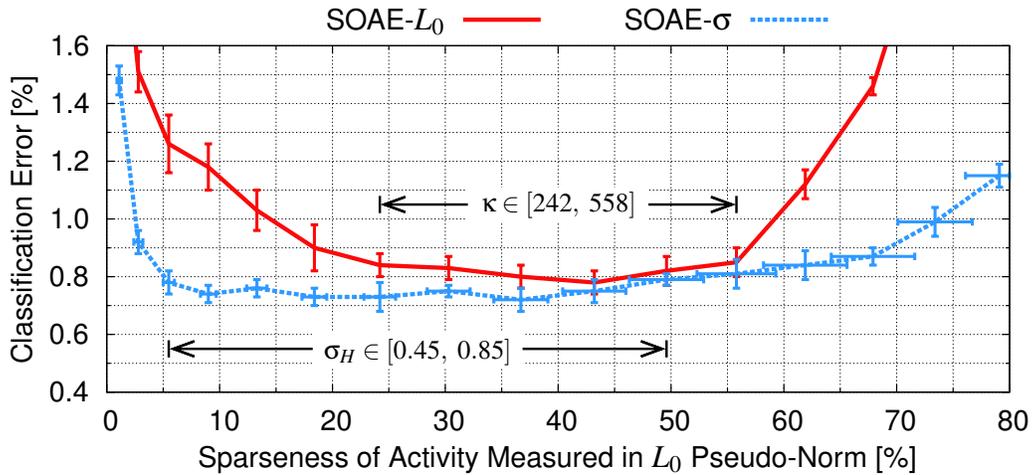}
  \caption{Resulting classification error on the MNIST evaluation set for the supervised online auto-encoder network, in dependence of sparseness of activity in the hidden layer. The projection onto an $L_0$ pseudo-norm constraint for variant SOAE-$L_0$ and the projection onto a constraint determined by sparseness measure $\sigma$ for variant SOAE-$\sigma$ were used as transfer functions. The error bars indicate $\pm$~one standard deviation difference from the mean.}
  \label{fig:soae-results-mnist}
\end{figure}

It can thus be concluded that using the sparseness-enforcing projection operator as described in this paper yields better results than when the simple $L_0$ projection is used to achieve sparse activity.
To assess the benefit more precisely and to investigate the effect of individual factors, several comparative experiments have been carried out.
A summary of these experiments and their outcome is given in Table~\ref{tbl:soae_comparatative_results}.
The variants SOAE-$\sigma$ and SOAE-$L_0$ denote the entirety of the respective experiments where sparseness of activity lies in the intervals described above, that is $\sigma_H\in\intervalcc{0.45}{0.85}$ and $\kappa\in\intervalcc{242}{558}$, respectively.
Using these intervals, SOAE-$\sigma$ and SOAE-$L_0$ achieved a median error of $0.75\%$ and $0.82\%$ on the evaluation set, respectively.
Variant SOAE-$\sigma$-conn is essentially equal to SOAE-$\sigma$, except for sparse connectivity not being incorporated.
Sparseness of activity here was also chosen to be $\sigma_H\in\intervalcc{0.45}{0.85}$, which resulted in about equal classification results over the entire range.
Dropping of sparse connectivity increases misclassifications, for the median error of SOAE-$\sigma$-conn is $0.81\%$ and thereby greater than the median error of SOAE-$\sigma$.

\begin{table}[t]
  \centering
  \renewcommand{\arraystretch}{1.075}
  \begin{tabular}{ccccc}
    \toprule
    Approach & Sparse & Sparse & Result $(W,\ p)$ of & Evaluation\\
             & Connectivity & Activity & Shapiro-Wilk Test & Error [\%]\\\midrule
    SOAE-$\sigma$ & $\sigma_W = 0.75$ & $\sigma_H\in\intervalcc{0.45}{0.85}$ & $(0.9802,\ 0.60)$ & $0.75\pm 0.04$\\
    SOAE-$L_0$ & $\sigma_W = 0.75$ & $\kappa\in\intervalcc{242}{558}$ & $(0.9786,\ 0.53)$ & $0.82\pm 0.05$\\
    SOAE-$\sigma$-conn & none & $\sigma_H\in\intervalcc{0.45}{0.85}$ & $(0.9747,\ 0.40)$ & $0.81\pm 0.04$\\
    SMLP-SCFC & $\sigma_W = 0.75$ & none & $(0.9770,\ 0.47)$ & $0.81\pm 0.05$\\
    MLP-OBD & $\gamma = 12.5\%$ & none & $(0.9807,\ 0.62)$ & $0.89\pm 0.04$\\
    MLP-random & none & none & $(0.9798,\ 0.58)$ & $0.88\pm 0.03$\\
    MLP-samples & none & none & $(0.9773,\ 0.49)$ & $0.91\pm 0.05$\\
    MLP-SCFC & none & none & $(0.9794,\ 0.57)$ & $0.91\pm 0.06$\\\bottomrule
  \end{tabular}
  \caption{Overview of comparative experiments. The second and third columns indicate whether sparse connectivity or sparse activity was incorporated, respectively. The fourth column reports the result of a statistical test for normality, which is interpreted in Section~\ref{sect:statistical_analysis}. The final column gives the median $\pm$ one standard deviation of the achieved classification error on the MNIST evaluation set. The results for each experiment were trimmed to gain a sample of size $47$, allowing for statistical robust estimates.}
  \label{tbl:soae_comparatative_results}
\end{table}

The other five approaches included in the comparison are multi-layer perceptrons~(MLPs) with the same topology and dynamics as the classification module of supervised online auto-encoder, with two exceptions.
First, the transfer function of the hidden layer $f$ was set to a hyperbolic tangent, thus not including explicit sparse activity.
Second, in all but one experiment sparse connectivity was either not incorporated, or achieved through other means than by performing a $\sigma$ projection after each learning epoch.
Besides the variation in sparseness of connectivity, the experiments differ in the initialization of the network parameters.

For each variant, $55$ runs were carried out and the resulting classifiers were applied to the evaluation set to compute the classification error.
Then, the best four and the worst four results were discarded and not included in further analysis.
Hence a random sample of size $47$ was achieved, where $15\%$ of the original data were trimmed away.
This procedure was also applied to the results of SOAE-$\sigma$, SOAE-$\sigma$-conn, and SOAE-$L_0$, to obtain a total of eight random samples of equal size for comparison with another.

The most basic variant, denoted the baseline in this discussion, is MLP-random, where all network parameters were initialized randomly.
This achieved a median error of $0.88\%$ on the evaluation set, being considerably worse than SOAE-$\sigma$.
For variant MLP-samples, the hidden layer was initialized by replication of $n$ randomly chosen samples from the learning set.
This did decrease the overall learning time.
However, the median classification error was slightly worse with $0.91\%$ compared to MLP-random.

For variant MLP-SCFC, the network parameters were initialized in an unsupervised manner using the sparse coding for fast classification~(SCFC) algorithm \citep{Thom2011d}.
This method is a precursor to the SOAE proposed in this paper.
It also features sparse connectivity and sparse activity but differs in some essential parts.
First, sparseness of activity is achieved through a latent variable that stores the optimal sparse code words of all samples simultaneously.
Using this matrix of code words, the activity of individual units was enforced to be sparse over time on the entire learning set.
SOAE achieves sparseness over space, as for each sample only a pre-defined fraction of units is allowed to be active at any one time.
A second difference is that sparse activity is achieved only indirectly by approximation of the latent matrix of code words with a feed-forward representation.
With SOAE, sparseness of activity is guaranteed by construction.
MLP-SCFC achieved a median classification error of $0.91\%$ on the MNIST evaluation set, rendering it slightly worse than MLP-random and equivalent to MLP-samples.

The first experiment that incorporates only sparse connectivity is SMLP-SCFC.
Initialization was done as for MLP-SCFC, but during training sparseness of connectivity was yielded by application of the sparseness-enforcing projection operator to the weights of the hidden layer after every learning epoch.
Hence the sparseness gained from unsupervised initialization was retained.
MLP-SCFC features sparse connectivity only after initialization, but loses this property when training proceeds.
With this slight modification, the median error of SMLP-SCFC decreases to $0.81\%$, which is significantly better than the baseline result.

The effect of better generalization due to sparse connectivity has also been observed by \citet{LeCun1990} in the context of convolutional neural networks.
It can be explained by the bias-variance decomposition of the generalization error \citep{Geman1992}.
When the effective number of the degrees of freedom is constrained, overfitting will be less likely and hence classifiers produce better results on average.
The same argument can be applied to SOAE-$\sigma$, where additional sparse activity further improves classification results.

The last variant is called MLP-OBD.
Here, the optimal brain damage~(OBD) algorithm \citep{LeCun1990} was used to prune synaptic connections in the hidden layer that are irrelevant for the computation of the classification decision of the network.
The parameters of the network were first initialized randomly and then optimized on the learning set.
Then the impact for each synaptic connection on the objective function was estimated using the Taylor series of the objective function, where a diagonal approximation of the Hessian was employed and terms of cubic or higher order were neglected.
Using this information, the number of connections was halved by setting the weight of connections with low impact to zero.
The network was then retrained with weights of removed connections kept at zero.
This procedure was repeated until a target percentage $\gamma$ of active synaptic connections in the hidden layer was achieved.
For the results reported here, $\gamma = 12.5\%$ was chosen as this reflects the sparse connectivity $\sigma_W = 0.75$ of the other approaches best.
MLP-OBD achieved a median classification error of $0.89\%$, which is comparable to the baseline result.

\subsection{Statistical Analysis and Conclusions}
\label{sect:statistical_analysis}
A statistical analysis was carried out to assess the significance of the differences in the performance of the eight algorithms.
The procedure follows the proposals of \citet{Pizarro2002} and \citet{Demsar2006} for hypothesis testing, and is concluded by effect size estimation as proposed by \citet{Grissom1994} and \citet{Acion2006}.
For each algorithm, a sample of size $47$ was available, allowing for robust analysis results.

First, all results were tested for normality using the test developed by \citet{Shapiro1965}.
The resulting test statistics $W$ and $p$-values are given in Table~\ref{tbl:soae_comparatative_results}.
As all $p$-values are large, it cannot be rejected that the samples came from normally distributed populations.
Thus normality is assumed in the remainder of this discussion.
Next, the test proposed by \citet{Levene1960} was applied to determine whether equality of variances of the groups holds.
This resulted in a test statistic $F = 2.7979$ with $7$ and $368$ degrees of freedom, and therefore a $p$-value of $0.0075$.
Hence the hypothesis that all group variances are equal can be rejected with very high significance.
Consequently, parametric omnibus and post-hoc tests cannot be applied, as they require the groups to have equal variance.

As an alternative, the nonparametric test by \citet{Kruskal1952} which is based on rank information was employed to test whether all algorithms produced classifiers with equal classification errors in the mean.
The test statistic was $H = 214.44$ with $7$ degrees of freedom, and the $p$-value was less than $10^{-15}$.
There is hence a statistically significant difference in the mean classification results.
To locate this deviation, a critical difference for comparing the mean ranks of the algorithms was computed.
A Tukey-Kramer type modification applied to Dunn's procedure yields this critical difference, which is less conservative than Nemenyi's procedure for the Kruskal-Wallis test \citep{Hochberg1987}.
Note that this approach is nevertheless similar to the post-hoc procedure proposed by \citet{Demsar2006} for paired observations, such that the diagrams proposed there can be adapted to the case for unpaired observations.
The result is depicted in Figure~\ref{fig:demsar}, where the critical difference for statistical significance at the $\alpha = 0.01$ level is given.
This test induces a highly significant partitioning of the eight algorithms, namely three groups $A$, $B$ and $C$ given by
\begin{gather*}
  A := \set{\text{SOAE-$\sigma$}}\text{, }\;
  B := \set{\text{SOAE-$\sigma$-conn},\ \text{SOAE-$L_0$},\ \text{SMLP-SCFC}}\text{,}\\
  \text{and }C := \set{\text{MLP-OBD},\ \text{MLP-random},\ \text{MLP-samples},\ \text{MLP-SCFC}}\text{.}
\end{gather*}
This partition in turn induces an equivalence relation.
Statistical equivalence is hence unambiguous and well-defined at $\alpha = 0.01$.
Moreover, the $p$-value for this partition is $0.007$.
If the significance level $\alpha$ would have been set lower than this, then groups $A$ and $B$ would blend together.

\begin{figure}[t]
  \centering
  \includegraphics[page=7]{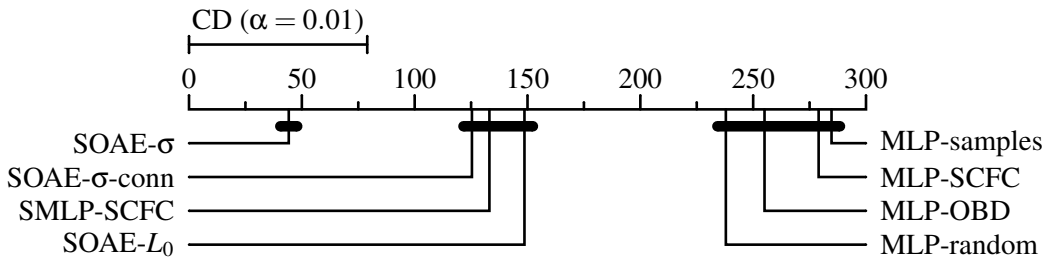}
  \caption{Diagram for multiple comparison of algorithms following \citet{Demsar2006}. For each algorithm, the mean rank was computed during the Kruskal-Wallis test. Then, a critical difference~(CD) was computed at the $\alpha = 0.01$ significance level. Two algorithms produce classification results that are statistically not equal if the difference between their mean ranks is greater than the critical difference. This induced three groups of algorithms that produced statistically equivalent results, which are marked with black bars.}
  \label{fig:demsar}
\end{figure}

To assess the benefit when an algorithm from one group is chosen over an algorithm from another group, the probability of superior experiment outcome was estimated \citep{Grissom1994,Acion2006}.
For this, the classification errors were pooled with respect to membership in the three groups.
It was then tested whether these pooled results still come from normal distributions.
As group $A$ is a singleton, this is trivially fulfilled with the result from Table~\ref{tbl:soae_comparatative_results}.
For group $B$, the Shapiro-Wilk test statistic was $W = 0.9845$ and the $p$-value was $0.11$.
Group $C$ achieved a test statistic of $W = 0.9882$ and a $p$-value of $0.12$.
If a standard significance level of $\alpha = 0.01$ is chosen, then $B$ and $C$ can be assumed to be normally distributed also.

Let $E_G$ be the random variable modeling the classification results of the algorithms from group $G\in\set{A, B, C}$.
It is assumed that $E_G$ is normally distributed with unknown mean and unknown variance for all $G$.
Then $E_G - E_{\tilde{G}}$ is clearly normally distributed also for two groups $G,\tilde{G}\in\set{A, B, C}$.
Therefore, the probability $P(E_G < E_{\tilde{G}})$ that one algorithm produces a better classifier than another could be computed from the Gaussian error function if the group means and variances were known.
However, using Rao-Blackwell theory a minimum variance unbiased estimator $\hat{R}_2$ of this probability can be computed easily \citep{Downton1973}.
Evaluation of the expression for $\hat{R}_2$ shows that $P(E_A < E_B)$ can be estimated by $0.87$, $P(E_B < E_C)$ can be estimated by $0.88$, and $P(E_A < E_C)$ can be estimated by $0.99$.
Therefore, the effect of choosing SOAE-$\sigma$ over any of the seven other algorithms is dramatic \citep{Grissom1994}.

These results can be interpreted as follows.
When neither sparse activity nor sparse connectivity is incorporated, then the worst classification results are obtained regardless of the initialization of the network parameters.
The exception is MLP-OBD which incorporates sparse connectivity, although, as its name says, in a destructive way.
Once a synaptic connection has been removed, it cannot be recovered, as the measure for relevance of \citet{LeCun1990} vanishes for synaptic connections of zero strength.
The statistics for SMLP-SCFC shows that when sparse connectivity is obtained using the sparseness-enforcing projection operator, then superior results can be achieved.
Because of the nature of projected gradient descent, it is possible here to restore deleted connections if it helps to decrease the classification error during learning.
For SOAE-$\sigma$-conn only sparse activity was used, and classification results were statistically equivalent to SMLP-SCFC.

Therefore, using either sparse activity or sparse connectivity improves classification capabilities.
When both are used, then results improve even more as variant SOAE-$\sigma$ shows.
This does not hold for SOAE-$L_0$ however, where the $L_0$ projection was used as transfer function.
As Hoyer's sparseness measure $\sigma$ and the according projection possess desirable analytical properties, they can be considered smooth approximations to the $L_0$ pseudo-norm.
It is this smoothness which seems to produce this benefit in practice.

\section{Related Work}
This section reviews work related with the contents of this paper.
First, the theoretical foundations of the sparseness-enforcing projection operator are discussed.
Next, its application as neuronal transfer function to achieve sparse activity in a classification scenario is put in context with alternative approaches, and possible advantages of sparse connectivity are described.

\subsection{Sparseness-Enforcing Projection Operator}
\label{sect:relwork_projfunc}
The first major part of this paper dealt with improvements to the work of \citet{Hoyer2004} and \citet{Theis2005}.
Here, an algorithm for the sparseness-enforcing projection with respect to Hoyer's sparseness measure $\sigma$ was proposed.
The technical proof of correctness is given in Appendix~\ref{sect:projfuncproof}.
The set that should be projected onto is an intersection of a simplex $C$ and a hypercircle $L$, which is a hypersphere lying in a hyperplane.
The overall procedure can be described as performing alternating projections onto $C$ and certain subsets of $L$.
This approach is common for handling projections onto intersections of individual sets.
For example, \citet{Neumann1950} proposed essentially the same idea when the investigated sets are closed subspaces, and has shown that this converges to a solution.
A similar approach can be carried out for intersections of closed, convex cones \citep{Dykstra1983}, which can be generalized to translated cones that can be used to approximate any convex set \citep{Dykstra1987}.
For these alternating methods, it is only necessary to know how projections onto individual members of the intersection can be achieved.

Although these methods exhibit great generality, they have two severe drawbacks in the scenario of this paper.
First, the target set for projection must be an intersection of convex sets.
The scaled canonical simplex $C$ is clearly convex, but the hypercircle $L$ is non-convex if it contains more than one point.
The condition that generates $L$ cannot easily be weakened to achieve convexity.
If the original hypersphere were replaced with a closed ball, then $L$ would be convex.
But this changes the meaning of the problem dramatically, as now virtually any sparseness below the original target degree of sparseness can be obtained.
This is because when the target $L_1$ norm $\lambda_1$ is fixed, the sparseness measure $\sigma$ decreases whenever the target $L_2$ norm decreases.
In geometric terms, the method proposed in this paper performs a projection from within a circle onto its boundary to increase the sparseness of the working vector.
This argument is given in more detail in Figure~\ref{fig:simplex_proj_rec} and the proof of Lemma~\ref{lem:splx}\ref{lem:splx_f}.

The second drawback of the general methods for projecting onto intersections is that a solution is only achieved asymptotically, even when the convexity requirements are fulfilled.
Due to the special structure of $C$ and $L$, the number of alternating projections that have to be carried out to find a solution using Algorithm~\ref{alg:projfunc_explicit} is bounded from above by the problem dimensionality.
Thus an exact projection is always found in finite time.
Furthermore, the solution is guaranteed to be found in time that is at most quadratic in problem dimensionality.

A crucial point is the computation of the projection onto $C$ and certain subsets of $L$.
Due to the nature of the $L_2$ norm, the latter is straightforward.
For the former, efficient algorithms have been proposed recently \citep{Duchi2008,Chen2011}.
When only independent solutions are required, the projection of a point $x$ onto a scaled canonical simplex of $L_1$ norm $\lambda_1$ can also be carried out in linear time \citep{Liu2009a}, without having to sort the vector that is to be projected.
This can be achieved by showing that the separator $\hat{t}$ for performing the simplex projection is the unique zero of the monotonically decreasing function $t \mapsto \norm{\max\left(\abs{x} - t\cdot e,\ 0\right)}_1 - \lambda_1$.
The zero of this function can be found efficiently using the bisection method, and exploiting the special structure of the occurring expressions \citep{Liu2009a}.

In the context of this paper an explicit closed-form expression for $\hat{t}$ is preferable as it permits additional insight into the properties of the projected point.
The major part in proving the correctness of Algorithm~\ref{alg:projfunc} is the interconnection between $C$ and $L$, that is that the final solution has zero entries at the according positions in the working vector and thus a chain monotonically decreasing in $L_0$ pseudo-norm is achieved.
This result is established through Lemma~\ref{lem:subsplx_proj}, which characterizes projections onto certain faces of a simplex, Corollary~\ref{cor:subsplx_proj} and their application in Lemma~\ref{lem:splx}.

Analysis of the theoretical properties of the sparseness-enforcing projection is concluded with its differentiability in Appendix~\ref{sect:analytical_properties}.
The idea is to exploit the finiteness of the projection sequence and to apply the chain rule of differential calculus.
It is necessary to show that the projection chain is robust in a neighborhood of the argument.
This reduces analysis to individual projection steps which have already been studied in the literature.
For example, the projection onto a closed, convex set is guaranteed to be differentiable almost everywhere \citep{Hiriart-Urruty1982}.
Here non-convexity of $L$ is not an issue, as the only critical point is its barycenter.
For the simplex $C$, a characterization of critical points is given with Lemma~\ref{lem:splx_proj_analytics} and Lemma~\ref{lem:splx_proj_critical}, and it is shown that the expression for the projection onto $C$ is invariant to local changes.
An explicit expression for construction of the gradient of the sparseness-enforcing projection operator is given in Theorem~\ref{thm:projfuncblockgrad}.
In Corollary~\ref{cor:projfuncgraddgemv} it is shown that the computation of the product of the gradient with an arbitrary vector can be achieved efficiently by exploiting sparseness and the special structure of the gradient.

Similar approaches for sparseness projections are discussed in the following.
The iterative hard thresholding algorithm is a gradient descent algorithm, where a projection onto an $L_0$ pseudo-norm constraint is performed \citep{Blumensath2009}.
Its application lies in compressed sensing, where a linear generative model is used to infer a sparse representation for a given observation.
Sparseness here acts as regularizer which is necessary because observations are sampled below the Nyquist rate.
In spite of the simplicity of the method, it can be shown that it achieves a good approximation to the optimal solution of this NP-hard problem \citep{Blumensath2009}.

Closely related with the work of this paper is the generalization of Hoyer's sparseness measure by \citet{Theis2006}.
Here, the $L_1$ norm constraint is replaced with a generalized $L_p$ pseudo-norm constraint, such that the sparseness measure becomes $\sigma_p(x) := \nicefrac{\norm{x}_p}{\norm{x}_2}$.
For $p = 1$, Hoyer's sparseness measure up to a constant normalization is obtained.
When $p$ converges decreasingly to zero, then $\sigma_p(x)^p$ converges point-wise to the $L_0$ pseudo-norm.
Hence for small values of $p$ a more natural sparseness measure is obtained.
\citet{Theis2006} also proposed an extension of Hoyer's projection algorithm.
It is essentially von Neumann's alternating projection method, where closed subspaces have been replaced by "spheres" that are induced by $L_p$ pseudo-norms.
Note that these sets are non-convex when $p < 1$, such that convergence is not guaranteed.
Further, no closed-form solution for the projection onto an "$L_p$-sphere" is known for $p\not\in\set{1, 2, \infty}$, such that numerical methods have to be employed.

A problem where similar projections are employed is to minimize a convex function subject to group sparseness \citep[see for example][]{Friedman2010}.
In this context, mixed norm balls are of particular interest \citep{Sra2012}.
For a matrix $X\in\R^{n\times g}$, the mixed $L_{p,q}$ norm is defined as the $L_p$ norm of the $L_q$ norms of the columns of $X$, that is $\norm{X}_{p, q} := \bnorm{\tdvectbig{\norm{Xe_1}_q}{\norm{Xe_g}_q}}_p$.
Here, $X$ can be interpreted to be a data point with entries partitioned into $g$ groups.
When $p = 1$, then the projection onto a simplex can be generalized directly for $q = 2$ \citep{Berg2008} and for $q = \infty$ \citep{Quattoni2009}.
The case when $p = 1$ and $q\geq 1$ is more difficult, but can be solved as well \citep{Liu2010,Sra2012}.

The last problem discussed here is the elastic net criterion \citep{Zou2005}, which is a constraint on the sum of an $L_1$ norm and an $L_2$ norm.
The feasible set can be written as the convex set $N := \set{s\in\R^n | \lambda_1 \norm{s}_1 + \lambda_2\norm{s}_2^2 \leq 1}$, where $\lambda_1,\lambda_2 \geq 0$ control the shape of $N$.
Note that in $N$ only the sum of two norms is considered, whereas the non-convex set $S^{(\lambda_1,\lambda_2)}$ consists of the intersection of two different constraints.
Therefore, the elastic net induces a different notion of sparseness than Hoyer's sparseness measure $\sigma$ does.
As is the case for mixed norm balls, the projection onto a simplex can be generalized to achieve projections onto $N$ \citep{Mairal2010}.

\subsection{Supervised Online Auto-Encoder}
The sparseness-enforcing projection operator $\pi$ with respect to Hoyer's sparseness measure $\sigma$ and the projection onto an $L_0$ pseudo-norm constraint are differentiable almost everywhere.
Thus they are suitable for gradient-based optimization algorithms.
In Section~\ref{sect:soae}, they were used as transfer functions in a hybrid of an auto-encoder network and a two-layer neural network to infer a sparse internal representation.
This representation was subsequently employed to approximate the input sample and to compute a classification decision.
In addition, the matrix of bases which was used to compute the internal representation was enforced to be sparsely populated by application of the sparseness projection after each learning epoch.
Hence the supervised online auto-encoder proposed in this paper features both sparse activity and sparse connectivity.

These two key properties have also been investigated and exploited in the context of auto-associative memories for binary inputs.
If the entries of the training patterns are sparsely populated, the weight matrix of the memory will be sparsely populated as well after training if Hebbian-like learning rules are used \citep{Kohonen1972}.
The assumption of sparsely coded inputs also results in increased completion capacity and noise resistance of the associative memory \citep{Palm1980}.
If the input data is not sparse inherently, feature detectors can perform a sparsification prior to the actual processing through the memory \citep{Baum1988}.

A purely generative model that also possesses these two key properties is non-negative matrix factorization with sparseness constraints \citep{Hoyer2004}.
This is an extension to plain non-negative matrix factorization \citep{Paatero1994} which was shown to achieve sparse connectivity on certain data sets \citep{Lee1999}.
However, there are data sets on which this does not work \citep{Li2001,Hoyer2004}.
Although Hoyer's model makes sparseness easily controllable by explicit constraints, it is not inherently suited to classification tasks.
An extension intended to incorporate class membership information to increase discriminative capabilities was proposed by \citet{Heiler2006}.
In their approach, an additional constraint was added ensuring that every internal representation is close to the mean of all internal representations that belong to the same class.
In other words, the method can be interpreted as supervised clustering, with the number of clusters equal to the number of classes.
However, there is no guarantee that a distribution of internal representations exists such that both the reproduction error is minimized and the internal representations can be arranged in such a pattern.
Unfortunately, \citet{Heiler2006}~used only a subset of a small data set for handwritten digit recognition to evaluate their approach.

A precursor to the supervised online auto-encoder was proposed by \citet{Thom2011d}.
There, inference of sparse internal representations was achieved by fitting a one-layer neural network to approximate a latent variable of optimal sparse representations.
The transfer function used for this approximation was a hyperbolic tangent raised to an odd power greater or equal to three.
This resulted in a depression of activities with small magnitude, favoring sparseness of the result.
Similar techniques to achieve a shrinkage-like effect for increasing sparseness of activity in a neural network were used by \citet{Gregor2010} and \citet{Glorot2011}.
Information processing is here purely local, that is a scalar function is evaluated entrywise on a vector, and thus no information is interchanged among individual entries.

The use of non-local shrinkage to reduce Gaussian noise in sparse coding has already been described by \citet{Hyvaerinen1999a}.
Here, a maximum likelihood estimate with only weak assumptions yields a shrinkage operation, which can be conceived as projection onto a scaled canonical simplex.
In the use case of object recognition, a hard shrinkage was also employed to de-noise filter responses \citep{Mutch2006}.
Whenever a best approximation from a permutation-invariant set is used, a shrinkage-like operation must be employed.
Using a projection operator as neural transfer function is hence a natural extension of these ideas.
When the projection is sufficiently smooth, the entire model can be tuned end-to-end using gradient methods to achieve an auto-encoder or a classifier.

The second building block from \citet{Thom2011d} that was incorporated into supervised online auto-encoder is the architectural concept for classification.
It is well-known that two layers in a neural network are sufficient to approximate any continuous function on a compactum with arbitrary precision \citep{Cybenko1989,Funahashi1989,Hornik1989}.
Similar architectures have also been proposed for classification in combination with sparse coding of the inputs.
However, sparse connectivity was not considered in this context.
\citet{Bradley2009} used the Kullback-Leibler divergence as implicit sparseness penalty term and combined this with the backpropagation algorithm to yield a classifier that achieved a $1.30\%$ error rate on the MNIST evaluation set.
The Kullback-Leibler divergence was chosen to replace the usual $L_1$ norm penalty term, as it is smoother than the latter and therefore sparsely coded internal representations are more stable subject to subtle changes of the input.
A related technique is supervised dictionary learning by \citet{Mairal2009b}, where the objective function is an additive combination of a classification error term, a term for the reproduction error, and an $L_1$ norm constraint.
Inference of sparse internal representations is achieved through solving an optimization problem.
Such procedures are time-consuming and greatly increase the computational complexity of classification.
With this approach, a classification error of $1.05\%$ on the MNIST evaluation set was achieved.
These two approaches used the original MNIST learning set without jittering the digits and can thus be considered permutation-invariant.
Augmentation of the learning set with virtual samples would have contributed to improve classification performance, as demonstrated by \citet{Schoelkopf1997}.

Finally consider once more the sparse connectivity property, which is mostly neglected in the literature in favor of sparse activity.
It was shown in this paper that sparse connectivity helps to improve generalization capabilities.
In practice, this property can also be used to reduce the computational complexity of classification by one order of magnitude \citep{Thom2011e}.
This results from exploiting sparseness and using sparse matrix-vector multiplication algorithms to infer the internal representation, which is the major computational burden in class membership prediction.
It was shown in this paper and by \citet{Thom2011e} that a small number of nonzero entries in the weight matrix of the hidden layer is sufficient for achieving good classification results.
Furthermore, the additional savings in required storage capacity and bandwidth allow using platforms with modest computational power for practical implementations.
Sparseness is therefore an elementary concept of efficiency in artificial processing systems.

\section{Conclusions}
Without sparseness in their brains, higher mammals probably would not have developed to viable life-forms.
This important concept of efficiency was discovered by neuroscientists, and practical benefit was obtained by the engineers of artificial information processing systems.
This paper studied Hoyer's sparseness measure $\sigma$, and in particular the projection of arbitrary vectors onto sets where $\sigma$ attains a constant value.
A simple yet efficient algorithm for computing this sparseness-enforcing projection operator was proposed in this paper, and its correctness was proved.
In addition, it was demonstrated that the proposed algorithm is superior in run-time to Hoyer's original algorithm.
The analysis of the theoretical properties of this projection was concluded by showing it is differentiable almost everywhere.

As projections onto $\sigma$ constraints are well-understood, they constitute the ideal tool for building systems that can benefit from sparseness constraints.
An original use case was introduced in this paper.
Here, the $\sigma$ projection was implemented as neuronal transfer function, yielding a differentiable closed-form expression for inference of sparse code words.
Besides this sparse activity, the connectivity in this system was also forced to be sparse by performing the $\sigma$ projection after the presentation of learning examples.
Because of its smoothness, the entire system can be optimized end-to-end by gradient-based methods, yielding a classification architecture exhibiting true sparse information processing.

This supervised online auto-encoder was applied on a benchmark data set for pattern recognition.
Because sparseness constraints reduce the amount of feasible solutions, it is not clear in the first place whether the same performance can be achieved at all.
However, when the target degree of sparseness of the activity is in a reasonable range, classification results are not only equivalent but superior to classical non-sparse approaches.
This result is supported by statistical evaluation showing that this performance increase is not merely coincidental, but statistically significant.
Therefore, sparseness can be seen as regularizer that offers the potential to improve artificial systems in the same way it seems to improve biological systems.

\acks{The authors wish to thank Patrik O. Hoyer and Xiaojing Ye for sharing the source code of their algorithms. The authors are also grateful to the anonymous reviewers for their valuable comments and feedback. This work was supported by Daimler AG, Germany.}

\appendix
\section{Notation and Prerequisites}
\label{sect:notation}
This appendix fixes the notation and provides prerequisites for the following appendices.
$\N$ denotes the natural numbers including zero, $\R$ the real numbers and $\R_{\geq 0}$ the non-negative real numbers.
$\R^n$ is the $n$-dimensional Euclidean space with canonical basis $e_1,\dotsc,e_n\in\R^n$, and $e := \sum_{i=1}^ne_i\in\R^n$ denotes the vector where all entries are identical to unity.
For all other vectors, a subscript denotes the corresponding entry of the vector, that is $x_i = e_i\transp x$ for $x\in\R^n$.
The amount of nonzero entries in a vector is given by the $L_0$ pseudo-norm, $\norm{\cdot}_0$.
$\norm{\cdot}_1$ and $\norm{\cdot}_2$ denote the Manhattan norm and Euclidean norm, respectively.
$\scp{\cdot}{\cdot}$ denotes the canonical dot product in the Euclidean space.
Given a vector $x$, $\diag(x)$ denotes the square matrix with $x$ on its main diagonal and zero entries at all other positions, and $a\hada b = \diag(a)b$ denotes the Hadamard product or entrywise product for vectors $a$ and $b$.
When $A$ and $B$ are square matrices, then $\diag(A, B)$ denotes the block diagonal matrix with the blocks given by $A$ and $B$.
$S_n$ is the symmetric group, and $P_\tau$ denotes the permutation matrix for $\tau\in S_n$.
For a set $M\subseteq U$, $M\comp$ denotes its complement in the universal set $U$, where $U\in\set{\R^n, \discint{1}{n}}$ is clear from the context.
The power set of $M$ is denoted by $\P(M)$.
If $M\subseteq\R^n$, then $\partial M$ denotes its boundary in the topological sense.
The sign function is denoted by $\sgn(\cdot)$.
A list of symbols that are frequently used throughout the paper is given in Table~\ref{tab:symbols}.
\begin{table}[t]
  \centering
  \renewcommand{\arraystretch}{1.185}
  \begin{tabular}{l l}
    \toprule
    Symbol and Definition & Meaning\\\midrule
    $\sigma$ (see Section~\ref{sect:intro_hoyer}) & Sparseness measure by \citet{Hoyer2004}\\
    $\pi$ and $\pi_{\geq 0}$ (see Section~\ref{sect:projfunc_differentiability}) & Sparseness projection cast as function\\\midrule
    $n\in\N$ & Problem dimensionality\\
    $e_1,\dotsc,e_n\in\R^n$ & Canonical basis of $\R^n$\\
    $e := \sum_{i=1}^ne_i\in\R^n$ & Vector where all entries are one\\
    $\lambda_1\in\R_{> 0}$ & Target $L_1$ or Manhattan norm\\
    $\lambda_2\in\R_{> 0}$ & Target $L_2$ or Euclidean norm\\
    $S^{(\lambda_1,\lambda_2)} \subseteq \R^n$ (see Section~\ref{sect:intro_hoyer}) & Target set for sparseness projection\\
    $S_{\geq 0}^{(\lambda_1,\lambda_2)} := S^{(\lambda_1,\lambda_2)} \cap \R_{\geq 0}^n$ & Target set for non-negative sparseness projection\\\midrule
    $D := S_{\geq 0}^{(\lambda_1,\lambda_2)}$ & Short for the non-negative target set\\
    $H := \set{a\in\R^n | e\transp a = \lambda_1}$ & Target hyperplane\\
    $K := \set{q\in\R^n | \norm{q}_2 = \lambda_2}$ & Target hypersphere\\
    $L := H\cap K$ & Target hypercircle\\
    $C := \R_{\geq 0}^n\cap H$ & Scaled canonical simplex\\
    $m := \nicefrac{\lambda_1}{n}\cdot e\in\R^n$ & Barycenter of $L$ and $C$\\
    $\rho := \lambda_2^2 - \nicefrac{\lambda_1^2}{n}\in\R$ & Squared radius of $L$\\\midrule
    $I\subseteq\discint{1}{n}$ & Index set of nonzero entries\\
    $d := \abs{I}\in\N$ & Working dimensionality\\
    $L_I := \set{a\in L | a_i = 0\text{ for all }i\not\in I}$ & Points in $L$ where certain coordinates vanish\\
    $C_I := \set{c\in C | c_i = 0\text{ for all }i\not\in I}$ & Face of simplex $C$\\
    $m_I := \nicefrac{\lambda_1}{d}\cdot\sum_{i\in I}e_i\in\R^n$ & Barycenter of $L_I$ and $C_I$\\
    $\rho_I := \lambda_2^2 - \nicefrac{\lambda_1^2}{d}\in\R$ & Squared radius of $L_I$\\\bottomrule
  \end{tabular}
  \caption{A list of symbols used frequently in this paper and their meaning.}
  \label{tab:symbols}
\end{table}

The important concept of the projection onto a set was given in Definition~\ref{dfn:projection}.
The following basic statement will be used extensively in this paper and follows from $\scp{x}{x} = \norm{x}_2^2$ for all $x\in\R^n$ and the fact that the scalar product is a symmetric bilinear form \citep{Laub2004}:
\begin{proposition}
\label{prop:sqnrm}
Let $a,b\in\R^n$.
Then $\norm{a\pm b}_2^2 = \norm{a}_2^2 + \norm{b}_2^2 \pm 2\scp{a}{b}$.
Further it is $\norm{a - b}_2^2 = \norm{a - p}_2^2 + \norm{p - b}_2^2 + 2\scp{a - p}{p - b}$ for all $p\in\R^n$.
\end{proposition}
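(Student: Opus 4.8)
The plan is to derive both identities directly from the single fact that $\norm{x}_2^2 = \scp{\cdot}{\cdot}$ together with the symmetry and bilinearity of the scalar product, which are exactly the ingredients the proposition advertises. For the first identity I would start from $\norm{a\pm b}_2^2 = \scp{a\pm b}{a\pm b}$ and expand the bracket by bilinearity into four terms $\scp{a}{a} \pm \scp{a}{b} \pm \scp{b}{a} + \scp{b}{b}$. Recognizing $\scp{a}{a} = \norm{a}_2^2$ and $\scp{b}{b} = \norm{b}_2^2$, and using symmetry of the form to merge the two cross terms into $\pm 2\scp{a}{b}$, yields the claim. The sign bookkeeping is the only thing to track, and it works out uniformly for both the $+$ and $-$ cases because the cross term carries the same sign in both expansions.

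For the second identity I would avoid a fresh expansion and instead reduce it to the first part by the telescoping decomposition $a - b = (a - p) + (p - b)$, which holds for every $p\in\R^n$. Applying the already-proved identity with the plus sign to the two summands $a-p$ and $p-b$ in place of $a$ and $b$ gives $\norm{(a-p)+(p-b)}_2^2 = \norm{a-p}_2^2 + \norm{p-b}_2^2 + 2\scp{a-p}{p-b}$, which is precisely the assertion. This keeps the second part a one-line consequence of the first rather than a repeated computation.

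I do not expect any genuine obstacle here: the statement is a packaging of elementary Hilbert-space algebra, and the entire proof is a short symbolic manipulation. The only point demanding any care is the appeal to symmetry when collapsing $\scp{a}{b} + \scp{b}{a}$ to $2\scp{a}{b}$, and the observation that the decomposition used for the second part is valid for an arbitrary intermediate point $p$, which is exactly what makes the identity useful later for projection arguments.
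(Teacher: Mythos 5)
Your proof is correct and matches the paper's own justification: the paper states this proposition without a separate proof, attributing it precisely to $\scp{x}{x}=\norm{x}_2^2$ and the symmetry and bilinearity of the scalar product, which is exactly the expansion you carry out. Your reduction of the second identity to the first via the decomposition $a-b=(a-p)+(p-b)$ is the standard (and intended) one-line argument, so there is nothing to add.
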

As an example, note that the outcome of the sparseness-enforcing projection operator depends only on the target sparseness degree up to scaling:
\begin{remark}
\label{rem:projfunc_scaling}
Let $\lambda_1, \lambda_2 > 0$ and $\tilde{\lambda}_1, \tilde{\lambda}_2 > 0$ be pairs of target norms such that $\nicefrac{\lambda_1}{\lambda_2} = \nicefrac{\tilde{\lambda}_1}{\tilde{\lambda}_2}$.
Then
\begin{displaymath}
  \proj_{S^{(\lambda_1,\lambda_2)}}(x) = \nicefrac{\tilde{\lambda}_2}{\lambda_2}\cdot\proj_{S^{(\tilde{\lambda}_1,\tilde{\lambda}_2)}}(x)\text{ for all }x\in\R^n\text{.}
\end{displaymath}
\end{remark}
\begin{proof}
It is sufficient to show only one inclusion.
Let $x\in\R^n$ be arbitrary, $p\in\proj_{S^{(\lambda_1,\lambda_2)}}(x)$ and $\tilde{r}\in S^{(\tilde{\lambda}_1,\tilde{\lambda}_2)}$.
Define $\tilde{p} := \nicefrac{\tilde{\lambda}_1}{\lambda_1}\cdot p = \nicefrac{\tilde{\lambda}_2}{\lambda_2}\cdot p\in\R^n$, then $\norm{\tilde{p}}_1 = \abs{\nicefrac{\tilde{\lambda}_1}{\lambda_1}}\cdot\norm{p}_1 = \tilde{\lambda}_1$ and analogously $\norm{\tilde{p}}_2 = \tilde{\lambda}_2$, hence $\tilde{p}\in S^{(\tilde{\lambda}_1,\tilde{\lambda}_2)}$.
For the claim to hold it has now to be shown that $\norm{\tilde{p} - x}_2 \leq \norm{\tilde{r} - x}_2$.
Write $r := \nicefrac{\lambda_2}{\tilde{\lambda}_2}\cdot\tilde{r}\in\R^n$, which in fact lies in $S^{(\lambda_1,\lambda_2)}$.
So $\norm{p - x}_2 \leq \norm{r - x}_2$ by definition of $p$,
and with Proposition~\ref{prop:sqnrm} follows
$\norm{\tilde{r} - x}_2^2 - \norm{\tilde{p} - x}_2^2
= \norm{\tilde{r}}_2^2 + \norm{x}_2^2 - 2\scp{\tilde{r}}{x} - \norm{\tilde{p}}_2^2 - \norm{x}_2^2 + 2\scp{\tilde{p}}{x}
= 2\scp{\tilde{p} - \tilde{r}}{x}
= \nicefrac{\tilde{\lambda}_2}{\lambda_2}\cdot 2\scp{p - r}{x}
= \nicefrac{\tilde{\lambda}_2}{\lambda_2}\cdot\big(\norm{r - x}_2^2 - \norm{p - x}_2^2\big)
\geq 0$.
\end{proof}
Hence only the ratio of the target $L_1$ norm to the target $L_2$ norm is important and not their actual scale.
This argument can be generalized to projections onto any scale-invariant set and therefore naturally holds also for $S_{\geq 0}^{(\lambda_1,\lambda_2)}$.

\section{Projections onto Symmetric Sets}
\label{sect:proj_symm}
This appendix investigates certain symmetries of sets and their effect on projections onto such sets.
A great variety of sparseness measures fulfills certain symmetries as vector entries are equally weighted, see \citet{Hurley2009}.
This means that no entry is preferred over another, and for negative entries usually the absolute value or the squared value is taken, such that the signs of the entries are ignored.
Consider the following definition of symmetries that are to be analyzed:
\begin{definition}
Let $\emptyset\neq M\subseteq\R^n$.
Then $M$ is called \emph{permutation-invariant} if and only if $P_\tau x\in M$ for all $x\in M$ and all permutations $\tau\in S_n$.
Further, $M$ is called \emph{reflection-invariant} if and only if $b\hada x\in M$ for all $x\in M$ and all $b\in\set{\pm 1}^n$.
\end{definition}
In other words, a subset $M$ of the Euclidean space is permutation-invariant if set membership is invariant to permutation of individual coordinates.
$M$ is reflection-invariant if single entries can be negated without violating set membership.
This is equivalent to $x - 2\sum_{i\in I}x_ie_i\in M$ for all $x\in M$ and all index sets $I\subseteq\discint{1}{n}$, which is a condition that is technically easier to handle.
The following observation states that these symmetries are closed under common set operations:
\begin{remark}
\label{rem:symmetries_closed}
Let $\emptyset\neq A,B\subseteq\R^n$.
When $A$ and $B$ are permutation-invariant or reflection-invariant, then so are $A\cup B$, $A\cap B$ and $A^C$.
\end{remark}
The proof is obvious by elementary set algebra.
Now consider the following general properties of functions mapping to power sets:
\begin{definition}
Let $\emptyset\neq M\subseteq\R^n$, let $\P(M)$ be its power set and let $f\colon \R^n\to\P(M)$ be a function.
$f$ is called \emph{order-preserving} if and only if $x_i > x_j$ implies $p_i \geq p_j$ for all $x\in\R^n$, for all $p\in f(x)$ and for all $i,j\inint{1}{n}$.
$f$ is called \emph{absolutely order-preserving} if and only if from $\abs{x_i} > \abs{x_j}$ follows $\abs{p_i} \geq \abs{p_j}$ for all $x\in\R^n$, for all $p\in f(x)$ and for all $i,j\inint{1}{n}$. 
$f$ is called \emph{orthant-preserving} if and only if $\sgn(x_i) = \sgn(p_i)$ or $x_i = 0$ or $p_i = 0$ for all $x\in M$ and all $p\in f(x)$.
\end{definition}
Hence, a function $f$ is order-preserving if the relative order of entries of its arguments does not change upon function evaluation.
Thus if the entries of $x$ are sorted in ascending or descending order, then so are the entries of every vector in $f(x)$.
Orthant-preservation denotes the fact that $x$ and every vector from $f(x)$ are located in the same orthant.
The link between set symmetries and projection properties is established by the following result.
A weaker form of its statements has been described by \citet{Duchi2008} in the special case of a projection onto a simplex.
\begin{lemma}
\label{lem:proj_props}
Let $\emptyset\neq M\subseteq\R^n$ and $p\colon\R^n\to\P(M)$, $x\mapsto\proj_M(x)$.
Then the following holds:
\begin{enumerate}
\item \label{lem:proj_props_a}
When $M$ is permutation-invariant, then $p$ is order-preserving.

\item \label{lem:proj_props_b}
When $M$ is reflection-invariant, then $p$ is orthant-preserving.

\end{enumerate}
\end{lemma}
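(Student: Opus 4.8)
The plan is to prove both parts by the same contradiction argument: assume the order (resp.\ sign) condition fails at some pair of coordinates, then apply the relevant symmetry of $M$ to the projection $p$ \emph{itself} to produce a competitor $\tilde{p}\in M$ that is strictly closer to $x$, contradicting $p\in\proj_M(x)$. The key observation is that permuting or reflecting $p$ keeps it inside $M$ by the respective invariance, while altering its distance to $x$ only through the affected coordinates, so the whole computation collapses to a one- or two-coordinate expansion.

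For part~\ref{lem:proj_props_a} I would fix $x\in\R^n$, $p\in\proj_M(x)$ and indices $i,j$ with $x_i > x_j$, and suppose toward a contradiction that $p_i < p_j$. Let $\tau\in S_n$ be the transposition of $i$ and $j$ and set $\tilde{p} := P_\tau p$, which lies in $M$ by permutation-invariance. Since $p$ and $\tilde{p}$ agree off $\set{i, j}$, a direct expansion of the squared distances yields
\[
  \norm{p - x}_2^2 - \norm{\tilde{p} - x}_2^2 = 2\left(p_i - p_j\right)\left(x_j - x_i\right).
\]
Under the assumptions $x_j - x_i < 0$ and $p_i - p_j < 0$, so the right-hand side is strictly positive, forcing $\norm{\tilde{p} - x}_2 < \norm{p - x}_2$. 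This contradicts $p$ being a best approximation, hence $p_i \geq p_j$.

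For part~\ref{lem:proj_props_b} I would argue analogously, now fixing $x\in\R^n$ and supposing there is a coordinate $i$ with $x_i\neq 0$, $p_i\neq 0$ and $\sgn(x_i)\neq\sgn(p_i)$. Negating that single coordinate produces $\tilde{p} := p - 2p_ie_i\in M$ by reflection-invariance, and the distances differ only in coordinate $i$:
\[
  \norm{p - x}_2^2 - \norm{\tilde{p} - x}_2^2 = \left(p_i - x_i\right)^2 - \left(-p_i - x_i\right)^2 = -4p_ix_i.
\]
Because $p_i$ and $x_i$ have opposite signs we have $p_ix_i < 0$, so this difference is strictly positive and $\tilde{p}$ again beats $p$, a contradiction. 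Therefore for every $i$ at least one of $x_i = 0$, $p_i = 0$ or $\sgn(x_i) = \sgn(p_i)$ holds, which is exactly orthant-preservation.

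The arguments are short, so there is no deep obstacle; the care is entirely in the bookkeeping. I must keep the hypotheses \emph{strict} (namely $x_i > x_j$ and genuinely opposite signs) so that the distance gap is strict and truly contradicts optimality, and I must apply the symmetry to $p$ rather than to $x$ so that the competitor provably remains in $M$. Note that no closedness, convexity or uniqueness of $\proj_M$ is needed, so both statements hold for arbitrary nonempty $M$ with the stated invariance.
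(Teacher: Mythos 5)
Your proposal is correct and follows essentially the same route as the paper's proof: part~(a) is the identical transposition-swap argument with the same distance identity $2(p_j - p_i)(x_i - x_j) > 0$, and part~(b) uses the same reflection-generated competitor, differing only in that you flip a single offending coordinate while the paper flips all sign-mismatched coordinates at once and then isolates one nonzero term. Both variants yield the same strict contradiction, so no gap remains.
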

\begin{proof}
\ref{lem:proj_props_a}
Let $x\in\R^n$ and $p\in\proj_M(x)$.
Let $i,j\inint{1}{n}$ with $x_i > x_j$.
Assume that $p_i < p_j$.
Let $\tau := (i, j)\in S_n$ and $q := P_\tau p$, then $q\in M$ because of $M$ being permutation-invariant.
Consider $d := \norm{p - x}_2^2 - \norm{q - x}_2^2$.
Because $\tau$ is a single transposition, application of Proposition~\ref{prop:sqnrm} yields $d = 2(p_j - p_i)(x_i - x_j)$.
By requirement $d > 0$, which contradicts the minimality of $p$ as being a projection of $x$ onto $M$.
Hence $p_i \geq p_j$ must hold.

\ref{lem:proj_props_b}
Let $x\in\R^n$ and $p\in\proj_M(x)$.
Define $I := \set{i\inint{1}{n} | \sgn(x_i) \neq \sgn(p_i)}$.
The claim holds trivially if $I = \emptyset$.
Assume $I\neq\emptyset$ and define $q := p - 2\sum_{i\in I}p_ie_i$.
It follows $q\in M$ because $M$ is reflection-invariant.
Proposition~\ref{prop:sqnrm} implies $\norm{q}_2^2 = \norm{p}_2^2$, and clearly $\scp{q}{x} = \scp{p}{x} - 2\sum_{i\in I}p_ix_i$.
Therefore application of Proposition~\ref{prop:sqnrm} yields $d := \norm{p - x}_2^2 - \norm{q - x}_2^2 = -4\sum_{i\in I}p_ix_i$.
By the definition of $I$ one obtains $p_ix_i\in\set{-1, 0}$.
Hence would there be an index $i\in I$ with $p_i\neq 0$ and $x_i\neq 0$, then $d > 0$, but $\norm{p - x}_2^2 > \norm{q - x}_2^2$ would contradict the minimality of $p$.
Therefore $I = \set{i\inint{1}{n} | p_i = 0\text{ or } x_i = 0}$, and the claim follows.
\end{proof}
When the projection onto a permutation-invariant set is unique, then equal entries of the argument cause equal entries in the projection:
\begin{remark}
Let $\emptyset\neq M\subseteq\R^n$ be permutation-invariant and $x\in\R^n$.
When $p = \proj_M(x)$ is unique, then $p_i = p_j$ follows for all $i,j\inint{1}{n}$ with $x_i = x_j$.
\end{remark}
\begin{proof}
Let $x\in\R^n$, $p = \proj_M(x)$ and $i,j\inint{1}{n}$ with $x_i = x_j$.
Assume $p_i\neq p_j$ would hold and let $\tau := (i, j)\in S_n$ and $q := P_\tau p \neq p$.
With the permutation-invariance of $M$ follows $q\in M$, and $\norm{q - x}_2 = \norm{p - x}_2$  with $x_i = x_j$.
Hence $q\in\proj_M(x)$, so $q = p$ with the uniqueness of the projection, which contradicts $q\neq p$.
Therefore, $p_i = p_j$.
\end{proof}
The next result shows how solutions to a projection onto reflection-invariant sets can be turned into non-negative solutions and vice-versa.
Its second part was already observed by \citet{Hoyer2004}, in the special case of the sparseness-enforcing projection operator, and by \citet{Duchi2008}, when the connection between projections onto a simplex and onto an $L_1$ ball was studied.
Both did not provide a proof, but in the latter work a hint to a possible proof was given.
With Lemma~\ref{lem:proj_nonneg_sols} it suffices to consider non-negative solutions for projections onto reflection-invariant sets.
\begin{lemma}
\label{lem:proj_nonneg_sols}
Let $\emptyset\neq A\subseteq\R^n$ be reflection-invariant, $B := A\cap\R_{\geq 0}^n$ and $p,x\in\R^n$.
Then:
\begin{enumerate}
\item \label{lem:proj_nonneg_sols_a}
If $p\in\proj_A(x)$, then $\abs{p}\in\proj_B(\abs{x})$.

\item \label{lem:proj_nonneg_sols_b}
If $p\in\proj_B(\abs{x})$, then $s\hada p\in\proj_A(x)$ where $s\in\set{\pm 1}^n$ is given by $s_i := 1$ if $x_i\geq 0$ and $s_i := -1$ otherwise for all $i\inint{1}{n}$.
\end{enumerate}
\end{lemma}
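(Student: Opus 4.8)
The plan is to reduce both directions to two elementary facts and then invoke only the reflection-invariance of $A$ together with optimality of the relevant projection. Let $s\in\set{\pm 1}^n$ be the sign vector of $x$ as defined in the statement, so that $s_ix_i = \abs{x_i}$ for every index (including $x_i = 0$). First I would record a \emph{distance identity}: for any $v\in\R^n$ one has $\norm{s\hada v - x}_2 = \norm{v - \abs{x}}_2$. This is immediate from Proposition~\ref{prop:sqnrm}, since $\norm{s\hada v}_2 = \norm{v}_2$, $\norm{\abs{x}}_2 = \norm{x}_2$, and $\scp{s\hada v}{x} = \sum_{i=1}^n v_i(s_ix_i) = \scp{v}{\abs{x}}$. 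Second I would record a \emph{distance inequality}: $\norm{\abs{a} - \abs{x}}_2 \leq \norm{a - x}_2$ for every $a\in\R^n$, which also follows from Proposition~\ref{prop:sqnrm} applied coordinate-wise, using $\abs{a_i}\abs{x_i}\geq a_ix_i$.

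For part~\ref{lem:proj_nonneg_sols_a}, let $p\in\proj_A(x)$. Since $A$ is reflection-invariant, $\abs{p}\in A$, and as $\abs{p}\in\R_{\geq 0}^n$ this shows $\abs{p}\in B$ is a feasible competitor. To prove optimality I would fix an arbitrary $b\in B$ and chain the two facts with the optimality of $p$: because $s\hada b\in A$ by reflection-invariance, the inequality, the defining optimality of $p$ in $A$, and the identity give $\norm{\abs{p} - \abs{x}}_2 \leq \norm{p - x}_2 \leq \norm{s\hada b - x}_2 = \norm{b - \abs{x}}_2$. Since $b$ was arbitrary, $\abs{p}\in\proj_B(\abs{x})$.

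Part~\ref{lem:proj_nonneg_sols_b} is the mirror image. Let $p\in\proj_B(\abs{x})$; then $s\hada p\in A$ by reflection-invariance, hence feasible for the projection onto $A$. For an arbitrary $a\in A$ I would use that $\abs{a}\in B$ (reflection-invariance plus non-negativity) and chain the identity, the optimality of $p$ in $B$, and the inequality: $\norm{s\hada p - x}_2 = \norm{p - \abs{x}}_2 \leq \norm{\abs{a} - \abs{x}}_2 \leq \norm{a - x}_2$. As $a$ was arbitrary, $s\hada p\in\proj_A(x)$.

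There is no single deep obstacle here; once the distance identity and distance inequality are established, each direction is a three-term chain. The only points demanding care are the bookkeeping of the sign vector, namely checking $s_ix_i = \abs{x_i}$ in every case so that the identity holds verbatim, and choosing the correct feasible competitor in each direction ($s\hada b$ to pass from a distance to $\abs{x}$ back to a distance to $x$, and $\abs{a}$ to go the other way). I note that the orthant-preservation from Lemma~\ref{lem:proj_props}\ref{lem:proj_props_b} could replace the distance inequality in part~\ref{lem:proj_nonneg_sols_a}, but the inequality $\norm{\abs{a}-\abs{x}}_2\leq\norm{a-x}_2$ serves both directions uniformly and is all that is required.
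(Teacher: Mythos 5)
Your proof is correct, and it takes a route that genuinely differs from the paper's. The paper first invokes orthant-preservation (Lemma~\ref{lem:proj_props}\ref{lem:proj_props_b}) to obtain the \emph{equality} $\norm{q - x}_2 = \norm{\abs{q} - \abs{x}}_2$ for any $q\in\proj_A(x)$, and then settles both parts with sign-flipped competitors and coordinate-wise computations; in part~\ref{lem:proj_nonneg_sols_b} it additionally picks some $q\in\proj_A(x)$ and compares $p$ against $\abs{q}$. You replace the appeal to orthant-preservation by the universally valid inequality $\norm{\abs{a} - \abs{x}}_2 \leq \norm{a - x}_2$ together with the identity $\norm{s\hada v - x}_2 = \norm{v - \abs{x}}_2$; both facts hold for arbitrary vectors, not just for projections, so each direction collapses to a three-term chain needing only reflection-invariance and defining optimality. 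The competitors you choose are essentially the paper's (the paper's $\tilde{q}$ in part~\ref{lem:proj_nonneg_sols_a} is exactly your $s\hada q$), so the combinatorial content coincides, but your version buys three things: it is self-contained (no dependence on Lemma~\ref{lem:proj_props}), it is uniform across the two parts, and it is strictly more careful in part~\ref{lem:proj_nonneg_sols_b} --- the paper's argument there presupposes $\proj_A(x)\neq\emptyset$, which is automatic for the closed sets used elsewhere in the paper but is not among the lemma's hypotheses, whereas your chain quantifies over arbitrary $a\in A$ and even yields $\proj_A(x)\neq\emptyset$ as a byproduct. What the paper's version buys in exchange is narrative economy: the lemma reads as a direct consequence of the symmetry machinery just developed, rather than as a fresh computation.
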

\begin{proof}
First note that if $q\in\proj_A(x)$, then $\sgn(x_i) = \sgn(q_i)$ or $x_i = 0$ or $q_i = 0$ with Lemma~\ref{lem:proj_props}\ref{lem:proj_props_b}.
Hence for all $i\inint{1}{n}$ follows $\left(\abs{q_i} - \abs{x_i}\right)^2 = \left(q_i\cdot\sgn(q_i) - x_i\cdot\sgn(x_i)\right)^2 = \left(q_i - x_i\right)^2$, and therefore $\norm{q - x}_2^2 = \norm{\abs{q} - \abs{x}}_2^2$.
Furthermore, $\abs{q}\in A$ because of $A$ reflection-invariant and $\abs{q}\in\R_{\geq 0}^n$, so $\abs{q}\in B$.

\ref{lem:proj_nonneg_sols_a}
Let $p\in\proj_A(x)$ and $q\in B$, then $\abs{p}\in B$ and it has to be shown that $\norm{\abs{p} - \abs{x}}_2 \leq \norm{q - \abs{x}}_2$.
Define $I := \set{i\inint{1}{n} | x_i < 0}$ and $\tilde{q} := q - 2\sum_{i\in I}q_i e_i$, that is the signs of entries in $I$ are flipped.
Clearly $\tilde{q}\in A$, so in conjunction with the remark at the beginning of the proof follows $\norm{\abs{p} - \abs{x}}_2^2 = \norm{p - x}_2^2 \leq \norm{\tilde{q} - x}_2^2$.
For $i\not\in I$ one obtains $x_i\geq 0$ and $\tilde{q_i} = q_i$, hence $\tilde{q_i} - x_i = q_i - \abs{x_i}$.
For $i\in I$ follows $x_i < 0$ and $\tilde{q_i} = -q_i$, hence $\tilde{q_i} - x_i = -\left(q_i - \abs{x_i}\right)$.
This yields $\left(\tilde{q_i} - x_i\right)^2 = \left(q_i - \abs{x_i}\right)^2$ for all $i\inint{1}{n}$, thus $\norm{\tilde{q} - x}_2^2 = \norm{q - \abs{x}}_2^2$, and the claim follows.

\ref{lem:proj_nonneg_sols_b}
Let $p\in\proj_B(\abs{x})$.
If $i\inint{1}{n}$ with $x_i\geq 0$, then clearly $s_ip_i - x_i = p_i - \abs{x_i}$.
For $i\inint{1}{n}$ with $x_i < 0$ follows $s_ip_i - x_i = -\left(p_i - \abs{x_i}\right)$.
Therefore, $\norm{s\hada p - x}_2^2 = \norm{p - \abs{x}}_2^2$.
Let $q\in\proj_A(x)$, then the remark at the beginning of the proof yields $\norm{q - x}_2^2 = \norm{\abs{q} - \abs{x}}_2^2$ and $\abs{q}\in B$.
$p\in\proj_B(\abs{x})$ yields $\norm{p - \abs{x}}_2^2 \leq \norm{\abs{q} - \abs{x}}_2^2$, and the claim follows.
\end{proof}
Using this result immediately yields a condition for projections to be absolutely order-preserving:
\begin{lemma}
Let $\emptyset\neq M\subseteq\R^n$ be both permutation-invariant and reflection-invariant.
Then the function $p\colon\R^n\to\P(M)$, $x\mapsto\proj_M(x)$, is absolutely order-preserving.
\end{lemma}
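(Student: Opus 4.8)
The plan is to reduce the absolute statement to the already-established order-preservation of projections onto permutation-invariant sets, by passing through the non-negative companion set $B := M \cap \R_{\geq 0}^n$ and the correspondence between projections onto $M$ and onto $B$ furnished by Lemma~\ref{lem:proj_nonneg_sols}. The key observation is that absolute order-preservation is nothing but ordinary order-preservation once all signs have been stripped away, and that stripping signs is exactly what Lemma~\ref{lem:proj_nonneg_sols}\ref{lem:proj_nonneg_sols_a} accomplishes.

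Concretely, I would fix $x \in \R^n$ and $p \in \proj_M(x)$, and set $B := M \cap \R_{\geq 0}^n$. First I would record that $B$ is permutation-invariant: $M$ is permutation-invariant by assumption and $\R_{\geq 0}^n$ is trivially permutation-invariant, so Remark~\ref{rem:symmetries_closed} applies to the intersection; moreover $B \neq \emptyset$, since $\abs{q} \in B$ for any $q \in M$ by reflection-invariance. Next, because $M$ is reflection-invariant, Lemma~\ref{lem:proj_nonneg_sols}\ref{lem:proj_nonneg_sols_a} yields $\abs{p} \in \proj_B(\abs{x})$. Finally, since $B$ is permutation-invariant, Lemma~\ref{lem:proj_props}\ref{lem:proj_props_a} guarantees that the projection onto $B$ is order-preserving; applying this to the point $\abs{x}$ and its projection $\abs{p}$, whenever $\abs{x_i} > \abs{x_j}$ one reads off $\abs{p_i} \geq \abs{p_j}$, which is precisely the defining inequality for absolute order-preservation.

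I do not expect a genuine obstacle here: the entire content is the change of viewpoint from $M$ to $B$, after which the result is immediate from the two lemmas. The only point requiring a moment's care is the bookkeeping of the reduction — verifying that $B$ inherits permutation-invariance and nonemptiness, and that Lemma~\ref{lem:proj_nonneg_sols}\ref{lem:proj_nonneg_sols_a} is invoked for exactly the pair at hand so that $\abs{p}$ genuinely lands in $\proj_B(\abs{x})$, and not merely in $B$.
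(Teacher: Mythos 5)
Your proof is correct and follows exactly the paper's own argument: define the non-negative companion set $M\cap\R_{\geq 0}^n$ (permutation-invariant by Remark~\ref{rem:symmetries_closed}), use Lemma~\ref{lem:proj_nonneg_sols}\ref{lem:proj_nonneg_sols_a} to place $\abs{p}$ in its projection of $\abs{x}$, and conclude with the order-preservation from Lemma~\ref{lem:proj_props}\ref{lem:proj_props_a}. Your additional remarks on nonemptiness and on which part of Lemma~\ref{lem:proj_nonneg_sols} is invoked are sound bookkeeping that the paper leaves implicit.
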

\begin{proof}
Let $x\in\R^n$, $p\in\proj_M(x)$, and $i,j\inint{1}{n}$ with $\abs{x_i} > \abs{x_j}$.
Define $L := M\cap\R_{\geq 0}^n$, which is permutation-invariant with Remark~\ref{rem:symmetries_closed}.
Lemma~\ref{lem:proj_nonneg_sols} implies that $\abs{p}\in\proj_L(\abs{x})$, and with Lemma~\ref{lem:proj_props} follows $\abs{p_i}\geq\abs{p_j}$.
\end{proof}
The application of these elementary results to projections onto sets on which functions achieve constant values is straightforward.
Examples were given in Section~\ref{sect:projalgorithms} with the sets $Z$ and $S^{(\lambda_1,\lambda_2)}$.

\section{Proof of Correctness of Algorithm~\ref{alg:projfunc} and Algorithm~\ref{alg:projfunc_explicit}}
\label{sect:projfuncproof}
The purpose of this appendix is to rigorously prove correctness of Algorithm~\ref{alg:projfunc} and Algorithm~\ref{alg:projfunc_explicit}, that is that they compute projections onto $S_{\geq 0}^{(\lambda_1,\lambda_2)}$.
Projections onto $S^{(\lambda_1,\lambda_2)}$ can then be inferred easily as explained in Appendix~\ref{sect:proj_symm}.

\subsection{Geometric Structures and First Considerations}
\label{sect:geomstruct}
The aim is to compute projections onto $D$, which is the intersection of the non-negative orthant $\R_{\geq 0}^n$, the target hyperplane $H$ and the target hypersphere $K$, see Section~\ref{sect:alternating_projections}.
Further, the intersection of $H$ and $K$ yields a hypercircle $L$, and the intersection of $\R_{\geq 0}^n$ and $H$ yields a scaled canonical simplex $C$.
The structure of $H$ and $L$ will be analyzed in Section~\ref{sect:l1nrmcnstrnt} and Section~\ref{sect:l2nrmcnstrnt}, respectively.
The properties of $C$ are discussed in Section~\ref{sect:splx_geom} and Section~\ref{sect:selfsim_rec}.
These results will then be used in Section~\ref{sect:proof_projfunc_thms} to prove Theorem~\ref{thm:projfunc} and Theorem~\ref{thm:projfunc_improved}.

For the analysis of subsets where certain coordinates vanish, it is useful to define the following quantities for an index set $I\subseteq\discint{1}{n}$ with cardinality $d := \abs{I}$.
The corresponding face of $C$ is denoted by $C_I := \set{c\in C | c_i = 0\text{ for all }i\not\in I}$ and has barycenter $m_I := \nicefrac{\lambda_1}{d}\cdot\sum_{i\in I}e_i\in C_I$.
Further, $L_I := \set{a\in L | a_i = 0\text{ for all }i\not\in I}$ denotes the hypercircle with according vanishing entries, and $\rho_I := \lambda_2^2 - \nicefrac{\lambda_1^2}{d}$ is the squared radius of $L_I$.
Note that $m_I$ is also the barycenter of $L_I$.

With these definitions the intermediate goal is now to prove that projections onto $D$ can be computed by alternating projections onto the geometric structures defined earlier.
The idea is to show that the set of solutions is not tampered by alternating projections onto $H$, $C$, $L$ and $L_I$.

\subsubsection{\textit{L}\textsubscript{1} Norm Constraint---Target Hyperplane}
\label{sect:l1nrmcnstrnt}
First, the projection onto the target hyperplane $H$ is considered.
Lemma~\ref{lem:plane} is an elaborated version of a result from \cite{Theis2005}, which is included here for completeness.
Using its statements, it can be assumed that the considered point lies on $H$ without modification of the solution set of the projection onto the target set $D$.
\begin{lemma}
\label{lem:plane}
Let $x\in\R^n$.
Then the following holds:
\begin{enumerate}
\item \label{lem:plane_a}
$\proj_H(x) = x + \nicefrac{1}{n}\cdot\left(\lambda_1 - e\transp x\right)e$.

\item \label{lem:plane_b}
Let $r := \proj_H(x)$. Then $\proj_D(x) = \proj_D(r)$.
\end{enumerate}
\end{lemma}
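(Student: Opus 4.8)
The plan is to treat the two parts separately, exploiting throughout that $H$ is an affine hyperplane with normal direction $e$ and that the target set satisfies $D \subseteq H$.

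For part~\ref{lem:plane_a}, I would first set $r := x + \nicefrac{1}{n}\cdot\left(\lambda_1 - e\transp x\right)e$ and verify feasibility: since $e\transp e = n$, a direct computation gives $e\transp r = e\transp x + \nicefrac{1}{n}\cdot\left(\lambda_1 - e\transp x\right)\cdot n = \lambda_1$, so $r \in H$. To confirm that $r$ is the unique closest point, the key observation is that the correction $r - x = \nicefrac{1}{n}\cdot\left(\lambda_1 - e\transp x\right)e$ is a scalar multiple of the normal $e$, whereas every $z \in H$ satisfies $e\transp(z - r) = \lambda_1 - \lambda_1 = 0$, so $z - r$ is orthogonal to $e$. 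Hence $\scp{z - r}{r - x} = 0$, and applying Proposition~\ref{prop:sqnrm} with $a := z$, $b := x$, $p := r$ yields $\norm{z - x}_2^2 = \norm{z - r}_2^2 + \norm{r - x}_2^2$. This is minimized over $z \in H$ precisely when $z = r$, establishing both the formula and uniqueness at once.

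For part~\ref{lem:plane_b}, I would use the same orthogonality, now leveraging $D = \R_{\geq 0}^n \cap H \cap K \subseteq H$. For any candidate $y \in D$, both $y$ and $r$ lie in $H$, so again $y - r$ is orthogonal to $e$ while $r - x$ remains a multiple of $e$; thus $\scp{y - r}{r - x} = 0$. A second application of Proposition~\ref{prop:sqnrm} with $a := y$, $b := x$, $p := r$ gives $\norm{y - x}_2^2 = \norm{y - r}_2^2 + \norm{r - x}_2^2$. Since $\norm{r - x}_2^2$ is independent of $y$, the objectives $y \mapsto \norm{y - x}_2$ and $y \mapsto \norm{y - r}_2$ differ by an additive constant on $D$, and therefore share the same set of minimizers; hence $\proj_D(x) = \proj_D(r)$ as sets.

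I do not expect a serious obstacle, as both claims reduce to the single geometric fact that $r - x$ is parallel to $e$ while every relevant competitor vector lies in $H$ and is orthogonal to $e$. The only point needing care is to phrase part~\ref{lem:plane_b} at the level of solution \emph{sets} rather than single points, since $\proj_D$ need not be single-valued; this is handled cleanly because the two objectives differ by a constant on $D$.
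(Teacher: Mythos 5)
Your proof is correct and follows essentially the same route as the paper: both rest on the observation that $r - x$ is parallel to the normal $e$ while differences of points in $H \supseteq D$ are orthogonal to $e$, combined with Proposition~\ref{prop:sqnrm} to split the squared distances. Your phrasing of part~\ref{lem:plane_b} via objectives differing by an additive constant (on squared norms) is a slightly tidier packaging of the paper's two-sided inclusion argument, and you additionally spell out part~\ref{lem:plane_a}, which the paper only asserts as standard.
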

\begin{proof}
\ref{lem:plane_a}
This is essentially a projection onto a hyperplane, yielding a unique result.

\ref{lem:plane_b}
With~\ref{lem:plane_a} follows $r - x = \nicefrac{1}{n}\cdot\left(\lambda_1 - e\transp x\right)e$.
Hence $\scp{h}{r - x} = \nicefrac{\lambda_1}{n}\cdot\left(\lambda_1 - e\transp x\right)$ holds for arbitrary $h\in H$.
This expression is independent of the entries of $h$, which yields $\scp{a - b}{r - x} = 0$ for every $a,b\in H$.

Now let $p\in\proj_D(x)$, that is $\norm{p - x}_2 \leq \norm{q - x}_2$ for all $q\in D$.
Let $q\in D$ be arbitrary.
With $D\subseteq H$ follows $\scp{q - p}{r - x} = 0$, and thus Proposition~\ref{prop:sqnrm} yields $\norm{q - r}_2^2 - \norm{p - r}_2^2 = \norm{q - x}_2^2 - \norm{p - x}_2^2 + 2\scp{q - p}{x - r} = \norm{q - x}_2^2 - \norm{p - x}_2^2 \geq 0$, hence $\norm{p - r}_2^2 \leq \norm{q - r}_2^2$, so $p\in\proj_D(r)$.

For the converse let $p\in\proj_D(r)$.
Analogously $\norm{q - x}_2^2 - \norm{p - x}_2^2 = \norm{q - r}_2^2 - \norm{p - r}_2^2 \geq 0$, hence $p\in\proj_D(x)$.
\end{proof}
Therefore, the barycenter $m$ is the projection of the origin onto $H$.
The next remark gathers additional information on the norm of $m$ and dot products with this point.
\begin{remark}
\label{rem:point_m}
It is $\norm{m}_2^2 = \nicefrac{\lambda_1^2}{n}$.
Further, $\scp{m}{h} = \nicefrac{\lambda_1^2}{n}$ for all $h\in H$, and thus $\norm{h - m}_2^2 = \norm{h}_2^2 - \nicefrac{\lambda_1^2}{n}$ with Proposition~\ref{prop:sqnrm}.
\end{remark}

\subsubsection{\textit{L}\textsubscript{1} and \textit{L}\textsubscript{2} Norm Constraint---Target Hypersphere}
\label{sect:l2nrmcnstrnt}
After the projection onto the target hyperplane $H$ has been carried out, consider now the joint constraint of $H$ and the target hypersphere $K$.
First note that $L = H\cap K$ is a hypercircle, that is a hypersphere in the subspace $H$, with intrinsic dimensionality reduced by one:
\begin{lemma}
\label{lem:intersection_sphere_plane}
Consider $L = H\cap K$ and $\rho = \lambda_2^2 - \nicefrac{\lambda_1^2}{n}$.
Then the following holds:
\begin{enumerate}
\item \label{lem:intersection_sphere_plane_a}
$L = \tilde{L} := \set{q\in H | \norm{q - m}_2^2 = \rho}$.

\item \label{lem:intersection_sphere_plane_b}
$L\neq\emptyset$ if and only if $\lambda_2 \geq \nicefrac{\lambda_1}{\sqrt{n}}$.
\end{enumerate}
\end{lemma}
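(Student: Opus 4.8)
The plan is to reduce both claims to the single algebraic identity supplied by Remark~\ref{rem:point_m}, namely that $\norm{q - m}_2^2 = \norm{q}_2^2 - \nicefrac{\lambda_1^2}{n}$ holds for every $q\in H$. Everything else is bookkeeping, so the proof should be short.

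For part~\ref{lem:intersection_sphere_plane_a} I would take an arbitrary $q\in H$ and rewrite the $L_2$ constraint. Since $q\in K$ is equivalent to $\norm{q}_2^2 = \lambda_2^2$, substituting the identity above converts this condition into $\norm{q - m}_2^2 = \lambda_2^2 - \nicefrac{\lambda_1^2}{n} = \rho$. Both $L = H\cap K$ and $\tilde L$ are carved out of the same hyperplane $H$, and their remaining defining conditions have just been shown to agree pointwise on $H$; hence the two sets coincide. This direction is genuinely a one-line computation once Remark~\ref{rem:point_m} is available.

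For part~\ref{lem:intersection_sphere_plane_b} I would first note that $m\in H$, since $e\transp m = \nicefrac{\lambda_1}{n}\cdot e\transp e = \lambda_1$. By part~\ref{lem:intersection_sphere_plane_a}, $L = \tilde L$ is then exactly the Euclidean sphere of squared radius $\rho$ centred at $m$ inside the affine hyperplane $H$, and such a sphere is nonempty precisely when $\rho\geq 0$. The necessity direction is immediate: if $q\in\tilde L$ then $\rho = \norm{q - m}_2^2\geq 0$, so $\lambda_2^2\geq\nicefrac{\lambda_1^2}{n}$. For sufficiency I would exhibit an explicit witness: the hyperplane $H$ is the translate by $m$ of the orthogonal complement of $e$, which has dimension $n-1$, so any unit vector $v$ with $e\transp v = 0$ (for instance $v := \nicefrac{1}{\sqrt{2}}\cdot(e_1 - e_2)$) yields $q := m + \sqrt{\rho}\cdot v\in H$ with $\norm{q - m}_2^2 = \rho$, hence $q\in\tilde L = L$. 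Finally, because both $\lambda_1,\lambda_2>0$, the scalar equivalence $\rho\geq 0\iff\lambda_2\geq\nicefrac{\lambda_1}{\sqrt{n}}$ follows from monotonicity of squaring on the non-negative reals.

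The only genuinely delicate point is the existence of the direction $v$, that is, that $H$ has positive intrinsic dimension. This fails solely in the degenerate case $n = 1$, where $H = \set{m}$ collapses to a point and $L$ is nonempty exactly when $\rho = 0$. I would simply remark that this edge case is consistent with the standing hypothesis $\lambda_2\leq\lambda_1\leq\sqrt{n}\lambda_2$, which for $n = 1$ forces $\lambda_1 = \lambda_2$ and hence $\rho = 0$, so no separate argument is required in the context of this paper.
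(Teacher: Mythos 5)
Your proposal is correct and follows essentially the same route as the paper: part~(a) is read off from the identity in Remark~\ref{rem:point_m}, and part~(b) reduces to the equivalence of $L\neq\emptyset$ with $\rho\geq 0$, settled by positivity of $\lambda_1,\lambda_2$. The only difference is cosmetic and in your favor --- you supply an explicit witness $m + \sqrt{\rho}\cdot v$ and flag the degenerate case $n=1$, both of which the paper's proof (which factors $\rho$ as a difference of squares and treats nonemptiness of the sphere in $H$ as immediate) leaves implicit.
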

\begin{proof}
\ref{lem:intersection_sphere_plane_a}
Follows immediately from Remark~\ref{rem:point_m}.

\ref{lem:intersection_sphere_plane_b}
$L$ is nonempty if and only if $\rho\geq 0$ using~\ref{lem:intersection_sphere_plane_a},
and $\rho = \left(\lambda_2 + \nicefrac{\lambda_1}{\sqrt{n}}\right)\left(\lambda_2 - \nicefrac{\lambda_1}{\sqrt{n}}\right)$.
Hence, with $\lambda_1,\lambda_2 > 0$ one obtains $\rho\geq 0$ if and only if $\lambda_2 - \nicefrac{\lambda_1}{\sqrt{n}} \geq 0$.
\end{proof}
Hence $L\neq\emptyset$ by the requirement that $\lambda_2 \leq \lambda_1 \leq \sqrt{n}\lambda_2$.
Further, the following observation follows immediately from Proposition~\ref{prop:sqnrm} and $\norm{a}_2 = \norm{b}_2 = \lambda_2$ for all $a,b\in L$:
\begin{remark}
\label{rem:scpinL}
For all $a,b\in L$ it is $\norm{a - b}_2^2 = 2\left(\lambda_2^2 - \scp{a}{b}\right)$, hence $\scp{a}{b} = \lambda_2^2 - \nicefrac{1}{2}\cdot\norm{a - b}_2^2$.
\end{remark}
Therefore, on $L$ the dot product is equal to the Euclidean norm up to an additive constant.
Next consider projections onto $L$ and note that the solution set with respect to $D$ is not changed by this operation.
The major arguments for this result have been taken over from \cite{Theis2005}.
Here, the statements from Lemma~\ref{lem:intersection_sphere_plane} have been incorporated and the resulting quadratic equation was solved explicitly, simplifying the original version of \cite{Theis2005}.
\begin{lemma}
\label{lem:sphere}
Let $r\in H$ with $r\neq m$. Let $s := m + \delta(r - m)$ where $\delta := \nicefrac{\sqrt{\rho}}{\norm{r - m}_2}$.
Then:
\begin{enumerate}
\item \label{lem:sphere_a}
$\delta > 0$, $s\in L$, and $\norm{q - r}_2^2 - \norm{s - r}_2^2 = \nicefrac{1}{\delta}\cdot\norm{q - s}_2^2$ for all $q\in L$.

\item \label{lem:sphere_b}
$s = \proj_L(r)$.

\item \label{lem:sphere_c}
$\proj_D(r) = \proj_D(s)$.
\end{enumerate}
\end{lemma}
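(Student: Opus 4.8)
The plan is to treat the three parts in order, loading essentially all of the geometric content into part~\ref{lem:sphere_a} and then reading off parts~\ref{lem:sphere_b} and~\ref{lem:sphere_c} as short consequences of the distance identity proved there. First I would dispose of the easy assertions of~\ref{lem:sphere_a}. Positivity of $\delta$ is immediate, since $r\neq m$ gives $\norm{r - m}_2 > 0$ and $\rho > 0$ in the nondegenerate case, so $\delta = \nicefrac{\sqrt{\rho}}{\norm{r - m}_2} > 0$. For $s\in L$ I would invoke the characterization $L = \set{q\in H | \norm{q - m}_2^2 = \rho}$ from Lemma~\ref{lem:intersection_sphere_plane}\ref{lem:intersection_sphere_plane_a}: membership in $H$ holds because $s = (1 - \delta)m + \delta r$ is an affine combination of $m, r\in H$ and hence $e\transp s = \lambda_1$, while $\norm{s - m}_2^2 = \delta^2\norm{r - m}_2^2 = \rho$ by the very definition of $\delta$.

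The heart of the argument is the identity $\norm{q - r}_2^2 - \norm{s - r}_2^2 = \nicefrac{1}{\delta}\cdot\norm{q - s}_2^2$ for $q\in L$. Here I would first record two auxiliary relations. Since $s - m = \delta(r - m)$, solving for $r - m$ and substituting gives $s - r = -\nicefrac{(1 - \delta)}{\delta}\cdot(s - m)$. Next, because $q$ and $s$ both lie on the sphere of squared radius $\rho$ about $m$, expanding $\norm{q - s}_2^2 = \norm{(q - m) - (s - m)}_2^2$ with Proposition~\ref{prop:sqnrm} and using $\norm{q - m}_2^2 = \norm{s - m}_2^2 = \rho$ yields $\scp{q - s}{s - m} = -\nicefrac{1}{2}\cdot\norm{q - s}_2^2$. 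Now anchoring Proposition~\ref{prop:sqnrm} at $s$, that is writing $\norm{q - r}_2^2 = \norm{q - s}_2^2 + \norm{s - r}_2^2 + 2\scp{q - s}{s - r}$, the two relations turn the cross term into $2\scp{q - s}{s - r} = \nicefrac{(1 - \delta)}{\delta}\cdot\norm{q - s}_2^2$, and collecting coefficients of $\norm{q - s}_2^2$ gives exactly $\nicefrac{1}{\delta}$, as required.

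Part~\ref{lem:sphere_b} then follows at once: as $\delta > 0$, the identity shows $\norm{q - r}_2^2 \geq \norm{s - r}_2^2$ for every $q\in L$, with equality precisely when $\norm{q - s}_2 = 0$, i.e.\ $q = s$; hence $s$ is the unique element of $\proj_L(r)$. For part~\ref{lem:sphere_c} I would use that $D = \R_{\geq 0}^n\cap H\cap K\subseteq L$, so the identity of~\ref{lem:sphere_a} applies to every $p\in D$, giving $\norm{p - r}_2^2 = \norm{s - r}_2^2 + \nicefrac{1}{\delta}\cdot\norm{p - s}_2^2$. Comparing this relation for two points $p, q\in D$, the common summand $\norm{s - r}_2^2$ cancels and the positive factor $\nicefrac{1}{\delta}$ preserves the direction of the inequality, so $\norm{p - r}_2 \leq \norm{q - r}_2$ holds if and only if $\norm{p - s}_2 \leq \norm{q - s}_2$. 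Therefore a point minimizes distance to $r$ over $D$ exactly when it minimizes distance to $s$ over $D$, which is $\proj_D(r) = \proj_D(s)$.

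The only genuinely delicate step is the distance identity in part~\ref{lem:sphere_a}; once the expression for $s - r$ and the spherical inner-product formula $\scp{q - s}{s - m} = -\nicefrac{1}{2}\cdot\norm{q - s}_2^2$ are available, the rest is bookkeeping with Proposition~\ref{prop:sqnrm}. The payoff of phrasing~\ref{lem:sphere_a} as an exact identity rather than a mere inequality is that parts~\ref{lem:sphere_b} and~\ref{lem:sphere_c} become essentially free.
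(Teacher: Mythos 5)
Your proof is correct, and its overall architecture---prove the exact distance identity in~\ref{lem:sphere_a}, then obtain~\ref{lem:sphere_b} from positive definiteness and~\ref{lem:sphere_c} by comparing distances over $D\subseteq L$---coincides with the paper's; your parts~\ref{lem:sphere_b} and~\ref{lem:sphere_c} are essentially the paper's arguments. Where you genuinely differ is in how the identity is derived. The paper first reduces $\norm{q-r}_2^2 - \norm{s-r}_2^2$ to $2\scp{s-q}{r}$ (using $\norm{q}_2 = \norm{s}_2$ for $q,s\in K$) and then evaluates $\scp{s}{r}$ and $\scp{q}{r}$ explicitly through Remark~\ref{rem:point_m} and Remark~\ref{rem:scpinL}, i.e.\ through the specific constants $\nicefrac{\lambda_1^2}{n}$ and $\lambda_2^2$. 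You instead anchor Proposition~\ref{prop:sqnrm} at $s$ and compute the cross term from two intrinsic facts: $s-r = -\nicefrac{(1-\delta)}{\delta}\cdot(s-m)$, and the chord relation $\scp{q-s}{s-m} = -\nicefrac{1}{2}\cdot\norm{q-s}_2^2$ for two points on the sphere about $m$. Both computations check out, but yours never touches the particular values of $\lambda_1$, $\lambda_2$ or $n$---it only uses that $L$ is a sphere of squared radius $\rho$ about $m$ inside an affine flat---so it transfers verbatim to the lower-dimensional spheres $L_I$ (which the paper handles in Lemma~\ref{lem:splxrcsn} by mapping down to $\R^d$ and re-invoking this lemma). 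One shared loose end: asserting $\delta > 0$ requires $\rho > 0$, which fails in the degenerate case $\lambda_1 = \sqrt{n}\lambda_2$ where $L = \set{m}$; you flag this (``in the nondegenerate case'') with exactly the same level of informality as the paper, which simply writes $\delta > 0$ ``because of $r\neq m$'', so this is not a gap relative to the paper's own standard.
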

\begin{proof}
\ref{lem:sphere_a}
First note that $\delta > 0$ because of $r\neq m$.
Clearly $s = (1 - \delta)m + \delta r$.
Further, $s\in H$ because of $e\transp s = \lambda_1$, and $s\in L$ because of $\norm{s - m}_2^2 = \rho$ and Lemma~\ref{lem:intersection_sphere_plane}.

Let $q\in L$ be arbitrary.
One obtains $\norm{q}_2 = \norm{s}_2$ with $q,s\in K$ and therefore application of Proposition~\ref{prop:sqnrm} yields $\norm{q - r}_2^2 - \norm{s - r}_2^2 = 2\scp{s - q}{r}$.
With Remark~\ref{rem:point_m} follows $\scp{m}{r} = \nicefrac{\lambda_1^2}{n}$ and $\norm{r}_2^2 = \norm{r - m}_2^2 + \nicefrac{\lambda_1^2}{n}$.
Hence, $\scp{s}{r} = (1 - \delta)\scp{m}{r} + \delta\scp{r}{r} = \nicefrac{\lambda_1^2}{n} + \delta\norm{r - m}_2^2$.
On the other hand, from $s - m = \delta(r - m)$ and hence $r = \left(1 - \nicefrac{1}{\delta}\right)m + \nicefrac{1}{\delta}\cdot s$, and using Remark~\ref{rem:scpinL} it follows that $\scp{q}{r} = \left(1 - \nicefrac{1}{\delta}\right)\nicefrac{\lambda_1^2}{n} + \nicefrac{1}{\delta}\cdot\big(\lambda_2^2 - \nicefrac{1}{2}\cdot\norm{q - s}_2^2\big)$.
Therefore with $\delta\norm{r - m}_2^2 = \nicefrac{\rho}{\delta}$ one obtains
\begin{displaymath}
  \scp{s - q}{r}
  = \delta\norm{r - m}_2^2 + \nicefrac{1}{\delta}\cdot\big(\nicefrac{\lambda_1^2}{n} - \lambda_2^2 + \nicefrac{1}{2}\cdot\norm{q - s}_2^2\big)
  = \tfrac{1}{2\delta}\norm{q - s}_2^2\text{,}
\end{displaymath}
and the claim follows directly by substitution.

\ref{lem:sphere_b}
Let $q\in L$, then~\ref{lem:sphere_a} implies $\norm{q - r}_2^2 - \norm{s - r}_2^2 = \nicefrac{1}{\delta}\cdot\norm{q - s}_2^2 \geq 0$ with equality if and only if $q = s$ because of $\norm{\cdot}_2$ being positive definite. Thus $s$ is the unique projection of $r$ onto $L$.

\ref{lem:sphere_c}
With~\ref{lem:sphere_a} follows $\norm{q - s}_2^2 - \norm{p - s}_2^2 = \delta\big(\norm{q - r}_2^2 - \norm{p - r}_2^2\big)$ for all $p,q\in D$ because of $D\subseteq L$.
For $\proj_D(r) \subseteq \proj_D(s)$, let $p\in\proj_D(r)$ and $q\in D$.
By definition $\norm{p - r}_2^2 \leq \norm{q - r}_2^2$, and thus $\norm{q - s}_2^2 - \norm{p - s}_2^2 \geq 0$ with $\delta > 0$, hence $p\in\proj_D(s)$.
For the converse, let $p\in\proj_D(s)$ and $q\in D$.
Similarly, $\norm{q - r}_2^2 - \norm{p - r}_2^2 = \nicefrac{1}{\delta}\cdot\big(\norm{q - s}_2^2 - \norm{p - s}_2^2\big) \geq 0$, thus $p\in\proj_D(r)$.
\end{proof}
Lemma~\ref{lem:sphere} does not hold when $r = m$, which forms a null set.
In practice, however, this can occur when the input vector $x$ for Algorithm~\ref{alg:projfunc} is poorly chosen, for example if all entries are equal.
In this case, $\proj_L(r) = L$, hence any point from $L$ can be chosen for further processing.
\begin{remark}
\label{rem:projmontoL}
One possibility in the case $r = m$ would be to choose the point $s := \alpha\sum_{i = 1}^{n-1}e_i + \beta e_n$ where $\alpha,\beta\in\R$ for $s\in\proj_L(r)$, that is forcing the last entry to be unequal to the other ones.
For satisfying $s\in L$, set $\alpha := \nicefrac{\lambda_1}{n} + \nicefrac{\sqrt{\rho}}{\sqrt{n(n-1)}}$ and $\beta := \lambda_1 - \alpha(n-1) = \nicefrac{\lambda_1}{n} - \nicefrac{\sqrt{\rho(n-1)}}{\sqrt{n}}$.
This yields $\alpha - \beta = \sqrt{\rho}\left(\nicefrac{1}{\sqrt{n(n-1)}} + \nicefrac{\sqrt{n-1}}{\sqrt{n}}\right) > 0$, hence $\alpha\neq \beta$.
This choice has the convenient side effect of $s$ being sorted in descending order.
\end{remark}
Combining these properties of $H$ and $L$, it can now be shown that projections onto $D$ are invariant to affine-linear transformations with positive scaling:
\begin{corollary}
\label{cor:affine_invariance_projection}
Let $\alpha > 0$, $\beta\in\R$ and $x\in\R^n$.
Then $\proj_D(\alpha x + \beta e) = \proj_D(x)$.
\end{corollary}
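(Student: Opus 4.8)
The plan is to decompose the affine map into a shift by $\beta e$ and a positive scaling by $\alpha$, and to absorb each of these using the two reduction lemmas already at hand: the shift will be neutralized by the projection onto the hyperplane $H$ via Lemma~\ref{lem:plane}\ref{lem:plane_b}, and the positive scaling will be neutralized by the projection onto the hypercircle $L$ via Lemma~\ref{lem:sphere}\ref{lem:sphere_c}. Concretely, I would write $y := \alpha x + \beta e$ and set $r_x := \proj_H(x)$ and $r_y := \proj_H(y)$. Since $\proj_D(x) = \proj_D(r_x)$ and $\proj_D(y) = \proj_D(r_y)$ by Lemma~\ref{lem:plane}\ref{lem:plane_b}, it suffices to show $\proj_D(r_x) = \proj_D(r_y)$.

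First I would compute $r_x$ and $r_y$ explicitly from Lemma~\ref{lem:plane}\ref{lem:plane_a}. Using $e\transp y = \alpha e\transp x + \beta n$, the constant offset added by the hyperplane projection cancels the shift $\beta e$ exactly, and a short calculation gives $r_y - m = \alpha(r_x - m)$, where $m = \nicefrac{\lambda_1}{n}\cdot e$ is the common barycenter. In other words, $r_x$ and $r_y$ lie on the same ray emanating from $m$, related by the positive factor $\alpha$. This is the key identity: it is exactly the point where the cancellation of $\beta$ and the preservation of the positive factor $\alpha$ both become visible.

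Next I would feed this into the projection onto $L$. Assuming the generic case $r_x \neq m$ (hence also $r_y \neq m$), Lemma~\ref{lem:sphere}\ref{lem:sphere_b} gives the closed form $\proj_L(r) = m + \nicefrac{\sqrt{\rho}}{\norm{r - m}_2}\cdot(r - m)$; substituting $r_y - m = \alpha(r_x - m)$ and $\norm{r_y - m}_2 = \alpha\norm{r_x - m}_2$ shows that the factor $\alpha$ cancels, so $\proj_L(r_y) = \proj_L(r_x) =: s$. Lemma~\ref{lem:sphere}\ref{lem:sphere_c} then yields $\proj_D(r_x) = \proj_D(s) = \proj_D(r_y)$, and chaining this with the hyperplane reduction gives $\proj_D(y) = \proj_D(x)$, as claimed.

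The only obstacle is the degenerate case $r_x = m$, where Lemma~\ref{lem:sphere} does not apply because the projection onto $L$ ceases to be single-valued (indeed $\proj_L(m) = L$). However, the identity $r_y - m = \alpha(r_x - m)$ forces $r_y = m$ as well, so $r_x = r_y = m$ and the equality $\proj_D(r_x) = \proj_D(r_y)$ holds trivially. Thus this case requires no separate analysis beyond observing that the two hyperplane projections coincide, and the argument is complete.
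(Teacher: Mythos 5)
Your proposal is correct and follows essentially the same route as the paper's own proof: reduce via Lemma~\ref{lem:plane}\ref{lem:plane_b} and Lemma~\ref{lem:sphere}\ref{lem:sphere_c}, observe that the hyperplane projection cancels the shift $\beta e$, and show the scaling $\alpha$ cancels in the closed-form projection onto $L$ (your ray identity $r_y - m = \alpha(r_x - m)$ is just a tidier packaging of the paper's computation of $\norm{\tilde{r} - m}_2$ and the ratio $\nicefrac{\delta}{\tilde{\delta}} = \alpha$). Your explicit treatment of the degenerate case $r_x = m$, which the paper's proof silently skips, is a small but genuine improvement in completeness.
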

\begin{proof}
Let $\alpha > 0$, $\beta\in\R$ and $x\in\R^n$.
With Lemma~\ref{lem:plane} and Lemma~\ref{lem:sphere} it is enough to show that $\proj_L(\proj_H(\alpha x + \beta e)) = \proj_L(\proj_H(x))$.
Let $\tilde{x} := \alpha x + \beta e$, $\tilde{r} := \proj_H(\tilde{x})$ and $\tilde{s} := \proj_L(\tilde{r})$.
Lemma~\ref{lem:plane} and $e\transp e = n$ yield $\tilde{r} = \left(\alpha x + \beta e\right) + \nicefrac{1}{n}\cdot\left(\lambda_1 - \alpha e\transp x - \beta e\transp e\right)e = \alpha x + \nicefrac{1}{n}\cdot\left(\lambda_1 - \alpha e\transp x\right)e$.
Hence $\tilde{r}$ is independent of $\beta$.
Lemma~\ref{lem:sphere} yields $\tilde{s} = m + \tilde{\delta}\left(\tilde{r} - m\right)$, where $\tilde{\delta} := \nicefrac{\sqrt{\rho}}{\norm{\tilde{r} - m}_2}$.
Application of Proposition~\ref{prop:sqnrm} yields
\begin{align*}
  \norm{\tilde{r}}_2^2
  &= \norm{\alpha x}_2^2 + \norm{\nicefrac{1}{n}\cdot\left(\lambda_1 - \alpha e\transp x\right)e}_2^2 + 2\scp{\alpha x}{\nicefrac{1}{n}\cdot\left(\lambda_1 - \alpha e\transp x\right)e}\\
  &= \alpha^2\norm{x}_2^2 + \nicefrac{1}{n}\cdot\left(\lambda_1 - \alpha e\transp x\right)\left(\lambda_1 + \alpha e\transp x\right)
  = \alpha^2\norm{x}_2^2 + \nicefrac{1}{n}\cdot\left(\lambda_1^2 - \alpha^2 (e\transp x)^2\right)\text{,}
\end{align*}
and with Remark~\ref{rem:point_m} follows $\norm{\tilde{r} - m}_2^2 = \norm{\tilde{r}}_2^2 - \nicefrac{\lambda_1^2}{n} = \alpha^2\big(\norm{x}_2^2 - \nicefrac{1}{n}\cdot(e\transp x)^2\big)$.
Let $r := \proj_H(x)$ and $s := \proj_L(r)$, then Lemma~\ref{lem:plane} and Lemma~\ref{lem:sphere} imply $r = x + \nicefrac{1}{n}\cdot\left(\lambda_1 - e\transp x\right)e$ and $s = m + \delta\left(r - m\right)$, where $\delta := \nicefrac{\sqrt{\rho}}{\norm{r - m}_2}$.
Likewise $\norm{r - m}_2^2 = \norm{x}_2^2 - \nicefrac{1}{n}\cdot(e\transp x)^2$, and hence $\nicefrac{\delta}{\tilde{\delta}} = \nicefrac{\norm{\tilde{r} - m}_2}{\norm{r - m}_2} = \alpha$, where $\alpha > 0$ must hold.
This yields
\begin{displaymath}
  \tilde{s}
  = m + \tilde{\delta}\left(\tilde{r} - m\right)
  = m + \nicefrac{\delta}{\alpha}\cdot\left(\alpha x + \nicefrac{\lambda_1}{n}\cdot e - \nicefrac{\alpha}{n}\cdot e\transp x e - \nicefrac{\lambda_1}{n}\cdot e\right)
  = m + \delta\left(x - \nicefrac{1}{n}\cdot e\transp xe\right) = s\text{,}
\end{displaymath}
which shows that the projection is invariant.
\end{proof}
\begin{figure}[t]
  \centering
  \includegraphics[page=8]{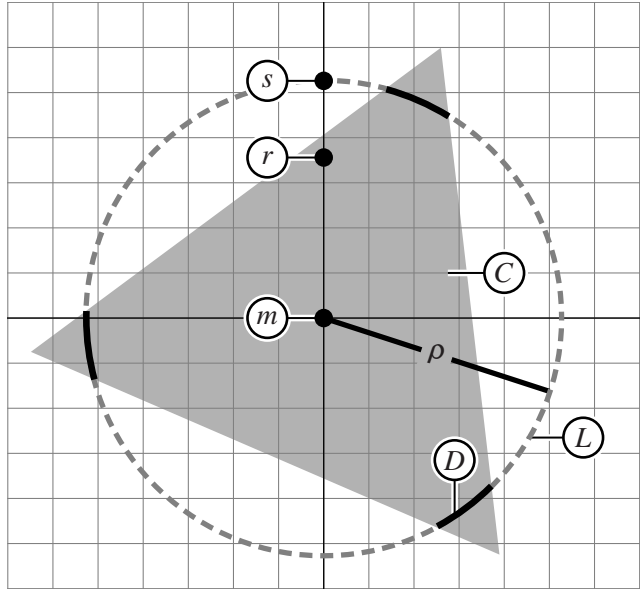}
  \caption{Sketch of the situation in Section~\ref{sect:geomstruct}, projected onto target hyperplane $H$. $r$ is the projection of the input point $x$ onto $H$. $s$ is the projection of $r$ onto the hypercircle $L$, which has squared radius $\rho$. The intersection of $H$ with the non-negative orthant is a simplex and denoted by $C$. The feasible set $D$ is the intersection of $C$ and $L$, and is marked with solid black lines. With Lemma~\ref{lem:plane} and Lemma~\ref{lem:sphere} follows that $\proj_D(x) = \proj_D(r) = \proj_D(s)$, hence the next steps consist of projecting $s$ onto $C$ for finding the projection of $x$ onto $D$.}
  \label{fig:theplot3d-nonneg}
\end{figure}
Therefore, shifting and positive scaling of the argument of Algorithm~\ref{alg:projfunc} do not change the outcome.
An overview of the steps carried out this far is given in Figure~\ref{fig:theplot3d-nonneg}.
Consider a point $x\in\R^n$ and $s := \proj_L(\proj_H(x))$.
When $s\in\R_{\geq 0}^n$, then already $s\in D$ and hence $s\in\proj_D(x)$.
Therefore only situations in which $s\not\in\R_{\geq 0}^n$ holds are relevant in the remainder of this discussion.

\subsection{Simplex Geometry}
\label{sect:splx_geom}
The joint constraint of the target hyperplane $H$ with non-negativity yields simplex $C$.
The following definition is likewise to definitions from \citet{Chen2011} and \citet{Michelot1986}:
\begin{definition}
For $n\in\N$, $n\geq 1$, the set $\splx^n := \set{\alpha\in\R_{\geq 0}^n | e\transp\alpha = 1}$ is called \emph{canonical $n$-simplex}.
\end{definition}
It is clear that $C = \R_{\geq 0}^n\cap H = \set{\lambda_1\alpha | \alpha\in\splx^n}$ is a scaled canonical simplex.
Further, for an index set $I\subseteq\discint{1}{n}$ the set $C_I = \set{c\in C | c_i = 0\text{ for all }i\not\in I}$ is a face of the simplex, which intrinsically possesses the structure of a simplex itself---although of reduced intrinsic dimensionality.
Consider the following observation on the topology of $C$ embedded in the subspace $H$:
\begin{proposition}
\label{prop:splx_boundary}
Let $c = \lambda_1\alpha\in C$ with $\alpha\in\splx^n$.
Then $c\in\partial C$ in the metric space $\left(H,\ \norm{\cdot}_2\right)$ if and only if there is a $j\inint{1}{n}$ with $\alpha_j = 0$.
\end{proposition}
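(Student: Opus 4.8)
The plan is to work entirely inside the metric space $\left(H,\ \norm{\cdot}_2\right)$ and to exploit that $C = \R_{\geq 0}^n\cap H$ is closed, so that $\partial C = C\setminus\operatorname{int}_H(C)$ with interior and closure taken relative to $H$. Since $c\in C$ holds by assumption, it is enough to show that $c$ is a relative interior point of $C$ exactly when all coordinates of $\alpha$ are strictly positive. I would therefore prove two implications separately: strict positivity of every $\alpha_j$ places $c$ in $\operatorname{int}_H(C)$, and a single vanishing $\alpha_{j_0}$ places $c$ on $\partial C$. Taking the contrapositive of the first and combining it with the second then yields the stated equivalence.

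For the interior direction, suppose $\alpha_j>0$, hence $c_j = \lambda_1\alpha_j>0$, for every $j\inint{1}{n}$, and set $\varepsilon := \min_j c_j>0$. For any $y\in H$ with $\norm{y - c}_2<\varepsilon$ one has $\abs{y_j - c_j}\leq\norm{y - c}_2<\varepsilon\leq c_j$, so $y_j>0$ for all $j$; together with $e\transp y = \lambda_1$ this gives $y\in\R_{\geq 0}^n\cap H = C$. Thus the relative ball of radius $\varepsilon$ about $c$ is contained in $C$, proving $c\in\operatorname{int}_H(C)$ and hence $c\notin\partial C$.

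For the boundary direction, assume $\alpha_{j_0}=0$, so $c_{j_0}=0$, for some $j_0$; this case is vacuous when $n = 1$, where necessarily $\alpha_1 = 1$, so I may assume $n\geq 2$ and choose an index $k\neq j_0$. The key idea is to perturb $c$ along the direction $v := e_k - e_{j_0}$, which satisfies $e\transp v = 0$ so that $c + tv\in H$ for all $t$. For $t>0$ the point $y_t := c + tv$ has $\left(y_t\right)_{j_0} = c_{j_0} - t = -t<0$, hence $y_t\notin\R_{\geq 0}^n$ and $y_t\notin C$, while $\norm{y_t - c}_2 = t\sqrt{2}\to 0$. Every relative neighbourhood of $c$ therefore meets both $H\setminus C$ and, through $c$ itself, $C$, so $c\in\partial C$.

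The only real subtlety, and the point I would be most careful about, is that the boundary must be read relative to the affine hyperplane $H$ and not relative to $\R^n$: as a lower-dimensional object $C$ has empty interior in $\R^n$ and would coincide with its own $\R^n$-boundary. This is exactly why the perturbation in the boundary direction is taken along a $v$ with $e\transp v = 0$, keeping the $L_1$ constraint intact while breaking only non-negativity, and why in the interior direction the constraint $e\transp y = \lambda_1$ is already supplied by $y\in H$, leaving positivity of the coordinates as the sole condition to verify. Beyond this relative-topology bookkeeping I expect no obstacle.
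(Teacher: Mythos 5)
Your proof is correct. Note that the paper itself gives no proof here---it states only that ``the proof is simple and omitted''---so there is no argument to compare against; your two-sided treatment (the coordinate-wise $\varepsilon$-ball argument showing strictly positive coordinates give a relative interior point, and the perturbation along $e_k - e_{j_0}$, which stays in $H$ but breaks non-negativity, showing a vanishing coordinate gives a boundary point) is exactly the elementary argument the authors presumably had in mind, and you handle the two points the omission glosses over: the boundary must be taken relative to $H$ rather than $\R^n$, and the $n=1$ edge case is vacuous.
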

The proof is simple and omitted as it does not contribute to deeper insight.
Hence the faces $C_I$ are subsets of $\partial C$, which is the topological border of $C$ in $\left(H,\ \norm{\cdot}_2\right)$.
Using Proposition~\ref{prop:splx_boundary} a statement on the inradius of $C$ can be made, which in turn can be used to show that for $n = 2$ no simplex projection has to be carried out at all:
\begin{proposition}
\label{prop:splx_inradius}
The squared inradius of $C$ is $\rho_{\insphere} := \tfrac{\lambda_1^2}{n(n-1)}$.
It is $L\subseteq C$ for $n = 2$.
\end{proposition}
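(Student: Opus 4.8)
The plan is to establish the two claims in turn. For the inradius, I would first invoke the permutation-invariance of $C$ to pin down the incenter. Since $C$ is invariant under every coordinate permutation, so is its unique inscribed ball, and hence its center must be a point of $H$ fixed by all permutations; the only such point is the barycenter $m = \nicefrac{\lambda_1}{n}\cdot e$. The squared inradius is then the squared distance from $m$ to the boundary $\partial C$. By Proposition~\ref{prop:splx_boundary}, $\partial C$ is the union of the facets $C_I$, and by the symmetry just used all facets are equidistant from $m$, so it suffices to measure the distance to one of them, say $C_I$ with $I := \discint{1}{n-1}$, the facet on which the last coordinate vanishes.

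Next I would locate the foot of the perpendicular from $m$ onto the affine hull of $C_I$ within $H$, and claim that it is exactly the barycenter $m_I = \nicefrac{\lambda_1}{n-1}\cdot\sum_{i=1}^{n-1}e_i$ of that face. To check this I would verify that $m - m_I$ is orthogonal to the linear subspace $\set{v\in\R^n | e\transp v = 0\text{ and }v_n = 0}$ parallel to the facet: for such $v$ the inner product $\scp{m - m_I}{v}$ collapses to a scalar multiple of $\sum_{i=1}^{n-1}v_i$, which equals $-v_n = 0$. Because $m_I$ has positive entries on $I$ and vanishes elsewhere, it genuinely lies in $C_I$, so the distance from $m$ to the facet coincides with the distance to its affine hull. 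Evaluating $\norm{m - m_I}_2^2$ then yields $\nicefrac{\lambda_1^2}{n(n-1)} = \rho_{\insphere}$, the desired squared inradius.

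For the case $n = 2$ the hyperplane $H$ is a line, and by Lemma~\ref{lem:intersection_sphere_plane} the set $L$ consists of the points of $H$ at distance $\sqrt{\rho}$ from $m$. The standing assumption $\lambda_2 \leq \lambda_1 \leq \sqrt{n}\lambda_2$ in particular gives $\lambda_2 \leq \lambda_1$, so $\rho = \lambda_2^2 - \nicefrac{\lambda_1^2}{2} \leq \nicefrac{\lambda_1^2}{2} = \rho_{\insphere}$. Hence every point of $L$ lies within the closed inscribed ball of $C$ centered at $m$, which is contained in $C$; therefore $L\subseteq C$.

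The computations themselves are routine. The point that needs care is the justification that the inradius at $m$ is realized on the facets rather than on a lower-dimensional face or from a different center. Permutation-invariance does this work: it forces the center to be $m$ and makes all facets equidistant, and the verification that the perpendicular foot $m_I$ actually falls inside $C_I$ guarantees that the minimal distance to $\partial C$ is attained there.
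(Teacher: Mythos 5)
Your proposal reaches the correct conclusions, and for the inradius it takes a genuinely different route from the paper. The paper obtains $\min_{c\in\partial C}\norm{m - c}_2^2 = \rho_{\insphere}$ by a Lagrange-multiplier argument and defers the claim that no center other than $m$ admits a larger inscribed ball to the projection machinery of the forthcoming Lemma~\ref{lem:subsplx_proj}. You replace the Lagrange multipliers by an explicit elementary computation: by permutation symmetry all facets are equidistant from $m$, the foot of the perpendicular from $m$ onto the affine hull of the facet $C_I$ with $I = \discint{1}{n-1}$ is the face barycenter $m_I$, this foot genuinely lies in $C_I$, and $\norm{m - m_I}_2^2 = \nicefrac{\lambda_1^2}{n(n-1)}$. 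That is more self-contained and more informative than the paper's sketch, and it connects nicely to the role $m_I$ plays elsewhere in the paper. Your treatment of the case $n = 2$, via $\rho = \lambda_2^2 - \nicefrac{\lambda_1^2}{2} \leq \nicefrac{\lambda_1^2}{2} = \rho_{\insphere}$ together with Lemma~\ref{lem:intersection_sphere_plane}, is exactly the paper's argument.

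The one step you may not assert for free is that $C$ has a \emph{unique} inscribed ball. Maximal inscribed balls of convex bodies are in general not unique (in a non-square rectangle a whole segment of centers works), so uniqueness for a simplex is itself a fact in need of proof, and your identification of the incenter with $m$ rests entirely on it. The gap is easy to close without invoking uniqueness, by symmetrization: suppose $\set{x\in H | \norm{x - p}_2\leq r}\subseteq C$ for some $p\in H$. By permutation invariance of $C$ and of $H$, $\set{x\in H | \norm{x - P_\tau p}_2\leq r}\subseteq C$ for every $\tau\in S_n$. The average of the centers $P_\tau p$ over all $\tau\in S_n$ is $m$, and any $x = m + v$ with $e\transp v = 0$ and $\norm{v}_2\leq r$ is the average of the points $P_\tau p + v$, each of which lies in $C$; convexity of $C$ then gives $\set{x\in H | \norm{x - m}_2\leq r}\subseteq C$. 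Hence the inradius is always realized at the center $m$, and your facet computation pins down its value. Note also that the uses made of this proposition in the paper (termination of the alternating projections and $L\subseteq C$ for $n = 2$) only require that the ball of squared radius $\rho_{\insphere}$ around $m$ is contained in $C$, which your computation establishes completely.
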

\begin{proof}
Because $C$ is closed and convex, it is enough to consider the distance between interior points and boundary points.
Hence the insphere radius of a point $p\in C$ can be computed as being the minimum distance to any of the boundary points.
With Proposition~\ref{prop:splx_boundary} these points can be characterized as points where at least one entry vanishes.
Using Lagrange multipliers it can be shown that $\min_{c\in\partial C}\norm{m - c}_2^2 = \rho_{\insphere}$.
Further, it can be shown that no point other than $m$ is center of a larger insphere.
This is achieved by constructing projections on certain faces of $C$, as is discussed in detail in the forthcoming Lemma~\ref{lem:subsplx_proj}.
When $n = 2$ and $\lambda_2 \leq \lambda_1$, which is fulfilled by requirement on $\lambda_1$ and $\lambda_2$, then $\rho = \lambda_2^2 - \nicefrac{\lambda_1^2}{n} \leq \nicefrac{\lambda_1^2}{2} = \rho_{\insphere}$, and $L\subseteq C$ follows with Lemma~\ref{lem:intersection_sphere_plane}.
\end{proof}
The projection within $H$ from outside a simplex is unique and must be located on its boundary:
\begin{remark}
\label{rem:splx_proj_boundary}
Let $s\in H\setminus C$.
Then $\proj_C(s)\in\partial C$ in $\left(H,\ \norm{\cdot}_2\right)$.
\end{remark}
The proof is obvious because $C$ is closed and convex.
By combination of Proposition~\ref{prop:splx_boundary} and Remark~\ref{rem:splx_proj_boundary}, it is now evident that the projection within $H$ onto $C$ yields vanishing entries.
After the first projection onto $H$, this subspace is never left throughout the arguments presented here, such that Remark~\ref{rem:splx_proj_boundary} always applies.
It has yet to be shown that the projection onto $D$ possesses zero entries in the same coordinates.
This way, a reduction of problem dimensionality can be achieved, and an iterative algorithm can be constructed to compute the projection onto $D$.
The algorithm is guaranteed to terminate at the latest when the problem dimensionality equals two with Proposition~\ref{prop:splx_inradius}.

\subsubsection{Projection onto a Simplex}
Quite a few methods have been proposed for carrying out projections onto canonical simplexes.
An iterative algorithm was developed by \cite{Michelot1986} which is very similar to Hoyer's original method for computation of the projection onto $D$.
A simpler and more effective algorithm has been developed by \cite{Duchi2008}.
Building upon this work, \cite{Chen2011} have proposed and rigorously proved correctness of a very similar algorithm, which is more explicit than that of \cite{Duchi2008}.
Their algorithm can be adapted to better suit the needs for the sparseness-enforcing projection.
This adapted version was given by Algorithm~\ref{alg:projsplx_general} in Section~\ref{sect:projalgorithms}.
The following note makes the adaptations explicit.
\begin{proposition}
\label{prop:projsplx}
Let $x\in\R^n\setminus C$ and $p := \proj_C(x)$.
Then the following holds:
\begin{enumerate}
\item \label{prop:projsplx_a}
There exists $\hat{t}\in\R$ such that $p = \max\left(x - \hat{t}\cdot e,\ 0\right)$, where the maximum is taken element-wise.
\item \label{prop:projsplx_b}
Algorithm~\ref{alg:projsplx_general} computes $\hat{t}$ such that~\ref{prop:projsplx_a} holds and the number of nonzero entries in $p$.
\end{enumerate}
\end{proposition}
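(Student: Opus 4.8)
The plan is to read \ref{prop:projsplx_a} as the Karush--Kuhn--Tucker characterization of the projection onto the convex set $C$, and \ref{prop:projsplx_b} as a verification that Algorithm~\ref{alg:projsplx_general} returns a separator meeting exactly this characterization. For \ref{prop:projsplx_a}, I would write $p=\proj_C(x)$ as the unique minimizer of $\tfrac12\norm{y-x}_2^2$ subject to $e\transp y=\lambda_1$ and $y\geq 0$; existence and uniqueness hold because $C$ is closed and convex. Introducing a multiplier $\hat t\in\R$ for the equality constraint and multipliers $\mu\in\R_{\geq 0}^n$ for the sign constraints, stationarity reads $p_i=x_i-\hat t+\mu_i$, with $\mu_i p_i=0$. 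Complementary slackness then forces $p_i=x_i-\hat t$ where $p_i>0$ and $p_i=0$ (with $x_i-\hat t\leq 0$) otherwise, that is $p_i=\max(x_i-\hat t,\,0)$. As the program is convex, these conditions are also sufficient, so conversely \emph{any} $\hat t$ for which $\max(x-\hat t e,\,0)$ is feasible already equals the projection.

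This sufficiency is the hinge for \ref{prop:projsplx_b}. The map $f(t):=\sum_{i=1}^n\max(x_i-t,\,0)$ is continuous, piecewise linear and strictly decreasing wherever it is positive, and since $\lambda_1>0$ there is a unique $\hat t$ with $f(\hat t)=\lambda_1$; moreover $f$ depends only on the multiset of entries of $x$, so replacing $x$ by its descending sort $y=P_\tau x$ leaves $\hat t$ and the surviving count unchanged. This is precisely what line~\ref{algl:projsplx_sort} does, so it suffices to show that the value $t$ the algorithm returns satisfies $\sum_k\max(y_k-t,\,0)=\lambda_1$, with the returned index equal to the number of surviving entries.

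The core step is to show that at the index $i$ where the algorithm first returns, the value $t_i:=\tfrac{s_i-\lambda_1}{i}$, where $s_i=\sum_{k\leq i}y_k$, satisfies the sandwich $y_i>t_i\geq y_{i+1}$ (reading $y_{n+1}:=-\infty$ for the terminal return). The stopping test supplies $t_i\geq y_{i+1}$ directly. For the reverse inequality I would use the weighted-average recursion $t_{i+1}=\tfrac{i\,t_i+y_{i+1}}{i+1}$: starting from $t_1=y_1-\lambda_1<y_1$ and using that no earlier return means $t_j<y_{j+1}$ for all $j<i$, the recursion places $t_j<t_{j+1}<y_{j+1}$, so in particular $t_{i-1}<y_i$ and hence $t_i$, a convex combination of $t_{i-1}$ and $y_i$, obeys $t_i<y_i$. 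Given the sandwich, $y$ being sorted yields $\max(y_k-t_i,\,0)=y_k-t_i$ for $k\leq i$ and $0$ for $k>i$, whence $\sum_k\max(y_k-t_i,\,0)=s_i-i\,t_i=\lambda_1$; by the sufficiency from \ref{prop:projsplx_a} this $t_i$ is $\hat t$, and the number of strictly positive entries is exactly $i$.

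I expect the main obstacle to be the inequality $y_i>t_i$ at the first stop, which certifies the stop as a genuine separator rather than an artifact; its proof rests on the recursion together with the hypothesis $t_j<y_{j+1}$ for $j<i$. Once this holds, the sandwich identifies $t_i$ with the \emph{unique} $\hat t$, so the first stop is automatically correct and the returned index is forced to equal $\norm{p}_0$; no separate argument that the scan reaches exactly that index is required. The terminal case $i=n$ is dispatched identically, since $t_{n-1}<y_n$ gives $t_n<y_n$, so every entry survives and $s_n-n\,t_n=\lambda_1$. The remaining bookkeeping---that the separator computed from the sorted $y$ is the genuine $\hat t$ for $x$, by permutation-invariance of $C$ and Lemma~\ref{lem:proj_props}---is routine.
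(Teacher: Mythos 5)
Your proposal is correct, but it takes a genuinely different route from the paper. The paper's own proof is essentially a reduction to the literature: it invokes the correctness arguments of \citet{Chen2011} for projections onto the unit simplex $\splx^n$, and then handles the scaled simplex $C$ via the homogeneity relation $\proj_C(x) = \lambda_1\cdot\proj_{\splx^n}\left(\nicefrac{x}{\lambda_1}\right)$, which is what turns the formulas $t = \frac{s-1}{i}$ and $t = \frac{s-1}{n}$ into $t = \frac{s-\lambda_1}{i}$ and $t = \frac{s-\lambda_1}{n}$; the count of nonzero entries is then read off from the expression in part~(a), the sortedness of $y$, and the termination test. You instead prove everything from first principles: the KKT characterization of $\proj_C$ (stationarity plus complementary slackness giving $p = \max(x - \hat{t}\cdot e,\ 0)$, with convexity making the conditions sufficient), the uniqueness of the separator as the unique zero of $t\mapsto\smallsum_{i=1}^n\max(x_i - t,\ 0) - \lambda_1$, and an inductive analysis of the stopping rule via the weighted-average recursion $t_{i+1} = \frac{i\,t_i + y_{i+1}}{i+1}$, which yields the sandwich $y_i > t_i \geq y_{i+1}$ at the first stop. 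Your induction is sound (the base case uses $\lambda_1 > 0$, and the convex-combination step correctly propagates $t_j < y_{j+1}$ to $t_i < y_i$), and the sufficiency direction of the KKT argument is exactly what lets you conclude that the first stop is automatically the true separator, with the returned index forced to equal $\norm{p}_0$. What each approach buys: the paper's proof is short and delegates the combinatorial core to a published result, at the cost of requiring the reader to consult \citet{Chen2011} and to accept the scaling adaptation; your proof is self-contained, works directly with the constraint $e\transp y = \lambda_1$ without any rescaling, and makes explicit the uniqueness of $\hat{t}$, which the paper only gets implicitly through the cited work. In effect, you have reconstructed the substance of the cited correctness proof rather than citing it, which is a legitimate and arguably more transparent alternative.
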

\begin{proof}
The arguments from \cite{Chen2011} hold for projections onto $\splx^n$.
The case of the scaled canonical simplex can be recovered using $p = \lambda_1\cdot\proj_{\splx^n}\left(\nicefrac{x}{\lambda_1}\right)$.
Therefore lines~\ref{algl:projsplx_general_tone} and~\ref{algl:projsplx_general_ttwo} of Algorithm~\ref{alg:projsplx_general} can be adapted from $t := \frac{s - 1}{i}$ and $t := \frac{s - 1}{n}$ to $t := \frac{s - \lambda_1}{i}$ and $t := \frac{s - \lambda_1}{n}$, respectively.
The correct number of nonzero entries in $p$ follows immediately from its expression from~\ref{prop:projsplx_a}, the fact that $y$ is sorted in descending order and the termination criterion of Algorithm~\ref{alg:projsplx_general}.
\end{proof}
As already described in Section~\ref{sect:projfunc_optimized_variant}, symmetries can be exploited for projections onto $C$:
\begin{remark}
\label{rem:projsplx_sorting}
When $x$ is already sorted in descending order, then no sorting is needed at the beginning of Algorithm~\ref{alg:projsplx_general}.
The projection $p$ is then sorted also, because $C$ is permutation-invariant.
In this case, the nonzero entries of $p$ are located in the first $d := \norm{p}_0$ entries, while $p_{d+1}=\dots=p_n=0$.
\end{remark}
This fact is useful for optimizing access to the relevant entries of the working vector, which can then be stored contiguously in memory.

\subsubsection{Projection onto a Face of a Simplex}
The projection within $H$ onto $C$ yields zero entries in the working vector.
It still remains to be shown that the projection onto $D$ possesses zero entries at the same coordinates as the projection onto $C$.
If this holds true, then the dimensionality of the original problem can be reduced, and iterative arguments can be applied.
The main building block in the proof is the explicit construction of projections from within the simplex onto a certain face.
The next Lemma is fundamental for proving correctness of Algorithm~\ref{alg:projfunc}.
It describes the construction of the result of the projection onto a simplex face and poses a statement on its norm, which in turn is used to prove that the position of vanishing entries does not change upon projection.
\begin{lemma}
\label{lem:subsplx_proj}
Let $q\in C$ and let $\emptyset\neq I\subseteq\discint{1}{n}$ be an arbitrary index set.
Then there exists an $s\in C_I$ with $\norm{q - v}_2^2 = \norm{q - s}_2^2 + \norm{s - v}_2^2$ for all $v\in C_I$.
If additionally $\max_{j\in J}q_j \leq \min_{i\in I}q_i$ holds for $J := I\comp$, then $\norm{s}_2 \geq \norm{q}_2$ with equality if and only if $q_j = 0$ for all $j\in J$.

More precisely, let $h := \abs{J}$ and let $J = \discint{j_1}{j_h}$ such that $q_{j_1} \leq \dots \leq q_{j_h}$.
Consider the sequence $s^{(0)},\dotsc,s^{(h)}\in\R^n$ defined iteratively by $s^{(0)} := q$ and
\begin{displaymath}
  s^{(k)} := s^{(k-1)} - s_{j_k}^{(k-1)}e_{j_k} + \tfrac{1}{n - k}s_{j_k}^{(k-1)}\left(e - \smallsum_{i=1}^k e_{j_i}\right)
\end{displaymath}
for $k\inint{1}{h}$.
Write $s := s^{(h)}$.
Then the following holds:
\begin{enumerate}
\item \label{lem:subsplx_proj_a}
$s^{(k)} \in C_{\discint{j_1}{j_k}^C}$ for all $k\inint{1}{h}$.

\item \label{lem:subsplx_proj_b}
$\bscp{s^{(0)} - s^{(k)}}{s^{(k)} - s^{(k+1)}} = 0$ for all $k\inint{0}{h-1}$.

\item \label{lem:subsplx_proj_c}
$\bnorm{s^{(k)} - q}_2^2 = \sum_{i=1}^{k}\bnorm{s^{(i)} - s^{(i-1)}}_2^2$ for all $k\inint{0}{h}$.

\item \label{lem:subsplx_proj_d}
$s^{(k-1)}_{j_k} = q_{j_k} + \tfrac{1}{n - k + 1}\sum_{i=1}^{k-1}q_{j_i}$ for all $k\inint{1}{h}$.

\item \label{lem:subsplx_proj_e}
$\bscp{s^{(0)} - s^{(k)}}{s^{(k)} - v} = 0$ for all $k\inint{0}{h}$ and for all $v\in C_I$.

\item \label{lem:subsplx_proj_f}
$\norm{q - v}_2^2 = \bnorm{q - s^{(k)}}_2^2 + \bnorm{s^{(k)} - v}_2^2$ for all $k\inint{0}{h}$ and for all $v\in C_I$.

\item \label{lem:subsplx_proj_g}
$s = \proj_{C_I}(q)$.
\end{enumerate}

\vspace{2ex}
If $\max_{j\in J}q_j \leq \min_{i\in I}q_i$, then the following holds as well:
\begin{enumerate}[start=8]
\item \label{lem:subsplx_proj_h}
$s^{(k)}_{j_1} \leq \dots \leq s^{(k)}_{j_h} \leq \min_{i\in I} s^{(k)}_i$ for all $k\inint{0}{h}$.

\item \label{lem:subsplx_proj_i}
$s^{(k-1)}_{j_k} \leq \tfrac{\lambda_1}{n - k + 1}$ for all $k\inint{1}{h}$.

\item \label{lem:subsplx_proj_j}
$\bnorm{s^{(k-1)}}_2 \leq \bnorm{s^{(k)}}_2$ for all $k\inint{1}{h}$, and hence $\norm{s}_2 \geq \norm{q}_2$.

\item \label{lem:subsplx_proj_k}
$\norm{s}_2 = \norm{q}_2$ if and only if $q_j = 0$ for all $j\in J$.
\end{enumerate}
\end{lemma}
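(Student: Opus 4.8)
The plan is to read the iteration as a sequence of orthogonal projections onto nested affine subspaces, and to isolate one elementary computational fact that drives everything else. Write $F_k := C_{\set{j_1,\dotsc,j_k}\comp}$ for the face of $C$ on which the coordinates $j_1,\dotsc,j_k$ vanish, so that $C = F_0 \supseteq F_1 \supseteq \dotsb \supseteq F_h = C_I$, and let $U_k := \set{v\in\R^n | v_{j_1} = \dotsb = v_{j_k} = 0\text{ and }e\transp v = 0}$ be the nested direction spaces of their affine hulls. The first thing I would verify, by induction on $k$, is part~\ref{lem:subsplx_proj_a}: the update leaves $e\transp s^{(k)} = \lambda_1$ unchanged (the redistributed mass sums back to exactly what was removed), keeps the coordinates $j_1,\dotsc,j_k$ at zero, and preserves non-negativity because the mass $s^{(k-1)}_{j_k}$ being spread is itself non-negative by the induction hypothesis. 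Hence each $s^{(k)}$ genuinely lies in $F_k$.

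The conceptual heart is the observation that $s^{(k)}$ is exactly the orthogonal projection of $s^{(k-1)}$ onto the affine hull of $F_k$: a direct check shows the increment $s^{(k)} - s^{(k-1)} = -s^{(k-1)}_{j_k}e_{j_k} + \tfrac{1}{n-k}s^{(k-1)}_{j_k}\big(e - \smallsum_{i=1}^k e_{j_i}\big)$ is orthogonal to $U_k$, using that any $v\in U_k$ vanishes on $j_1,\dotsc,j_k$ and has vanishing total sum. Because the $U_k$ are nested, the increments $s^{(i)} - s^{(i-1)}$ are mutually orthogonal and each is orthogonal to $U_k$ for $i\leq k$, while $s^{(k)} - s^{(k+1)}\in U_k$ and any $v\in C_I$ satisfies $s^{(k)} - v\in U_k$. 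This yields \ref{lem:subsplx_proj_b} and \ref{lem:subsplx_proj_e} at once, \ref{lem:subsplx_proj_c} as the Pythagorean sum over the orthogonal increments, and \ref{lem:subsplx_proj_f} by Proposition~\ref{prop:sqnrm} together with \ref{lem:subsplx_proj_e}; \ref{lem:subsplx_proj_g} then follows since \ref{lem:subsplx_proj_f} at $k = h$ gives $\norm{q - v}_2^2 \geq \norm{q - s}_2^2$ with equality only at $v = s\in C_I$. For the formula \ref{lem:subsplx_proj_d} I would avoid tracking individual redistributions and instead use that every coordinate still alive after step $k-1$ has received the \emph{same} cumulative increment $\Delta_{k-1}$; summing $s^{(k-1)}$ over the $n-k+1$ alive coordinates gives $\lambda_1 = \big(\lambda_1 - \smallsum_{i=1}^{k-1}q_{j_i}\big) + (n-k+1)\Delta_{k-1}$, which solves for $\Delta_{k-1}$ and hence for $s^{(k-1)}_{j_k} = q_{j_k} + \Delta_{k-1}$.

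Under the ordering hypothesis $\max_{j\in J}q_j \leq \min_{i\in I}q_i$, the same common-increment fact gives \ref{lem:subsplx_proj_h}: adding a constant to all alive coordinates preserves their order, so the already-sorted $q$ stays sorted, with the zeroed coordinates sitting at the bottom by non-negativity. Then \ref{lem:subsplx_proj_i} is immediate, since \ref{lem:subsplx_proj_h} makes $s^{(k-1)}_{j_k}$ the minimum of the $n-k+1$ alive coordinates, whose average is $\nicefrac{\lambda_1}{(n-k+1)}$.

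The remaining parts \ref{lem:subsplx_proj_j} and \ref{lem:subsplx_proj_k} are where the real work sits, and I expect them to be the main obstacle. Writing $c := s^{(k-1)}_{j_k}$ and expanding $\norm{s^{(k)}}_2^2 - \norm{s^{(k-1)}}_2^2$ with Proposition~\ref{prop:sqnrm} (using that the alive coordinates of $s^{(k-1)}$ other than $j_k$ sum to $\lambda_1 - c$) collapses to $\nicefrac{c\,(2\lambda_1 - (n-k+1)c)}{(n-k)}$. The bound \ref{lem:subsplx_proj_i} forces $(n-k+1)c\leq\lambda_1 < 2\lambda_1$, so this expression is non-negative, giving \ref{lem:subsplx_proj_j} by telescoping; and it vanishes exactly when $c = 0$, since the alternative root $c = \nicefrac{2\lambda_1}{(n-k+1)}$ is excluded by the same bound. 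Finally \ref{lem:subsplx_proj_d} shows that $s^{(k-1)}_{j_k} = 0$ forces $q_{j_1} = \dotsb = q_{j_k} = 0$, so equality of norms throughout is equivalent to $q_j = 0$ for all $j\in J$, proving \ref{lem:subsplx_proj_k}. The delicate points to get right are the exact sign and the equality case in this one-step norm identity, both of which hinge on coupling \ref{lem:subsplx_proj_i}, and hence the sortedness assumption, back into the algebra.
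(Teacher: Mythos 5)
Your proposal is correct, and although it works with exactly the same sequence $s^{(k)}$ and the same one-step norm algebra as the paper, it reaches the key intermediate facts by a genuinely cleaner route. The paper proves the orthogonality statements \ref{lem:subsplx_proj_b} and \ref{lem:subsplx_proj_e} by explicit dot-product computations and, for \ref{lem:subsplx_proj_e}, a lengthy induction that consumes the formula \ref{lem:subsplx_proj_d}; you obtain both at once from the single structural fact that each increment $s^{(k)} - s^{(k-1)}$ is a multiple of $a_k := \tfrac{1}{n-k}\big(e - \smallsum_{i=1}^{k}e_{j_i}\big) - e_{j_k}$, which is orthogonal to the direction space $U_k := \set{v\in\R^n | v_{j_1} = \dots = v_{j_k} = 0,\ e\transp v = 0}$, while all later increments and all differences $s^{(k)} - v$ with $v\in C_I$ lie inside $U_k$; this makes \ref{lem:subsplx_proj_e} independent of \ref{lem:subsplx_proj_d} and eliminates the induction entirely. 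Similarly, the paper derives \ref{lem:subsplx_proj_d} by reordering a double sum followed by a reverse induction on the identity $\tfrac{1}{n-i} + \sum_{\mu=i+1}^{k-1}\tfrac{1}{(n-\mu)(n-\mu+1)} = \tfrac{1}{n-k+1}$, and proves \ref{lem:subsplx_proj_h} through an indicator-function case analysis; your observation that every still-alive coordinate has received the same cumulative increment yields \ref{lem:subsplx_proj_d} by summing $s^{(k-1)}$ over the $n-k+1$ alive coordinates, and \ref{lem:subsplx_proj_h} because adding a common constant preserves order and non-negativity keeps the zeroed coordinates at the bottom. From there the two arguments coincide: \ref{lem:subsplx_proj_i} is the minimum-below-average bound, \ref{lem:subsplx_proj_j} is the same collapse of the one-step norm difference to $\tfrac{c\,(2\lambda_1 - (n-k+1)c)}{n-k}$ with its sign controlled by \ref{lem:subsplx_proj_i}, and \ref{lem:subsplx_proj_k} follows by excluding the root $c = \tfrac{2\lambda_1}{n-k+1}$ and pushing the equality case back to $q$ via \ref{lem:subsplx_proj_d}. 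What the paper's computational treatment buys is that quantities such as $\scp{a_k}{a_k} = 1 + \tfrac{1}{n-k}$ and $\scp{a_k}{v}$ are exhibited explicitly and reused across parts; what your treatment buys is brevity, the removal of both heavy inductions, and a transparent geometric picture of the iteration as successive orthogonal projections onto nested faces.
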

\begin{proof}
In other words, $s^{(k)}$ is constructed from $s^{(k-1)}$ by setting entry $j_k$ to zero, and adjusting all remaining entries, but the ones previously set to zero, such that the $L_1$ norm is preserved.
This generates a finite series of points progressively approaching $C_I$, see~\ref{lem:subsplx_proj_a}, where the final point is from $C_I$.
As all relevant dot products vanish, see~\ref{lem:subsplx_proj_b} and~\ref{lem:subsplx_proj_e}, this is a process of orthogonal projections.
Hence the distance between points can be computed using the Pythagorean theorem, see~\ref{lem:subsplx_proj_c} and~\ref{lem:subsplx_proj_f}.
In~\ref{lem:subsplx_proj_g} it is then shown that $s$ is the unique projection of $q$ onto $C_I$.

If the entry $j_k$ in $s^{(k-1)}$ does not vanish, then the $L_2$ norm of the newly constructed point is greater than that of the original point, see~\ref{lem:subsplx_proj_j} and~\ref{lem:subsplx_proj_k}.
The entries with indices from $J$ must be sufficiently small for this non-decreasing norm property to hold, see~\ref{lem:subsplx_proj_h} and~\ref{lem:subsplx_proj_i}.
The magnitude of these entries, however, is strongly connected with the magnitudes of respective entries from the original point $q$, that is, the rank is preserved from one point to its successor.
Figure~\ref{fig:simplexface_proj} gives an example for $n = 3$ in which cases the non-decreasing norm property holds.

Let $a_k := \tfrac{1}{n-k}\left(e - \sum\nolimits_{i=1}^k e_{j_i}\right) - e_{j_k}\in\R^n$ for $k\inint{1}{h}$.
Then $s^{(k)} = s^{(k-1)} + s^{(k-1)}_{j_k}a_k$, and with induction follows $s^{(k)} = s^{(0)} + \sum_{i=1}^k s^{(i-1)}_{j_i}a_i$ for $k\inint{1}{h}$.

\ref{lem:subsplx_proj_a}
First note that $e\transp a_k = \tfrac{1}{n-k}\left(n - k\right) - 1 = 0$ and that $a_k\in\R_{\geq 0}^n$ for all $k\inint{1}{h}$.
It is now shown by induction that $s^{(k)}$ lies on the claimed face of $C$.
For $k = 1$, one obtains $s^{(1)}_{j_1} = 0$ and $s^{(1)}_i = q_i + \tfrac{1}{n-1}q_{j_1}$ for $i\neq j_1$.
Thus $s^{(1)}_i\geq 0$ for all $i\inint{1}{n}$ because of $q_i \geq 0$ for all $i\inint{1}{n}$.
Further $e\transp s^{(1)} = e\transp q + q_{j_1}e\transp a_1 = e\transp q = \lambda_1$, hence $s^{(1)}\in C_{\discint{1}{n}\setminus\set{j_1}}$.

For $k - 1\to k$, assume $s^{(k-1)}_i = 0$ for all $i\inint{j_1}{j_{k-1}}$, $s^{(k-1)}_i \geq 0$ for all $i\inint{1}{n}$ and $e\transp s^{(k-1)} = \lambda_1$.
Clearly, $s^{(k)}_i = s^{(k-1)}_i = 0$ holds for $i\inint{j_1}{j_{k-1}}$.
Furthermore, one obtains $s^{(k)}_{j_k} = s^{(k-1)}_{j_k} - s^{(k-1)}_{j_k} = 0$.
With $s^{(k-1)}\in\R_{\geq 0}^n$ and $a_k\in\R_{\geq 0}^n$ follows that $s^{(k)}\in\R_{\geq 0}^n$.
Finally it is $e\transp s^{(k)} = e\transp s^{(k-1)} + s^{(k-1)}_{j_k} e\transp a_k = e\transp s^{(k-1)} = \lambda_1$.
Hence $s^{(k)}\in C_{\discint{1}{n}\setminus\discint{j_1}{j_k}}$.

\ref{lem:subsplx_proj_b}
For $i\inint{1}{k}$ follows
\begin{align*}
  \scp{e - \smallsum_{\mu=1}^{i}e_{j_\mu}}{e - \smallsum_{\nu=1}^{k+1}e_{j_\nu}}
  &= \scp{e}{e} - \scp{e}{\smallsum_{\nu=1}^{k+1}e_{j_\nu}} -\scp{\smallsum_{\mu=1}^{i}e_{j_\mu}}{e} + \scp{\smallsum_{\mu=1}^{i}e_{j_\mu}}{\smallsum_{\nu=1}^{k+1}e_{j_\nu}}\\
  &= n - (k + 1) - i + i = n - k - 1\text{,}
\end{align*}
and therefore
\begin{align*}
  \scp{a_i}{a_{k+1}}
  &= \tfrac{1}{\left(n-i\right)\left(n-k-1\right)}\scp{e - \smallsum_{\mu=1}^{i}e_{j_\mu}}{e - \smallsum_{\nu=1}^{k+1}e_{j_\nu}}
     + \scp{e_{j_i}}{e_{j_{k+1}}}\\
  &\phantom{=} - \tfrac{1}{n-i}\scp{e - \smallsum_{\mu=1}^{i}e_{j_\mu}}{e_{j_{k+1}}}
     - \tfrac{1}{n-k-1}\scp{e_{j_i}}{e - \smallsum_{\nu=1}^{k+1}e_{j_\nu}}\\
  &= \tfrac{n-k-1}{\left(n-i\right)\left(n-k-1\right)} + 0 - \tfrac{1}{n-i} - \tfrac{0}{n-k-1}
  = 0\text{.}
\end{align*}
Thus
\begin{displaymath}
  \scp{s^{(0)} - s^{(k)}}{s^{(k)} - s^{(k+1)}}
  = \scp{\smallsum_{i=1}^{k}s^{(i-1)}_{j_i}a_i}{s^{(k)}_{j_{k+1}}a_{k+1}}
  = \smallsum_{i=1}^{k}s^{(i-1)}_{j_i}s^{(k)}_{j_{k+1}}\scp{a_i}{a_{k+1}}
  = 0\text{.}
\end{displaymath}

\ref{lem:subsplx_proj_c}
Follows by induction using Proposition~\ref{prop:sqnrm} and~\ref{lem:subsplx_proj_b}.

\ref{lem:subsplx_proj_d}
Clearly, $\scp{e_{j_k}}{a_i} = \tfrac{1}{n-i}$ for $i\inint{1}{k-1}$, hence with induction follows
\begin{align*}
  s^{(k-1)}_{j_k}
  &= \bscp{e_{j_k}}{s^{(k-1)}}
   = \bscp{e_{j_k}}{s^{(0)}} + \smallsum_{i=1}^{k-1} s^{(i-1)}_{j_i}\scp{e_{j_k}}{a_i}
  = q_{j_k} + \smallsum_{i=1}^{k-1}\tfrac{1}{n-i} s^{(i-1)}_{j_i}\\
  &\equsing{IH} q_{j_k} + \smallsum_{i=1}^{k-1}\tfrac{1}{n-i}q_{j_i} + \smallsum_{i=1}^{k-1}\smallsum_{\mu=1}^{i-1}\tfrac{1}{n-i}\tfrac{1}{n-i+1}q_{j_\mu}
  = q_{j_k} + \smallsum_{i=1}^{k-1}q_{j_i}\left[\tfrac{1}{n-i} + \smallsum_{\mu=i+1}^{k-1}\tfrac{1}{n-\mu}\tfrac{1}{n-\mu+1}\right]\text{.}
\end{align*}
Using $\sum\nolimits_{i=1}^{k-1}\sum\nolimits_{\mu=1}^{i-1} = \sum_{1\leq\mu < i\leq k-1} = \sum\nolimits_{\mu=1}^{k-1}\sum\nolimits_{i=\mu+1}^{k-1}$ the order of summation was changed after the induction step, and then the variables $i$ and $\mu$ were swapped.
For the claim to hold it is enough to show that $\tfrac{1}{n-i} + \sum\nolimits_{\mu=i+1}^{k-1}\tfrac{1}{n-\mu}\tfrac{1}{n-\mu+1} = \tfrac{1}{n - k + 1}$ for all $i\inint{1}{k-1}$, which follows by reverse induction.

\ref{lem:subsplx_proj_e}
Proof by induction. Let $v\in C_I$, that is $e\transp v = \lambda_1$ and $v_j = 0$ for all $j\in J$.
For $k = 0$, $s^{(0)} - s^{(k)} = 0$ and the claim follows.
For $k - 1\to k$, first note that
$\scp{a_k}{q} = \tfrac{1}{n-k}\left(\lambda_1 - \sum_{i=1}^{k}q_{j_i}\right) - q_{j_k}$,
$\bscp{a_k}{s^{(k-1)}} = \bscp{a_k}{s^{(0)}} + \sum_{i=1}^{k-1}s^{(i-1)}_{j_i}\scp{a_k}{a_i} = \scp{a_k}{q}$ because $\scp{a_k}{a_i} = 0$ for all $i\inint{1}{k-1}$ as shown in~\ref{lem:subsplx_proj_b},
thus $\bscp{a_k}{s^{(0)} - 2s^{(k-1)}} = -\scp{a_k}{q}$.
Furthermore $\scp{a_k}{v} = \tfrac{\lambda_1}{n-k}$ because $\scp{e_{j_i}}{v} = v_{j_i} = 0$ for all $i\inint{1}{h}$, and
$\scp{a_k}{a_k} = \norm{a_k}_2^2 = \frac{1}{\left(n - k\right)^2}\bnorm{e - \sum\nolimits_{i=1}^k e_{j_i}}_2^2 + \norm{e_{j_k}}_2^2 = 1 + \tfrac{1}{n - k}$.
Therefore,
\begin{align*}
  \bscp{s^{(0)} - s^{(k)}}{s^{(k)} - v}
  &= \bscp{s^{(0)} - s^{(k-1)} - s^{(k-1)}_{j_k}a_k}{s^{(k-1)} + s^{(k-1)}_{j_k}a_k - v}\\
  &= \bscp{s^{(0)} - s^{(k-1)}}{s^{(k-1)} - v} + s^{(k-1)}_{j_k}\bscp{a_k}{s^{(0)} - 2s^{(k-1)} - s^{(k-1)}_{j_k}a_k + v}\\
  &\equsing{IH} s^{(k-1)}_{j_k}\left(-\scp{a_k}{q} - s^{(k-1)}_{j_k}\scp{a_k}{a_k} + \scp{a_k}{v}\right)\\
  &= s^{(k-1)}_{j_k}\left(-\tfrac{\lambda_1}{n-k} + \tfrac{1}{n-k}\smallsum_{i=1}^{k}q_{j_i} +q_{j_k} - s^{(k-1)}_{j_k} \left(1 + \tfrac{1}{n-k}\right) + \tfrac{\lambda_1}{n-k}\right)\\
  &= s^{(k-1)}_{j_k}\left(q_{j_k}\left(1 + \tfrac{1}{n-k}\right) + \tfrac{1}{n-k}\smallsum_{i=1}^{k-1}q_{j_i} - s^{(k-1)}_{j_k} \left(1 + \tfrac{1}{n-k}\right)\right)
  = 0\text{,}
\end{align*}
where the final equality was yielded using the statement from~\ref{lem:subsplx_proj_d} and $\left(1 + \tfrac{1}{n-k}\right)\cdot\tfrac{1}{n-k+1} = \tfrac{1}{n-k}$.

\begin{figure}[t]
  \centering
  \includegraphics[page=9]{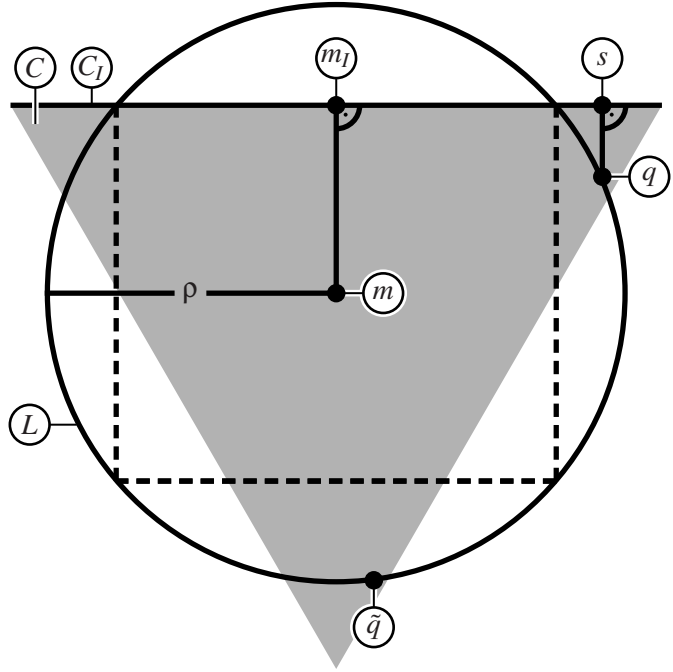}
  \caption{Situation of Lemma~\ref{lem:subsplx_proj}: The projection of a point $q$ from within the simplex $C$ onto one of its faces $C_I$ yields point $s$. When $q$ is sufficiently close to $C_I$, then $\norm{s}_2\geq\norm{q}_2$. This does not hold when the point projected onto $C_I$ is too far away. For example, the point $\tilde{q}$ which is located outside the dashed square with edge length $2\norm{m_I - m}_2$ would yield a projection with a smaller Euclidean norm.}
  \label{fig:simplexface_proj}
\end{figure}
\ref{lem:subsplx_proj_f}
Follows using Proposition~\ref{prop:sqnrm} and~\ref{lem:subsplx_proj_e}.

\ref{lem:subsplx_proj_g}
A unique Euclidean projection exists because $C_I$ is closed and convex.
$s\in C_I$ with~\ref{lem:subsplx_proj_a}, and $\norm{q - s}_2^2 \leq \norm{q - v}_2^2$ for all $v\in C_I$ with~\ref{lem:subsplx_proj_f}.
Therefore $s = \proj_{C_I}(q)$.

\ref{lem:subsplx_proj_h}
In the remainder of the proof assume that $\max_{j\in J}q_j \leq \min_{i\in I}q_i$ holds.
It is first shown by induction that $s^{(k)}_{j_1} \leq \dots \leq s^{(k)}_{j_h}$ for all $k\inint{0}{h}$.
For $k=0$ this is fulfilled as requirement on $q$ and by definition of $J$.
For $k - 1\to k$, let $\mu,\nu\inint{j_1}{j_h}$ with $\mu < \nu$.
Then with $\chi$ denoting the indicator function and with $A := \chi_{\set{j_k}}(\mu) - \chi_{\set{j_k}}(\nu) + \tfrac{1}{n-k}\left(\chi_{\discint{j_1}{j_k}^C}(\nu) - \chi_{\discint{j_1}{j_k}^C}(\mu)\right)$ follows $s^{(k)}_\nu - s^{(k)}_\mu = s^{(k-1)}_\nu - s^{(k-1)}_\mu + s^{(k-1)}_{j_k}A$.
Clearly, when $A\geq 0$ then the claim follows with the induction hypothesis and with $s^{(k-1)}_{j_k} \geq 0$ due to \ref{lem:subsplx_proj_a}.

First consider the case of $\mu\in\discint{j_1}{j_{k-1}}$. 
If $\nu\in\discint{j_1}{j_{k-1}}$ also, then $A = 0$.
If $\nu = j_k$, then $A = -1$, and hence $s^{(k)}_\nu - s^{(k)}_\mu = s^{(k-1)}_\nu - s^{(k-1)}_\mu - s^{(k-1)}_{\nu} = - s^{(k-1)}_\mu$ which however vanishes with \ref{lem:subsplx_proj_a}.
If $\nu\in\discint{j_{k+1}}{j_h}$, then $A = \tfrac{1}{n-k}\geq 0$.
If $\mu = j_k$, then $\nu\inint{j_{k+1}}{j_h}$, and then $A = 1 + \tfrac{1}{n-k}\geq 0$.
If $\mu\inint{j_{k+1}}{j_h}$, then $\nu\inint{j_{\mu+1}}{j_h}$, thus $A = 0$.
Hence the first claim is always fulfilled.

Next, it is shown that $\max_{j\in J}s^{(k)}_j \leq \min_{i\in I} s^{(k)}_i$ for all $k\inint{0}{h}$.
For $k = 0$ this is the requirement on $q$.
For $k - 1\to k$, let $i\in I$ and $j\in J$.
It is then $\chi_{\set{j_k}}(i) = 0$, $\chi_{\set{j_k}}(j)\in\set{0,1}$, $\chi_{\discint{j_1}{j_k}^C}(i) = 1$ and $\chi_{\discint{j_1}{j_k}^C}(j) = 0$, therefore
\begin{align*}
  s^{(k)}_i - s^{(k)}_j &=
  \phantom{-}s^{(k-1)}_i - s^{(k-1)}_{j_k}{\chi_{\set{j_k}}(i)} + \tfrac{1}{n-k}s^{(k-1)}_{j_k}{\chi_{\discint{j_1}{j_k}^C}(i)} \\
  &\phantom{=} -s^{(k-1)}_j + s^{(k-1)}_{j_k}\chi_{\set{j_k}}(j) - \tfrac{1}{n-k}s^{(k-1)}_{j_k}{\chi_{\discint{j_1}{j_k}^C}(j)}\\
  &= {s^{(k-1)}_i - s^{(k-1)}_j} + {s^{(k-1)}_{j_k}\left(\tfrac{1}{n-k} + \chi_{\set{j_k}}(j)\right)} \geq 0\text{,}
\end{align*}
where $s^{(k-1)}_i - s^{(k-1)}_j \geq 0$ holds by induction hypothesis.

\ref{lem:subsplx_proj_i}
With~\ref{lem:subsplx_proj_a} follows $\lambda_1 = e\transp s^{(k-1)}$ and $s^{(k-1)}_{j_i} = 0$ for all $i\inint{1}{k-1}$.
$s^{(k-1)}_i \geq s^{(k-1)}_{j_k}$ holds for all $i\in I$ with~\ref{lem:subsplx_proj_h},
and $s^{(k-1)}_{j_i} \geq s^{(k-1)}_{j_k}$ for all $i\inint{k+1}{h}$.
Therefore,
\begin{align*}
  \lambda_1
  &= \smallsum_{i=1}^ns^{(k-1)}_i
  = \smallsum_{i\in I}s^{(k-1)}_i + \smallsum_{i=1}^{k-1}s^{(k-1)}_{j_i} + s^{(k-1)}_{j_k} + \smallsum_{i=k+1}^hs^{(k-1)}_{j_i}\\
  &\geq \big((n-h) + 1 + (h - k)\big)s^{(k-1)}_{j_k} = \left(n - k + 1\right)s^{(k-1)}_{j_k}\text{,}
\end{align*}
and the claim follows because $n - k + 1 > 0$.

\ref{lem:subsplx_proj_j}
In~\ref{lem:subsplx_proj_e} it was shown that $\norm{a_k}_2^2 = 1 + \tfrac{1}{n - k}$.
Furthermore,
\begin{displaymath}
  \bscp{s^{(k-1)}}{a_k}
  = \tfrac{1}{n - k}\left(\lambda_1 - \smallsum_{i=1}^{k-1}s^{(k-1)}_{j_i} - s^{(k-1)}_{j_k}\right) - s^{(k-1)}_{j_k}
  = \tfrac{1}{n - k}\left(\lambda_1 - s^{(k-1)}_{j_k}\right) - s^{(k-1)}_{j_k}\text{,}
\end{displaymath}
because all $s^{(k-1)}_{j_i}$ vanish for $i\inint{1}{k-1}$ using~\ref{lem:subsplx_proj_a}.
Application of Proposition~\ref{prop:sqnrm} yields
\begin{align*}
  \bnorm{s^{(k)}}_2^2 - \bnorm{s^{(k-1)}}_2^2
  &= \bnorm{s^{(k-1)}_{j_k}a_k}_2^2 + 2s^{(k-1)}_{j_k}\bscp{s^{(k-1)}}{a_k}\\
  &= s^{(k-1)}_{j_k}\left[s^{(k-1)}_{j_k}\left(1 + \tfrac{1}{n - k}\right) + \tfrac{2}{n-k}\left(\lambda_1 - s^{(k-1)}_{j_k}\right) - 2s^{(k-1)}_{j_k}\right]\\
  &= s^{(k-1)}_{j_k}\left[\tfrac{2\lambda_1}{n - k} - s^{(k-1)}_{j_k}\left(1 + \tfrac{1}{n - k}\right)\right]\text{,}
\end{align*}
which is non-negative when $s^{(k-1)}_{j_k} \leq \left(1 + \tfrac{1}{n - k}\right)^{-1}\cdot\tfrac{2\lambda_1}{n - k} = \tfrac{2\lambda_1}{n - k + 1}$.
With~\ref{lem:subsplx_proj_i} this is always fulfilled, hence $\bnorm{s^{(k-1)}}_2 \leq \bnorm{s^{(k)}}_2$, and $\norm{s}_2 \geq \norm{q}_2$ follows immediately using a telescoping sum argument.

\ref{lem:subsplx_proj_k}
When $q_j = 0$ for all $j\in J$, then $s = q$ and the claim follows.
When there is a $j_k\inint{j_1}{j_h}$ with $q_{j_k}\neq 0$, let $k$ be minimal such that either $k = 1$ or $q_{j_{k-1}} = 0$, hence $s^{(k-1)}_{j_k} = q_{j_k}$.
With~\ref{lem:subsplx_proj_i} follows $0 < s^{(k-1)}_{j_k}  \leq \tfrac{\lambda_1}{n - k + 1} < \tfrac{2\lambda_1}{n - k + 1}$, and hence $\bnorm{s^{(k)}}_2^2 - \bnorm{s^{(k-1)}}_2^2 > 0$ with~\ref{lem:subsplx_proj_j}, thus $\bnorm{s}_2 > \bnorm{q}_2$.
\end{proof}
\begin{figure}[t]
  \centering
  \includegraphics[page=10]{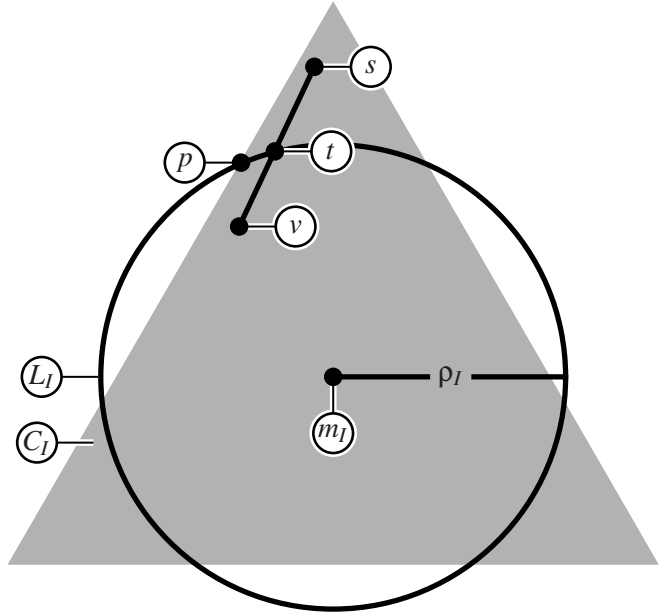}
  \caption{Sketch of the proof of Corollary~\ref{cor:subsplx_proj}: The projection of $v$ onto $D$ must be located on simplex face $C_I$. Assume there is a projection $q\not\in C_I$, then it must be sufficiently close to $C_I$ for application of Lemma~\ref{lem:subsplx_proj}. The projection $s$ of $q$ onto $C_I$ is located outside $L_I = \set{a\in L|a_i = 0\text{ for all }i\not\in I}$. Hence the intersection $t$ of the line between $v$ and $s$ and $L_I$ is in $C_I$ due to its convexity, and it is farther than the projection $p$ of $v$ onto $\tilde{D} = \set{a\in D|a_i = 0\text{ for all }i\not\in I}$. Therefore $q$ cannot be the projection of $v$ onto $D$.}
  \label{fig:simplexface_corr}
\end{figure}

\noindent
The application of Lemma~\ref{lem:subsplx_proj} then shows that the projection of a point from a face $C_I$ of $C$ onto $D$ must reside on the same face $C_I$, given the original point is located within a sphere with squared radius $\rho_I$ around $m_I$.
As will be shown in Lemma~\ref{lem:splx}, this is automatically fulfilled for projections from $L$ onto $C$.
\begin{corollary}
\label{cor:subsplx_proj}
Let $I\subseteq\discint{1}{n}$, let $v\in C_I$ with $\norm{v}_2 < \lambda_2$, and let $q\in\proj_D(v)$.
Then $q\in C_I$.
\end{corollary}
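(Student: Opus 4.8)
The plan is to argue by contradiction: suppose $q\in\proj_D(v)$ but $q\notin C_I$, so that $q_j>0$ for some index $j\notin I$ (recall $q\in D\subseteq C$, hence $q\geq 0$). The goal is to exhibit a point of $D$ strictly closer to $v$ than $q$ is. First I would reduce to the case where $v$ has full support on $I$, that is $v_i>0$ for every $i\in I$: otherwise replace $I$ by $I':=\set{i | v_i\neq 0}$, noting $v\in C_{I'}$ and $C_{I'}\subseteq C_I$, so that the conclusion $q\in C_{I'}$ for the smaller index set already gives $q\in C_I$. Assuming henceforth $v_i>0$ for all $i\in I$ and writing $J:=I\comp$, the relation $v_j=0<v_i$ for $j\in J$, $i\in I$, combined with the order-preservation of the projection onto the permutation-invariant set $D$ (Lemma~\ref{lem:proj_props}\ref{lem:proj_props_a}, using Remark~\ref{rem:symmetries_closed}), yields $q_j\leq q_i$, and therefore $\max_{j\in J}q_j\leq\min_{i\in I}q_i$. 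This is precisely the proximity hypothesis demanded by the second part of Lemma~\ref{lem:subsplx_proj}.

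Since $q\in C$, Lemma~\ref{lem:subsplx_proj} applies to the pair $(q,I)$. Let $s:=\proj_{C_I}(q)\in C_I$. The lemma supplies the orthogonal decomposition $\norm{q-v}_2^2=\norm{q-s}_2^2+\norm{s-v}_2^2$ (part~\ref{lem:subsplx_proj_f} evaluated at the face element $v\in C_I$), and, because some $q_j$ with $j\in J$ does not vanish, the strict norm increase $\norm{s}_2>\norm{q}_2=\lambda_2$ (parts~\ref{lem:subsplx_proj_j} and~\ref{lem:subsplx_proj_k}, using $q\in K$). Next I would cross from $v$ to $s$: the map $\tau\mapsto\norm{(1-\tau)v+\tau s}_2$ is continuous on $\intervalcc{0}{1}$, starts below $\lambda_2$ (this is where the hypothesis $\norm{v}_2<\lambda_2$ enters) and ends above $\lambda_2$, so the intermediate value theorem yields $\tau^*\in\intervaloo{0}{1}$ for which $t:=(1-\tau^*)v+\tau^*s$ satisfies $\norm{t}_2=\lambda_2$. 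Convexity of $C_I$ places $t$ in $C_I$, and together with $t\in K$ this gives $t\in C_I\cap K\subseteq\R_{\geq 0}^n\cap H\cap K=D$ (indeed $t\in C_I\cap L_I$).

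It then remains to compare distances. As $q\notin C_I$ forces $q\neq s$, the decomposition gives $\norm{q-v}_2^2=\norm{q-s}_2^2+\norm{s-v}_2^2>\norm{s-v}_2^2$, whereas $t$ lying on the segment from $v$ to $s$ gives $\norm{t-v}_2=\tau^*\norm{s-v}_2\leq\norm{s-v}_2$. Chaining these inequalities produces $\norm{t-v}_2<\norm{q-v}_2$ with $t\in D$, contradicting $q\in\proj_D(v)$; hence $q\in C_I$. I expect the genuine obstacle to be verifying the proximity hypothesis of Lemma~\ref{lem:subsplx_proj} rather than the geometry, since its norm-increasing property fails when $q$ sits too far from $C_I$ (cf.\ Figure~\ref{fig:simplexface_proj}); securing $\max_{j\in J}q_j\leq\min_{i\in I}q_i$ is exactly what motivates the reduction to $\mathrm{supp}(v)=I$ and the appeal to order-preservation, after which the construction of $t$ and the Pythagorean comparison are routine.
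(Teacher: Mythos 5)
Your proof is correct and follows essentially the same route as the paper's own: establish the proximity hypothesis $\max_{j\in J}q_j\leq\min_{i\in I}q_i$ (the paper does this with an inline transposition argument, you by citing the order-preservation of Lemma~\ref{lem:proj_props}\ref{lem:proj_props_a}), invoke Lemma~\ref{lem:subsplx_proj} for the Pythagorean decomposition and the strict norm increase $\norm{s}_2>\lambda_2$, construct $t\in D$ on the segment from $v$ to $s$ via the intermediate value theorem, and contradict the minimality of $q$. Your one refinement---first reducing to the case where $v$ has full support on $I$---is actually a small gain in rigor: the paper's proof asserts $v_i>0$ ``because of $v\in C_I$,'' which membership in $C_I$ alone does not guarantee, whereas your reduction to $I':=\set{i\inint{1}{n} | v_i\neq 0}$ makes the order-preservation step, and hence the hypothesis of Lemma~\ref{lem:subsplx_proj}, airtight.
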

\begin{proof}
Let $J := \discint{1}{n}\setminus I$, and let $q\in\proj_D(v)$.
Assume there is at least one $j\in J$ with $q_j \neq 0$.
For showing $\max_{j\in J}q_j \leq \min_{i\in I}q_i$, assume there are $i\in I$ and $j\in J$ with $q_j > q_i$.
Then $v_i > 0$ and $v_j = 0$ because of $v\in C_I$.
$D$ is permutation-invariant using Remark~\ref{rem:symmetries_closed} as intersection of permutation-invariant sets.
Hence let $\tau := (i, j)\in S_n$ be the transposition swapping $i$ and $j$, and consider
\begin{displaymath}
  d := \norm{q - v}_2^2 - \norm{P_\tau q - v}_2^2 = 2\left(q_j - q_i\right)\left(v_i - v_j\right)\text{.}
\end{displaymath}
It is $d > 0$ because of $q_j - q_i > 0$ and $v_i - v_j = v_i > 0$.
Hence $\norm{P_\tau q - v}_2 < \norm{q - v}_2$ and $P_\tau q\in D$, which violates the minimality of $q$.
Therefore, $\max_{j\in J}q_j \leq \min_{i\in I}q_i$ must hold.

A drawing for the next arguments is given in Figure~\ref{fig:simplexface_corr}.
With Lemma~\ref{lem:subsplx_proj} there is an $s\in C_I$ such that $\norm{q - v}_2^2 = \norm{q - s}_2^2 + \norm{s - v}_2^2$ and $\norm{s}_2 > \lambda_2$.
Consider $f\colon\intervalcc{0}{1}\to\R$, $\beta\mapsto\norm{v + \beta\left(s - v\right)}_2$.
Clearly $f(0) = \norm{v}_2 < \lambda_2$ and $f(1) = \norm{s}_2 > \lambda_2$, hence with the intermediate value theorem there exists a $\beta^*\in\intervaloo{0}{1}$ with $f(\beta^*) = \lambda_2$.
Let $t := v + \beta^*\left(s - v\right)\in\R^n$, which lies in $C_I$ because of $v,s\in C_I$ and $C_I$ is convex.
By construction $\norm{t}_2 = \lambda_2$, hence $t\in\tilde{D} := \set{a\in D|a_i = 0\text{ for all }i\not\in I}$.
Clearly, $\norm{v - t}_2 + \norm{t - s}_2 = \abs{\beta^*}\cdot\norm{s-v}_2 + \abs{1 - \beta^*}\cdot\norm{v-s}_2 = \norm{s-v}_2$.
Let $p\in\proj_{\tilde{D}}(v)$, then $\norm{v - p}_2 \leq \norm{v - t}_2$.
Therefore,
\begin{displaymath}
  \norm{q - v}_2^2 = \norm{q - s}_2^2 + \norm{s - v}_2^2 = \norm{q - s}_2^2 + \left(\norm{v - t}_2 + \norm{t - s}_2\right)^2
  > \norm{v - t}_2^2 \geq \norm{v - p}_2^2\text{.}
\end{displaymath}
Because of $\tilde{D}\subseteq D$, $p\in D$ also, hence $\norm{q - v}_2 \leq \norm{p - v}_2$, which contradicts $\norm{q - v}_2 > \norm{v - p}_2$.
Hence, $q_j = 0$ for all $j\in J$ must hold, and thus $q\in C_I$.
\end{proof}
Because $C_I$ is isomorphic to a simplex, but with lower dimensionality than $C$, an algorithm can be constructed to compute the projection onto $D$, as discussed in the following.

\subsection{Self-Similarity of the Feasible Set}
\label{sect:selfsim_rec}
The next Lemma summarizes previous results and analyzes projections from $L$ onto $C$ in greater detail.
It shows that the solution set with respect to the projection onto $D$ is not tampered, and that all solutions have zeros  at the same positions as the projection onto $C$.
Figure~\ref{fig:simplex_proj_rec} provides orientation on the quantities discussed in Lemma~\ref{lem:splx}.

\begin{figure}[t]
  \centering
  \includegraphics[page=11]{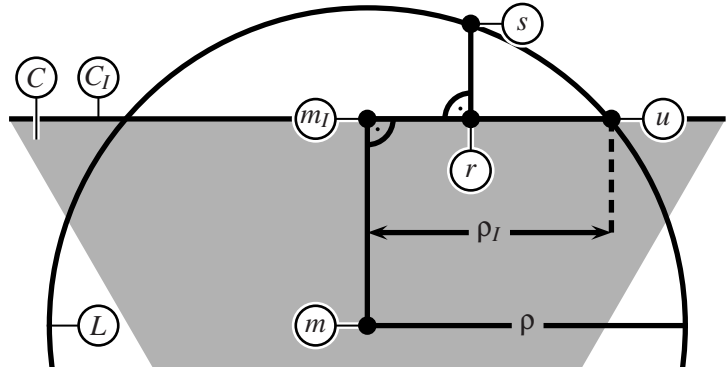}
  \caption{Situation of Lemma~\ref{lem:splx} and Lemma~\ref{lem:splxrcsn}: The point $s$ is projected onto simplex $C$ yielding $r$, which resides on one of its faces $C_I$. From there point $u$ can be constructed by projecting within $C_I$ onto the target hypersphere.}
  \label{fig:simplex_proj_rec}
\end{figure}

\begin{lemma}
\label{lem:splx}
Let $s\in L\setminus C$ and $r\in\proj_C(s)$.
Let $I := \set{i\inint{1}{n} | r_i\neq 0}$ and $d := \abs{I}$.
Then:
\begin{enumerate}
\item \label{lem:splx_a}
There exists $\hat{t}\in\R_{\geq 0}$ such that $r = \max\left(s - \hat{t}\cdot e,\ 0\right)$, with the maximum taken element-wise.

\item \label{lem:splx_b}
$I\neq \emptyset$ and $I \neq \discint{1}{n}$.

\item \label{lem:splx_c}
$s_i > \hat{t}$ and $s_i - r_i = \hat{t}$ for all $i\in I$.
$s_i \leq \hat{t}$ and $s_i - r_i = s_i$ for all $i\not\in I$.

\item \label{lem:splx_d}
$\scp{r}{s - r} = \lambda_1\hat{t}$.

\item \label{lem:splx_e}
$\scp{m_I}{s - r} = \lambda_1\hat{t}$, thus $\scp{m_I - r}{s - r} = 0$.

\item \label{lem:splx_f}
$\proj_D(r) \subseteq C_I$, hence from $q\in\proj_D(r)$ follows $q_i = 0$ for all $i\not\in I$.

\item \label{lem:splx_g}
Let $q\in\proj_D(r)$. Then $\scp{q - r}{s - r} = 0$, and thus $\norm{q - s}_2^2 = \norm{q - r}_2^2 + \norm{r - s}_2^2$.

\item \label{lem:splx_h}
$\proj_D(r) = \proj_D(s) \subseteq C_I$.
\end{enumerate}
\end{lemma}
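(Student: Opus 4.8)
The plan is to establish the eight parts in order, using the explicit simplex-projection formula as the backbone and the face-projection machinery of Lemma~\ref{lem:subsplx_proj} and Corollary~\ref{cor:subsplx_proj} for the harder claims. For part~\ref{lem:splx_a}, Proposition~\ref{prop:projsplx}\ref{prop:projsplx_a} already supplies some $\hat{t}\in\R$ with $r=\max(s-\hat{t}\cdot e,\,0)$; I would pin down $\hat{t}\geq 0$ by summing this identity. Since $r\in C$ and $s\in H$ both sum to $\lambda_1$, and $\max(s_i-\hat{t},0)\geq s_i-\hat{t}$ entrywise, the chain $\lambda_1=\sum_i\max(s_i-\hat{t},0)\geq\lambda_1-n\hat{t}$ forces $\hat{t}\geq 0$. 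Part~\ref{lem:splx_b} is then immediate: $e\transp r=\lambda_1>0$ rules out $r=0$, while Remark~\ref{rem:splx_proj_boundary} places $r$ on $\partial C$ and Proposition~\ref{prop:splx_boundary} forces a vanishing coordinate, so $I\neq\emptyset$ and $I\neq\discint{1}{n}$. Reading off the componentwise cases of the maximum gives part~\ref{lem:splx_c} directly, and parts~\ref{lem:splx_d} and~\ref{lem:splx_e} follow by splitting the sums $\scp{r}{s-r}=\hat{t}\sum_{i\in I}r_i$ and $\scp{m_I}{s-r}=\tfrac{\lambda_1}{d}\sum_{i\in I}\hat{t}$, using $e\transp r=\lambda_1$ and $\abs{I}=d$.

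The crux is part~\ref{lem:splx_f}. Here $r\in C_I$ by construction, so I would invoke Corollary~\ref{cor:subsplx_proj} with $v:=r$, which demands the hypothesis $\norm{r}_2<\lambda_2$. I would verify this by computing $\norm{s}_2^2-\norm{r}_2^2=\sum_i(s_i-r_i)(s_i+r_i)$ and using part~\ref{lem:splx_c}: the contribution of $i\in I$ is $\hat{t}(s_i+r_i)\geq 0$ and of $i\notin I$ is $s_i^2\geq 0$. Since $\norm{s}_2=\lambda_2$, this yields $\norm{r}_2\leq\lambda_2$, with equality only if $\hat{t}=0$ and $s_i=0$ for every $i\notin I$; but then the maximum formula collapses to $r=s$, contradicting $s\notin C$. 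Hence $\norm{r}_2<\lambda_2$ and Corollary~\ref{cor:subsplx_proj} gives $\proj_D(r)\subseteq C_I$.

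Both remaining parts rest on a single computation, valid for every $q\in D$, obtained by writing $\scp{q-r}{s-r}=\scp{q}{s-r}-\lambda_1\hat{t}$, splitting with part~\ref{lem:splx_c}, and using $e\transp q=\lambda_1$, namely $\scp{q-r}{s-r}=\sum_{i\notin I}q_i(s_i-\hat{t})$. For part~\ref{lem:splx_g}, a point $q\in\proj_D(r)$ lies in $C_I$ by part~\ref{lem:splx_f}, so $q_i=0$ for $i\notin I$, the sum vanishes, and Proposition~\ref{prop:sqnrm} delivers the Pythagorean identity $\norm{q-s}_2^2=\norm{q-r}_2^2+\norm{r-s}_2^2$. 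For part~\ref{lem:splx_h}, since $q\in D\subseteq\R_{\geq 0}^n$ has $q_i\geq 0$ and $s_i\leq\hat{t}$ for $i\notin I$ by part~\ref{lem:splx_c}, the same sum is $\leq 0$, so Proposition~\ref{prop:sqnrm} gives $\norm{q-s}_2^2\geq\norm{q-r}_2^2+\norm{r-s}_2^2$ for all $q\in D$, with equality on $\proj_D(r)$. Combining this inequality with the minimality defining $\proj_D(r)$ shows $\proj_D(r)\subseteq\proj_D(s)$, and the reverse inclusion follows by applying the same estimate to a fixed element of $\proj_D(r)$ together with the minimality defining $\proj_D(s)$; with part~\ref{lem:splx_f} this gives $\proj_D(s)=\proj_D(r)\subseteq C_I$.

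I expect the main obstacle to be part~\ref{lem:splx_f}: recognizing that the correct hypothesis to feed into Corollary~\ref{cor:subsplx_proj} is the strict bound $\norm{r}_2<\lambda_2$, and that the simplex projection decreases the Euclidean norm here precisely because $s$ sits on the sphere $K$ while $r$ is pulled strictly inside it. Once this bound and the closed-form expression $\scp{q-r}{s-r}=\sum_{i\notin I}q_i(s_i-\hat{t})$ are in hand, parts~\ref{lem:splx_g} and~\ref{lem:splx_h} reduce to a clean application of the Pythagorean identity together with the sign of that sum, so no separately cited variational inequality is needed.
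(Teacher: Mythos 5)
Your proposal is correct, and it follows the paper's own route almost step for step: the explicit simplex-projection formula from Proposition~\ref{prop:projsplx} for~\ref{lem:splx_a}, Remark~\ref{rem:splx_proj_boundary} together with Proposition~\ref{prop:splx_boundary} for~\ref{lem:splx_b}, the componentwise case split for~\ref{lem:splx_c}--\ref{lem:splx_e}, Corollary~\ref{cor:subsplx_proj} applied to $v:=r$ under the strict bound $\norm{r}_2<\lambda_2$ for~\ref{lem:splx_f}, and the sign of $\scp{q-r}{s-r}=\sum_{i\notin I}q_i\left(s_i-\hat{t}\right)$ for~\ref{lem:splx_g} and~\ref{lem:splx_h}. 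Three local deviations are worth recording, all of them sound. First, for $\hat{t}\geq 0$ the paper argues by contradiction from the set of non-negative entries of $s$, exploiting that $s\in L\setminus C$ has a negative entry; your argument---summing $\max\left(s_i-\hat{t},\ 0\right)\geq s_i-\hat{t}$ over all $i$ and using $e\transp r=e\transp s=\lambda_1$---is shorter and equally rigorous. Second, for strictness of $\norm{r}_2<\lambda_2$ the paper observes directly that $\sum_{i\notin I}s_i^2>0$, since the negative entry of $s$ necessarily has index outside $I$ once $\hat{t}\geq 0$ is known; you instead prove the weak inequality and exclude equality because it would force $r=s\in C$, a contradiction---slightly longer but valid. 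Third, in~\ref{lem:splx_h} the paper manipulates the difference $\norm{q-s}_2^2-\norm{p-s}_2^2$ for $p\in\proj_D(s)$ and $q\in\proj_D(r)$ directly, whereas you establish the uniform inequality $\norm{q-s}_2^2\geq\norm{q-r}_2^2+\norm{r-s}_2^2$ for all $q\in D$, with equality on $\proj_D(r)$, and then chain it with the two minimality properties; this is the same underlying computation, packaged somewhat more transparently, and yields the same mutual inclusion.
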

\begin{proof}
\ref{lem:splx_a}
The existence of $\hat{t}\in\R$ such that $r = \max\left(s - \hat{t}\cdot e,\ 0\right)$ is guaranteed by Proposition~\ref{prop:projsplx}.
It remains to be shown that $\hat{t}\geq 0$.
Consider the index set $J := \set{j\inint{1}{n} | s_j\geq 0}$ of non-negative entries of $s$.
Because of $s\in L\setminus C$ one obtains $e\transp s = \lambda_1$ and there is an index $i$ with $s_i < 0$, hence $J\neq\discint{1}{n}$.
Therefore, $\lambda_1 = \sum_{j\in J}s_j + \sum_{j\not\in J}s_j < \sum_{j\in J}s_j$.
Now assume $\hat{t} < 0$, then $s_j - \hat{t} > 0$ for all $j\in J$, and hence $r_j = s_j - \hat{t}$ for all $j\in J$.
Using $r\in C$ yields
\begin{displaymath}
  \lambda_1
  = \smallsum_{j=1}^nr_j
  \geq \smallsum_{j\in J}r_j
  = \smallsum_{j\in J}\left(s_j - \hat{t}\right)
  = \smallsum_{j\in J}s_j - \abs{J}\cdot\hat{t}
  > \smallsum_{j\in J}s_j\text{,}
\end{displaymath}
which contradicts $\sum_{j\in J}s_j > \lambda_1$.
Hence $\hat{t}\geq 0$ must hold.

\ref{lem:splx_b}
Would $I = \emptyset$ hold, then $r = 0$, which is impossible because of $r\in C$.
$I = \discint{1}{n}$ would violate the existence of vanishing entries in $r$, as is guaranteed by Remark~\ref{rem:splx_proj_boundary}.

\ref{lem:splx_c}
Using~\ref{lem:splx_a}:
When $i\in I$, then $0 < r_i = s_i - \hat{t}$ and the claim follows.
When $i\not\in I$, then $r_i = 0$, hence $s_i - \hat{t} \leq 0$ and the claim follows.

\ref{lem:splx_d}
$e\transp r = e\transp s = \lambda_1$ because $r,s\in H$, hence using~\ref{lem:splx_c} yields
\begin{displaymath}
  0
  = \scp{e}{s - r}
  = \smallsum_{i\in I}\left(s_i - r_i\right) + \smallsum_{i\not\in I}\left(s_i - r_i\right)
  = \smallsum_{i\in I}\hat{t} + \smallsum_{i\not\in I}s_i
  = d\hat{t} + \lambda_1 - \smallsum_{i\in I}s_i\text{,}
\end{displaymath}
thus
\begin{displaymath}
  \scp{r}{s - r}
  = \smallsum_{i\in I}r_i\left(s_i - r_i\right)
  = \smallsum_{i\in I}\left(s_i - \hat{t}\right)\hat{t}
  = \hat{t}\left(\smallsum_{i\in I}s_i - d\hat{t}\right)
  = \lambda_1\hat{t}\text{.}
\end{displaymath}

\ref{lem:splx_e}
$\scp{m_I}{s - r} = \nicefrac{\lambda_1}{d}\cdot\sum_{i\in I}\left(s_i - r_i\right) = \nicefrac{\lambda_1}{d}\cdot\sum_{i\in I}\hat{t} = \lambda_1\hat{t}$ with~\ref{lem:splx_c}, and the claim follows with~\ref{lem:splx_d}.

\ref{lem:splx_f}
$r\in C_I$ by definition of $I$.
Using this, \ref{lem:splx_c} and $\hat{t}\geq 0$ from~\ref{lem:splx_a} yields
\begin{align*}
  \lambda_2^2
  &= \norm{s}_2^2
   = \smallsum_{i\in I}s_i^2 + \smallsum_{i\not\in I}s_i^2
   = \smallsum_{i\in I}\left(r_i + \hat{t}\right)^2 + \smallsum_{i\not\in I}s_i^2\\
  &> \smallsum_{i\in I}r_i^2 + d\hat{t}^2 + 2\hat{t}\smallsum_{i\in I}r_i
  = \norm{r}_2^2 + d\hat{t}^2 + 2\lambda_1\hat{t}
  \geq \norm{r}_2^2\text{,}
\end{align*}
thus the claim holds using Corollary~\ref{cor:subsplx_proj}.

\ref{lem:splx_g}
With~\ref{lem:splx_f} follows $q\in C_I$.
Hence $\scp{q}{s - r} = \sum_{i\in I}q_i\left(s_i - r_i\right) = \hat{t}\sum_{i\in I}q_i = \lambda_1\hat{t}$ with~\ref{lem:splx_c}, and the claims follow with~\ref{lem:splx_d} and Proposition~\ref{prop:sqnrm}.

\ref{lem:splx_h}
Let $p\in\proj_D(s)$ and $q\in\proj_D(r)$.
Then using~\ref{lem:splx_g} and Proposition~\ref{prop:sqnrm} one obtains that $\norm{q - s}_2^2 - \norm{p - s}_2^2 = \norm{q - r}_2^2 - \norm{p - r}_2^2 + 2\scp{p - r}{s - r}$.
With~\ref{lem:splx_c} and~\ref{lem:splx_d} follows
\begin{align*}
  \scp{p - r}{s - r} 
  &= \smallsum_{i\in I}p_i\left(s_i - r_i\right) + \smallsum_{i\not\in I}p_i\left(s_i - r_i\right) - \scp{r}{s - r}
  = \smallsum_{i\in I}p_i\hat{t} + \smallsum_{i\not\in I}p_is_i - \lambda_1\hat{t}\\
  &= \hat{t}\left(\lambda_1 - \smallsum_{i\not\in I}p_i\right) + \smallsum_{i\not\in I}p_is_i - \lambda_1\hat{t}
  = \smallsum_{i\not\in I}p_i\left(s_i - \hat{t}\right) \leq 0\text{.}
\end{align*}
Now $q\in\proj_D(r)$ yields $\norm{q - r}_2^2 \leq \norm{p - r}_2^2$, and hence $\norm{q - s}_2^2 - \norm{p - s}_2^2 \leq 0$, thus $q\in\proj_D(s)$.
Similarly, $\norm{q - s}_2^2 - \norm{p - s}_2\geq 0$ with $p\in\proj_D(s)$, so $\norm{q - r}_2^2 - \norm{p - r}_2^2 \geq -2\scp{p - r}{s - r}\geq 0$, therefore $p\in\proj_D(r)$.
\end{proof}
The following corollary states a similar result as in \cite{Theis2005}.
However, the proof here uses the notion of simplex projections instead of relying on pure analytical statements.
The result presented here is stronger, as multiple entries of the vector can be set to zero simultaneously, while in \cite{Theis2005} at most one entry can be zeroed out in a single iteration.
\begin{corollary}
\label{cor:splx}
Let $s\in L\setminus C$ and $p\in\proj_D(s)$.
Then $p_i = 0$ for all $i\inint{1}{n}$ with $s_i \leq 0$.
\end{corollary}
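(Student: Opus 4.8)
The plan is to reduce everything to Lemma~\ref{lem:splx}, whose hypotheses match exactly the situation here since $s\in L\setminus C$. First I would fix some $r\in\proj_C(s)$ and set $I:=\set{i\inint{1}{n} | r_i\neq 0}$. Lemma~\ref{lem:splx}\ref{lem:splx_a} then supplies a separator $\hat{t}\in\R_{\geq 0}$ with $r=\max(s-\hat{t}\cdot e,\ 0)$, and Lemma~\ref{lem:splx}\ref{lem:splx_h} gives $\proj_D(s)=\proj_D(r)\subseteq C_I$. Consequently, for the given $p\in\proj_D(s)$ we already know $p_i=0$ for every $i\not\in I$; it therefore suffices to show that $s_i\leq 0$ forces $i\not\in I$.

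This last implication is where the non-negativity of the separator does the work. By Lemma~\ref{lem:splx}\ref{lem:splx_c}, every index $i\in I$ satisfies $s_i>\hat{t}$; taking the contrapositive, $s_i\leq\hat{t}$ implies $i\not\in I$. Now if $s_i\leq 0$, then since $\hat{t}\geq 0$ we have $s_i\leq 0\leq\hat{t}$, hence $s_i\leq\hat{t}$ and thus $i\not\in I$. Combining this with $p\in C_I$ yields $p_i=0$, which is the claim.

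The argument is short once Lemma~\ref{lem:splx} is available, so I do not expect a genuine obstacle at this stage. The real content sits entirely in the inequality $\hat{t}\geq 0$ established in Lemma~\ref{lem:splx}\ref{lem:splx_a}: without it, a coordinate with $s_i\leq 0$ but $s_i>\hat{t}$ could survive the simplex projection and hence potentially the projection onto $D$, and the corollary would fail. The statement thus records, in a clean form, the sparseness-enhancing effect noted after Proposition~\ref{prop:projsplx}---namely that projecting onto $C$, and then onto $D$, zeroes out at least all the non-positive coordinates of $s$.
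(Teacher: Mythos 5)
Your proposal is correct and follows essentially the same route as the paper's proof: both rest on the non-negativity of the separator $\hat{t}$ from Lemma~\ref{lem:splx}\ref{lem:splx_a} to conclude that indices with $s_i\leq 0$ are zeroed out by the simplex projection, and then invoke Lemma~\ref{lem:splx}\ref{lem:splx_h} to transfer this to $\proj_D(s)\subseteq C_I$. The only cosmetic difference is that the paper reads $r_i=0$ directly off the formula $r=\max\left(s-\hat{t}\cdot e,\ 0\right)$, whereas you pass through the contrapositive of Lemma~\ref{lem:splx}\ref{lem:splx_c}; these are the same observation.
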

\begin{proof}
Let $i\inint{1}{n}$ with $s_i \leq 0$.
Let $r\in\proj_C(s)$.
With Lemma~\ref{lem:splx}\ref{lem:splx_a} follows $r_i = 0$ because $\hat{t}\geq 0$, and the claim follows with Lemma~\ref{lem:splx}\ref{lem:splx_h}.
\end{proof}
The final step is to meet the hypersphere constraint again.
For this, the simplex projection $r$ is projected onto the target hypersphere, simultaneously keeping already vanished entries at zero, yielding a point $u$.
Lemma~\ref{lem:splxrcsn} gives an explicit formulation of this projection and shows that the solution set with respect to the projection onto $D$ stays the same.
Refer to Figure~\ref{fig:simplex_proj_rec} for a sketch of the construction of $u$.
\begin{lemma}
\label{lem:splxrcsn}
Let $s\in L\setminus C$, $r := \proj_C(s) = \max\left(s - \hat{t}\cdot e,\ 0\right)$ with $\hat{t}\in\R_{\geq 0}$ using Lemma~\ref{lem:splx}.
Let $I := \set{i\inint{1}{n} | r_i\neq 0}$ and $d := \abs{I}$.
Let $u := m_I + \delta\left(r - m_I\right)$ where $\delta := \nicefrac{\sqrt{\rho_I}}{\norm{r - m_I}_2}$.
Then:
\begin{enumerate}
\item \label{lem:splxrcsn_a}
$u\in L$ and $u_i = 0$ for all $i\not\in I$, hence $u\in L_I$.

\item \label{lem:splxrcsn_b}
$\proj_D(u)\subseteq C_I$.

\item \label{lem:splxrcsn_c}
$u = \proj_{L_I}(r)$.

\item \label{lem:splxrcsn_d}
$\proj_D(u) = \proj_D(r) = \proj_D(s) \subseteq C_I$.
\end{enumerate}
\end{lemma}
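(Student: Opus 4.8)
The plan is to exploit that $L_I$, together with the face hyperplane $H_I := \set{a\in H | a_i = 0 \text{ for } i\not\in I}$, the sphere constraint, the face $C_I$, its barycenter $m_I$ and squared radius $\rho_I$, is an exact copy of the original geometry $L, H, K, C, m, \rho$ inside the coordinate subspace $\R^I := \operatorname{span}\set{e_i | i\in I}$, with $n$ replaced by $d = \abs{I}$. Consequently Lemma~\ref{lem:intersection_sphere_plane} and Lemma~\ref{lem:sphere}, read in $\R^I$, apply verbatim. Since $r = \proj_C(s)\in C_I\subseteq H_I$ and (assuming $r\neq m_I$) the shift-and-scale formula of Lemma~\ref{lem:sphere}\ref{lem:sphere_a},\ref{lem:sphere_b} produces exactly $u = m_I + \delta(r-m_I)$, I would read off (a) at once: $u\in L_I$, and $u_i = 0$ for $i\not\in I$ because both $m_I$ and $r$ are supported on $I$. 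The same reading gives (c), namely $u = \proj_{L_I}(r)$. The degenerate case $r = m_I$ (a null set, consistent with the uniqueness hypothesis of Theorem~\ref{thm:projfunc}) is handled by choosing $u$ as in the analogue of Remark~\ref{rem:projmontoL}.

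For (d) I would first record the elementary fact that if $\emptyset\neq A\subseteq B\subseteq\R^n$ and $\proj_B(x)\subseteq A$, then $\proj_B(x) = \proj_A(x)$. Writing $\tilde{D} := \set{a\in D | a_i = 0 \text{ for } i\not\in I}$, one has $\tilde{D}\subseteq L_I$, so with $u = \proj_{L_I}(r)$ from (c) the subspace version of Lemma~\ref{lem:sphere}\ref{lem:sphere_c} (with $\tilde{D}$ playing the role of $D$) yields $\proj_{\tilde{D}}(u) = \proj_{\tilde{D}}(r)$. By Lemma~\ref{lem:splx}\ref{lem:splx_f} the set $\proj_D(r)$ lies in $C_I$, hence in $\tilde{D}$, so the elementary fact gives $\proj_D(r) = \proj_{\tilde{D}}(r)$; and (b) gives likewise $\proj_D(u) = \proj_{\tilde{D}}(u)$. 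Chaining these equalities with Lemma~\ref{lem:splx}\ref{lem:splx_h} ($\proj_D(r) = \proj_D(s)$) produces $\proj_D(u) = \proj_D(r) = \proj_D(s)\subseteq C_I$, which is (d).

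The crux is therefore (b), because---unlike $r$---the point $u$ violates both hypotheses of Corollary~\ref{cor:subsplx_proj}: it satisfies $\norm{u}_2 = \lambda_2$ rather than $\norm{u}_2 < \lambda_2$, and since $\delta > 1$ (which follows from $\norm{r}_2 < \lambda_2$, shown in Lemma~\ref{lem:splx}\ref{lem:splx_f}) it need not even lie in $C_I$. My approach would be to reformulate the projection: as $u$ and every $q\in D$ lie on the sphere $K$, Proposition~\ref{prop:sqnrm} gives $\norm{q - u}_2^2 = 2(\lambda_2^2 - \scp{q}{u})$, so $\proj_D(u) = \arg\max_{q\in D}\scp{q}{u}$, and because $u$ is supported on $I$ this objective depends only on the $I$-coordinates of $q$. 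To show the maximizer cannot place mass on $I\comp$, I would perturb $u$ inward, setting $v_\gamma := (1-\gamma)m_I + \gamma u$ for $\gamma\in\intervaloo{0}{1}$: one checks $v_\gamma\in C_I$ and $\norm{v_\gamma}_2 < \lambda_2$, so Corollary~\ref{cor:subsplx_proj} gives $\proj_D(v_\gamma)\subseteq C_I$, while $u = \proj_{L_I}(v_\gamma)$ together with the subspace Lemma~\ref{lem:sphere}\ref{lem:sphere_c} gives $\proj_{\tilde{D}}(v_\gamma) = \proj_{\tilde{D}}(u)$. Letting $\gamma\to 1$ and using continuity of $x\mapsto\operatorname{dist}(x, D)$ then forces a projection of $u$ into $C_I$. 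The main obstacle I anticipate is upgrading this to the full inclusion $\proj_D(u)\subseteq C_I$ in the non-unique case: the limiting argument cleanly produces one projection in $C_I$, but ruling out additional maximizers of $\scp{\cdot}{u}$ lying off $\tilde{D}$ requires the mass-reallocation estimate behind Lemma~\ref{lem:subsplx_proj} (that pouring $I\comp$-mass onto $I$ strictly increases $\scp{\cdot}{u}$), which is the genuinely technical step.
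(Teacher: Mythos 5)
Your parts (a), (c) and (d) are sound and essentially the paper's own argument: the paper proves (c) and (d) by the same coordinate-restriction device (an explicit map $\varphi$ extracting the $I$-coordinates, under which $L_I$, $C_I$, $m_I$, $\rho_I$ become copies of $L$, $C$, $m$, $\rho$ in dimension $d$, so that Lemma~\ref{lem:sphere} applies verbatim), and your ``elementary fact'' about restricting a projection to a subset already containing all minimizers is just a tidier wrapper for the paper's element-chasing in (d). The genuine gap is in (b), and it is more basic than the non-uniqueness caveat you raise at the end. Your perturbation argument needs $v_\gamma = (1-\gamma)m_I + \gamma u\in C_I$ and $\norm{v_\gamma}_2 < \lambda_2$ for $\gamma$ arbitrarily close to $1$, and this fails in exactly the only case that matters. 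As you yourself observe, $\delta > 1$ (since $\norm{r}_2 < \lambda_2$ forces $\norm{r - m_I}_2^2 = \norm{r}_2^2 - \nicefrac{\lambda_1^2}{d} < \rho_I$), so $u$ is $r$ pushed \emph{away} from $m_I$; entries of $r$ below $\nicefrac{\lambda_1}{d}$ can thereby be driven negative, and in general $u\notin C_I$ --- which is precisely why Algorithm~\ref{alg:projfunc} needs further iterations. But whenever $u$ has a negative entry, $v_\gamma$, being a convex combination of $m_I$ and $u$, has that same entry negative for all $\gamma$ near $1$, so $v_\gamma\notin C_I$, Corollary~\ref{cor:subsplx_proj} cannot be invoked along the tail of your limiting sequence, and the continuity argument never reaches $u$. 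In the complementary case $u\in\R_{\geq 0}^n$ one has $u\in L\cap\R_{\geq 0}^n\subseteq D$ and (b) is trivial. On top of this, your closing caveat is a real second gap: even where the limit argument applies, it exhibits only one element of $\proj_D(u)$ lying in $C_I$, whereas (d) consumes the full inclusion $\proj_D(u)\subseteq C_I$.

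The missing idea is that the machinery you were trying to rebuild already applies to $u$ itself, by recursion. Part (a) gives $u\in L$ with $u_i = 0$ for all $i\notin I$; in the nontrivial case $u\notin C$, hence $u\in L\setminus C$, and Corollary~\ref{cor:splx} applied to $u$ states that \emph{every} $q\in\proj_D(u)$ satisfies $q_i = 0$ whenever $u_i\leq 0$, in particular for all $i\notin I$; therefore $\proj_D(u)\subseteq C_I$, with no uniqueness assumption needed. This one-line argument is the paper's entire proof of (b). The ``mass-reallocation estimate'' you anticipated does not have to be redone for $u$: it is already encapsulated in Corollary~\ref{cor:splx} (via Lemma~\ref{lem:splx}\ref{lem:splx_f}), which internally passes to the simplex projection $\proj_C(u)$ --- a point that, unlike $u$, does satisfy the hypotheses of Corollary~\ref{cor:subsplx_proj}. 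That $u$ is again a point of $L\setminus C$, just like the $s$ one started from, is exactly the self-similarity that makes the whole iteration work.
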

\begin{proof}
\ref{lem:splxrcsn_a}
Clearly $u\in H$.
With $m_I,r\in H$ and Remark~\ref{rem:point_m} follows that $\scp{m}{m_I} = \scp{m}{r} = \nicefrac{\lambda_1^2}{n}$.
Moreover, $\scp{m_I}{m_I} = \nicefrac{\lambda_1^2}{d}$ and $\scp{r}{m_I} = \sum_{i\in I}r_i\cdot\nicefrac{\lambda_1}{d} = \nicefrac{\lambda_1^2}{d}$, therefore $\scp{r}{m_I - m} = \scp{m_I}{m_I - m}$.
With $u = \left(1 - \delta\right)m_I + \delta r$ it is $\scp{u}{m_I - m} = \scp{m_I}{m_I - m} = \lambda_1^2\left(\nicefrac{1}{d} - \nicefrac{1}{n}\right)$.
Hence $\scp{u - m_I}{m_I - m} = 0$.
Further, $\norm{m - m_I}_2^2 = \norm{m}_2^2 + \norm{m_I}_2^2 - 2\scp{m}{m_I} = \lambda_1^2\left(\nicefrac{1}{d} - \nicefrac{1}{n}\right)$.

Thus with Proposition~\ref{prop:sqnrm}, $\norm{u - m}_2^2 = \norm{u - m_I}_2^2 + \norm{m_I - m}_2^2 = \rho_I + \lambda_1^2\left(\nicefrac{1}{d} - \nicefrac{1}{n}\right) = \rho$, and with Lemma~\ref{lem:intersection_sphere_plane} follows $u\in L$.
For $i\not\in I$, one obtains $u_i = \left(1 - \delta\right)e_i\transp m_I + \delta r_i = 0$, hence $u\in L_I$.

\ref{lem:splxrcsn_b}
If $u\in C$, then $u\in D$ because of $u\in L$ with~\ref{lem:splxrcsn_a}, and hence $u = \proj_D(u)$.
The claim then follows with $u_i = 0$ for all $i\not\in I$.

If $u\not\in C$, then let $q\in\proj_D(u)$.
With Corollary~\ref{cor:splx} applied to $u$ follows that $q_i = 0$ for all $i$ with $u_i \leq 0$, especially for all $i\not\in I$.
Hence the claim follows.

\ref{lem:splxrcsn_c}
Write $I = \discint{i_1}{i_d}$ and consider $\varphi\colon\R^n\to\R^d$, $\left(x_1,\ \dotsc,\ x_n\right)\transp\mapsto\left(x_{i_1},\ \dotsc,\ x_{i_d}\right)\transp$.
Further, let $\tilde{H} := \set{a\in\R^d | e\transp a = \lambda_1}$, $\tilde{K} := \set{q\in\R^d | \norm{q}_2 = \lambda_2}$, $\tilde{L} := \tilde{H}\cap \tilde{K}$ and $\tilde{D} := \R_{\geq 0}^d \cap \tilde{H} \cap \tilde{K}$.
Clearly, when $x_i = 0$ for all $i\not\in I$, then $e\transp x = e\transp\varphi(x)$ and $\norm{x}_2 = \norm{\varphi(x)}_2$.
Thus in this case membership of $x$ in one of $H$, $K$, $L$ or $D$ implies membership of $\varphi(x)$ in $\tilde{H}$, $\tilde{K}$, $\tilde{L}$ or $\tilde{D}$, respectively.

Application of Lemma~\ref{lem:sphere} to $\varphi(r)$ and $\varphi(u)$ implies that $\varphi(u) = \proj_{\tilde{L}}(\varphi(r))$.
Let $q\in L_I$, then $\varphi(q)\in\tilde{L}$, hence $\norm{\varphi(u) - \varphi(r)}_2 \leq \norm{\varphi(q) - \varphi(r)}_2$.
From $i\not\in I$ follows $r_i = u_i = q_i = 0$, hence $\norm{u - r}_2 = \norm{\varphi(u) - \varphi(r)}_2$ and $\norm{q - r}_2 = \norm{\varphi(q) - \varphi(r)}_2$, and the claim follows.

\ref{lem:splxrcsn_d}
For the converse of $\varphi$, let $\psi\colon\R^d\to\R^n$, $\tilde{x}\mapsto x$ where $x_i = 0$ for all $i\not\in I$ and $x_i = \tilde{x}_j$ when there is a $j\inint{1}{d}$ with $i = i_j$.
Analogous to the above, membership of $\tilde{y}$ in one of $\tilde{H}$, $\tilde{K}$, $\tilde{L}$ or $\tilde{D}$ implies membership of $\psi(\tilde{y})$ in $H$, $K$, $L$ or $D$, respectively.

With Lemma~\ref{lem:splx}\ref{lem:splx_f} and Lemma~\ref{lem:splx}\ref{lem:splx_h} it is enough to show $\proj_D(u) = \proj_D(r)$.
Like in~\ref{lem:splxrcsn_c}, from Lemma~\ref{lem:sphere} follows as well that $\proj_{\tilde{D}}(\varphi(r)) = \proj_{\tilde{D}}(\varphi(u))$.
Let $p\in\proj_D(u)$ and $q\in\proj_D(r)$, then $p\in C_I$ with~\ref{lem:splxrcsn_b} and $q\in C_I$ with Lemma~\ref{lem:splx}\ref{lem:splx_f}, and thus $\varphi(p),\varphi(q)\in\tilde{D}$.
Assume $\varphi(p)\not\in\proj_{\tilde{D}}(\varphi(u))$, then there exists an $a\in\tilde{D}$ with $\norm{\psi(a) - u}_2 = \norm{a - \varphi(u)}_2 < \norm{\varphi(p) - \varphi(u)}_2 = \norm{p - u}_2$, violating the minimality of $p$.
Hence $\varphi(p)\in\proj_{\tilde{D}}(\varphi(u))$, and analogously follows $\varphi(q)\in\proj_{\tilde{D}}(\varphi(r))$.
Now $\proj_{\tilde{D}}(\varphi(r)) = \proj_{\tilde{D}}(\varphi(u))$ implies that $\varphi(p)\in\proj_{\tilde{D}}(\varphi(r))$ and $\varphi(q)\in\proj_{\tilde{D}}(\varphi(u))$.
Thus, $\norm{p - r}_2 = \norm{\varphi(p) - \varphi(r)}_2 = \norm{\varphi(q) - \varphi(r)}_2 = \norm{q - r}_2$, so $p\in\proj_D(r)$, and one obtains analogously that $q\in\proj_D(u)$.
Therefore $\proj_D(u) = \proj_D(r)$.
\end{proof}
With Lemma~\ref{lem:splxrcsn} a point $u$ is constructed.
If $u\in C$, then it is already the solution for the projection onto $D$.
Otherwise, Lemma~\ref{lem:splx} and Lemma~\ref{lem:splxrcsn} can be applied once more, gaining a new point $u$.
Lemma~\ref{lem:splx}\ref{lem:splx_b} states that the amount of nonzero entries of $u$ must decrease, hence this process can be repeated for at most $n$ iterations.
If a point with only two non-vanishing entries results, it is guaranteed to be a solution by Proposition~\ref{prop:splx_inradius}.

\subsection{Proof of Theorem~\ref{thm:projfunc} and Theorem~\ref{thm:projfunc_improved}}
\label{sect:proof_projfunc_thms}
Using the previous results it can now be shown that the proposed Algorithm~\ref{alg:projfunc} actually computes a correct solution, and that the algorithm always terminates in finite time.

\begin{proofof}[Proof of Theorem~\ref{thm:projfunc}]
For proving partial correctness, let $x\in\R^n$ be arbitrary.
Lemma~\ref{lem:plane} yields $\proj_D(x) = \proj_D(r)$ after line~\ref{algl:projH}, and with Lemma~\ref{lem:sphere} follows $\proj_D(x) = \proj_D(s)$ after line~\ref{algl:projL}.
There is a pre-test loop in line~\ref{algl:projwhile}, and it has to be shown that the loop-invariant is $\proj_D(x) = \proj_D(s)$.
At the beginning of the loop, $s\not\in\R_{\geq 0}^n$ must hold, thus $s\in L\setminus C$.
After line~\ref{algl:projC}, $\proj_D(x) = \proj_D(r)$ holds with Lemma~\ref{lem:splx}.
Then with Lemma~\ref{lem:splxrcsn}, $\proj_D(x) = \proj_D(s)$ is ensured after line~\ref{algl:projLI}, hence the loop-invariant holds.
Thus, after the loop it is $\proj_D(x) = \proj_D(s)$ and $s\in D$, so $\proj_D(x) = s$.
If $r = m$ in line~\ref{algl:projL} or $r = m_I$ in line~\ref{algl:projLI}, $s$ can be chosen to be any point from $L$ or $L_I$, respectively, for example the point given in Remark~\ref{rem:projmontoL}.
In this case, the projection is not unique, but a valid representative is found.

To prove total correctness, it has to be shown that the loop in line~\ref{algl:projwhile} terminates.
Remark~\ref{rem:splx_proj_boundary} applied to $C_I$ guarantees that the number of nonzero entries in $s$ is strictly less at the end of the loop than the number of nonzero entries upon entering the loop.
Hence, at most $n$ iterations of the loop can be carried out, and when $\abs{I} = 2$ the solution is already in $D$ with Proposition~\ref{prop:splx_inradius}.
Thus the algorithm terminates in finite time.
\end{proofof}
It remains to be shown that the optimized variant is also correct.

\begin{proofof}[Proof of Theorem~\ref{thm:projfunc_improved}]
First note that Algorithm~\ref{alg:projfunc_explicit} consists of a procedure {\tt proj\_L} carrying out projections onto $L$ and $L_I$ in-place, and a main body.
A function {\tt proj\_C} is called to obtain the information on how to perform projections onto $C$.
This is carried out by Algorithm~\ref{alg:projsplx_general}.
Upon entry of the main body, the input vector $x$ is sorted in descending order, yielding a vector $y$.
The algorithm then operates on the sorted vector $y$, and undoes the sorting permutation at the end.
Because $H$, $L$ and $C$ are permutation-invariant, the projections onto the respective sets are guaranteed to remain sorted with Lemma~\ref{lem:proj_props}.

Therefore, $y$ has not to be sorted again for the simplex projection, as Algorithm~\ref{alg:projsplx_general} would require.
Also note from Lemma~\ref{lem:splx} that in the simplex projection the smallest elements are set to zero, and the original Algorithm~\ref{alg:projfunc} continues working on the $d$ non-vanishing entries.
Because of the order-preservation, entries $d+1,\ \dotsc,\ n$ of $y$ are zero, and all relevant information is concentrated in $y_1,\ \dotsc,\ y_d$.
Therefore, Algorithm~\ref{alg:projfunc_explicit} can continue working on these first $d$ entries only, and the index set of non-vanishing entries is always $I = \discint{1}{d}$.
As the nonzero elements are stored contiguously in memory, access to $y$ can be realized as a small unit-stride array.
This is more efficient than working on a large and sparsely populated vector.
Therefore, the loop starting at line~\ref{algl:projfunc_explicit_while} corresponds to the loop starting at line~\ref{algl:projwhile} in Algorithm~\ref{alg:projfunc}.
At the end of the main body, the sorting permutation $\tau$ is inverted and the entries from the sorted result vector $y$ are stored in a new vector $s$.
Because $y_{d+1},\ \dotsc,\ y_n = 0$, these entries can be ignored by setting the entire vector $s$ to zero before-hand.
\end{proofof}
The proposed optimizations hence lead to the same solution which the original algorithm computes.

\section{Analytical Properties of the Sparseness-Enforcing Projection Operator}
\label{sect:analytical_properties}
In this appendix, it is studied in which situations $\pi_{\geq 0}$ and $\pi$ as defined in Section~\ref{sect:projfunc_differentiability} are differentiable, and hence continuous.
Further, an explicit expression for their gradient is sought.
It is clear by Theorem~\ref{thm:projfunc} that the projection of any point onto $D$ can be written as finite composition of projections onto $H$, $L$, $C$ and $L_I$, respectively.
In other words, for all points $x\in\R^n\setminus R$ there exists a finite sequence of index sets $I_1,\dotsc,I_h\subseteq\discint{1}{n}$ with $I_j\supsetneq I_{j+1}$ for $j\inint{1}{h-1}$ such that
\begin{displaymath}
  \pi_{\geq 0}(x) = \left[\medcirc^{j=h}_1\big(\proj_{L_{I_j}}\circ\proj_C\big)\circ\proj_L\circ\proj_H\right](x)\text{,}
\end{displaymath}
where $\medcirc^{j=h}_1$ denotes iterated composition of functions, starting with $j = h$ and decreasing until $j = 1$, that is
$\pi_{\geq 0}(x) = \proj_{L_{I_h}}(\proj_C(\cdots\proj_{L_{I_1}}(\proj_C(\proj_L(\proj_H(x))))\cdots))$.
The sequence $I_1,\dotsc,I_h$ here depends on $x$.
The intermediate goal is to show that this sequence remains fixed in a neighborhood of $x$, and that each projection in the chain is differentiable almost everywhere.
This then implies differentiability of $\pi_{\geq 0}$ except for a null set.
Because of the close relationship of $\pi$ with $\pi_{\geq 0}$, $\pi$ is then also differentiable almost everywhere as shown in the end of this appendix.

The projection onto $H$ is differentiable everywhere, as is clear from its explicit formula given in Lemma~\ref{lem:plane}.
Considering $L$ and $L_I$ for $I\subseteq\discint{1}{n}$, the projection is unique and can be cast as function $\R^n\to\R^n$ unless the point to be projected is equal to the barycenters $m$ and $m_I$, respectively.
By considering the explicit formulas given in Lemma~\ref{lem:sphere} and Lemma~\ref{lem:splxrcsn}, it is clear that these functions are differentiable as composition of differentiable functions.
Thus only the projection onto the simplex $C$ demands attention.
Note that the number $\hat{t}$ from Proposition~\ref{prop:projsplx} is equal to the mean value of the entries of the argument, that survive the projection, modulo an additive constant:
\begin{proposition}
\label{prop:splx_proj_subset}
Let $x\in\R^n\setminus C$ and $p := \proj_C(x)$.
Then there is a set $I\subseteq\discint{1}{n}$ such that $p = \max(x - \hat{t}\cdot e,\ 0)$ where $\hat{t} = \nicefrac{1}{\abs{I}}\cdot\left(\sum_{i\in I}x_i - \lambda_1\right)$.
\end{proposition}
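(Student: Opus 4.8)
The plan is to build directly on Proposition~\ref{prop:projsplx}, which already guarantees the existence of a separator $\hat{t}\in\R$ with $p = \max\left(x - \hat{t}\cdot e,\ 0\right)$, and then to pin down the precise value of $\hat{t}$ as the mean of the surviving coordinates. First I would define the index set $I := \set{i\inint{1}{n} | p_i \neq 0}$ of coordinates that survive the projection. The element-wise maximum then determines the structure of $p$ completely: for $i\in I$ one has $p_i = x_i - \hat{t} > 0$, whereas for $i\not\in I$ one has $p_i = 0$. In particular $I$ coincides with $\set{i\inint{1}{n} | x_i > \hat{t}}$.

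Next I would exploit that $p\in C$, so that $e\transp p = \lambda_1$ by the definition of the scaled canonical simplex. Splitting this sum according to membership in $I$ and using the coordinate-wise identities just recorded gives $\lambda_1 = \sum_{i\in I}p_i = \sum_{i\in I}\left(x_i - \hat{t}\right) = \sum_{i\in I}x_i - \abs{I}\hat{t}$. Solving this single linear equation for $\hat{t}$ immediately yields the claimed formula $\hat{t} = \nicefrac{1}{\abs{I}}\cdot\big(\sum_{i\in I}x_i - \lambda_1\big)$.

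The only point deserving a brief justification is that $\abs{I}\neq 0$, so that the division is legitimate. This follows because $\lambda_1 > 0$ forces $p\neq 0$: if every entry of $p$ vanished, then $e\transp p = 0 \neq \lambda_1$, contradicting $p\in C$. Hence $I\neq\emptyset$ and $\abs{I}\geq 1$. Since the whole argument is a direct consequence of the known representation of $p$ from Proposition~\ref{prop:projsplx} together with the defining constraint $e\transp p = \lambda_1$ of $C$, there is no substantial obstacle here; the only care needed is in reading off the coordinate-wise equalities $p_i = x_i - \hat{t}$ on $I$ and in confirming the nonemptiness of $I$.
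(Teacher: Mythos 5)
Your proof is correct. It takes a slightly different, and in fact more self-contained, route than the paper's own argument. The paper proves this proposition by pointing back at Algorithm~\ref{alg:projsplx_general}: the separator returned in line~\ref{algl:projsplx_general_tone} is computed as $\frac{s - \lambda_1}{i}$ with $s$ the sum of the $i$ largest sorted entries, so undoing the sorting permutation $\tau$ immediately yields the claimed formula with $I$ the set of surviving indices; the result is thus read off as a by-product of the algorithm whose correctness is asserted in Proposition~\ref{prop:projsplx}\ref{prop:projsplx_b}. You instead bypass the algorithm entirely: you use only the max-representation from Proposition~\ref{prop:projsplx}\ref{prop:projsplx_a}, identify $I$ with the support of $p$, and solve the single linear equation $e\transp p = \lambda_1$ (forced by $p\in C$) for $\hat{t}$. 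This makes transparent \emph{why} the formula holds --- once the support is fixed, the $L_1$ constraint determines $\hat{t}$ uniquely --- and it does not rely on how the separator is searched for in the algorithm's loop. Your explicit verification that $I\neq\emptyset$ (from $\lambda_1 > 0$, so $p\neq 0$) is a small detail the paper leaves implicit; the rest of your coordinate-wise bookkeeping ($p_i = x_i - \hat{t} > 0$ on $I$, $p_i = 0$ off $I$) matches exactly what the algorithmic argument encodes, so the two proofs establish the same identity by different means.
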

\begin{proof}
Follows directly from Proposition~\ref{prop:projsplx} and Algorithm~\ref{alg:projsplx_general} by undoing the permutation $\tau$.
\end{proof}
Note that when $I = \discint{1}{n}$, this is very similar to the projection onto $H$, see Lemma~\ref{lem:plane}.
The next result states a condition under which $I$ is locally constant, and hence identifies points where the projection onto $C$ is differentiable with a closed form expression:
\begin{lemma}
\label{lem:splx_proj_analytics}
Let $x\in\R^n\setminus C$, and let $p := \proj_C(x)$, $I\subseteq\discint{1}{n}$ and $\hat{t}\in\R$ be given as in Proposition~\ref{prop:splx_proj_subset}.
When $x_i\neq\hat{t}$ for all $i\inint{1}{n}$, then the following holds where $u := \sum_{i\in I}e_i\in\R^n$ and $v := e - u\in\R^n$ are the indicator vectors of $I$ and $I^C$, respectively:
\begin{enumerate}
\item \label{lem:splx_proj_analytics_a}
$p_i > 0$ if and only if $i\in I$.

\item \label{lem:splx_proj_analytics_b}
$p = x + \nicefrac{1}{d}\cdot\left(\lambda_1 - u\transp x\right)u - v\hada x$, where $\hada$ denotes the Hadamard product and $d := \abs{I}$.

\item \label{lem:splx_proj_analytics_c}
There exists a constant $\epsilon > 0$ and a neighborhood $U := \set{s\in\R^n | \norm{x - s}_2 < \epsilon}$ of $x$, such that $\sgn(\proj_C(s)) = \sgn(p)$ for all $s\in U$.

\item \label{lem:splx_proj_analytics_d}
$\proj_C(s) = s + \nicefrac{1}{d}\cdot\left(\lambda_1 - u\transp s\right)u - v\hada s$ for all $s\in U$.

\item \label{lem:splx_proj_analytics_e}
$s\mapsto\proj_C(s)$ is differentiable in $x$.
\end{enumerate}
\end{lemma}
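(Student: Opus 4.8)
The plan is to hang everything on the explicit description $p = \max(x - \hat{t}\cdot e,\ 0)$ from Proposition~\ref{prop:splx_proj_subset}, exploiting the nondegeneracy hypothesis $x_i \neq \hat{t}$ to obtain a \emph{strict} separation. For part (a), observe that $p_i = \max(x_i - \hat{t},\ 0)$, so $p_i > 0$ exactly when $x_i > \hat{t}$; the selection rule of Algorithm~\ref{alg:projsplx_general} (applied through Proposition~\ref{prop:projsplx}) guarantees that the retained index set $I$ consists precisely of the entries lying above the threshold $\hat{t}$, and the hypothesis $x_i \neq \hat{t}$ upgrades the inequality for the discarded indices to $x_i < \hat{t}$. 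Hence $p_i > 0$ if and only if $i \in I$. Part (b) is then a direct entrywise verification: since $u\transp x = \sum_{i \in I} x_i$ we have $-\hat{t} = \frac{1}{d}(\lambda_1 - u\transp x)$, so the claimed right-hand side equals $x_i - \hat{t} = p_i$ on $I$ (where $u_i = 1$, $v_i = 0$) and $x_i - x_i = 0 = p_i$ off $I$ (where $u_i = 0$, $v_i = 1$).

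The heart of the lemma, and the step I expect to be the main obstacle, is the local constancy of the active set claimed in part (c). I would deliberately avoid arguing through mere continuity (nonexpansiveness) of $\proj_C$: that would show positive entries remain positive, but not that vanishing entries remain vanishing. Instead I would track the affine threshold $g(s) := \frac{1}{d}(u\transp s - \lambda_1)$, which satisfies $g(x) = \hat{t}$. By part (a) the gap $\eta := \min_i \abs{x_i - \hat{t}}$ is strictly positive, and each map $s \mapsto s_i - g(s)$ is affine with value $x_i - \hat{t}$ at $x$; therefore there is an $\epsilon > 0$ and a ball $U$ of radius $\epsilon$ about $x$ on which $\sgn(s_i - g(s)) = \sgn(x_i - \hat{t})$ for every $i$, i.e.\ $s_i > g(s)$ for $i \in I$ and $s_i < g(s)$ for $i \notin I$.

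To conclude parts (c) and (d) simultaneously, I would verify on $U$ that the candidate $q(s) := \max(s - g(s)\cdot e,\ 0)$ really is $\proj_C(s)$. The sign conditions give $q(s)_i = s_i - g(s) > 0$ for $i \in I$ and $q(s)_i = 0$ otherwise, so $q(s) \geq 0$ and $e\transp q(s) = u\transp s - d\,g(s) = \lambda_1$, whence $q(s) \in C$. Because the simplex projection is the unique point of $C$ of the form $\max(s - t\cdot e,\ 0)$ --- the separator being the unique zero of the strictly decreasing map $t \mapsto e\transp\max(s - t\cdot e,\ 0) - \lambda_1$ (Proposition~\ref{prop:projsplx}) --- it follows that $q(s) = \proj_C(s)$. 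This yields $\sgn(\proj_C(s)) = \sgn(q(s)) = \sgn(p)$ on $U$, which is part (c), and rewriting $q(s)$ entrywise exactly as in part (b) gives $\proj_C(s) = s + \frac{1}{d}(\lambda_1 - u\transp s)u - v\hada s$, which is part (d); shrinking $\epsilon$ so that $U \subseteq \R^n \setminus C$ does no harm.

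Finally, part (e) is immediate: on $U$ the map $s \mapsto \proj_C(s)$ coincides with the affine map of part (d), which is differentiable everywhere (its Jacobian is the constant matrix $\diag(u) - \frac{1}{d}u u\transp$), so $\proj_C$ is differentiable at $x$. The only genuinely delicate point is extracting the strict gap $\eta$ from the hypothesis and transporting it to the perturbed threshold $g(s)$; once the active set $I$ is pinned down on $U$, the formula and its differentiability are routine.
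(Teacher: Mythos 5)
Your proof is correct, and it follows the same outline as the paper's own argument: parts (a) and (b) read off the explicit form in Proposition~\ref{prop:splx_proj_subset}, parts (c) and (d) come from local constancy of the active set, and part (e) from differentiability of the resulting locally affine expression. The difference is that the paper presents only a sketch---for part (c) it merely names the radius $\epsilon := \nicefrac{1}{2}\cdot\min_{i}\abs{x_i - \hat{t}}$ without explaining how the sign pattern is actually preserved---whereas you supply the missing mechanism: you check that the candidate $q(s) := \max\left(s - g(s)\cdot e,\ 0\right)$ with affine threshold $g(s) := \nicefrac{1}{d}\cdot\left(u\transp s - \lambda_1\right)$ lies in $C$, and then identify it with $\proj_C(s)$ through uniqueness of the separator. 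Your remark that mere non-expansiveness of $\proj_C$ would not suffice (it keeps the positive entries positive but cannot force the vanishing entries to stay at zero) pinpoints exactly what the paper's sketch glosses over, so this candidate-verification step is a genuine gain in rigor; it also delivers (c) and (d) simultaneously rather than in the paper's order. One small attribution slip: Proposition~\ref{prop:projsplx} does not itself assert that $t\mapsto e\transp\max\left(s - t\cdot e,\ 0\right) - \lambda_1$ is strictly decreasing with a unique zero; that fact is only mentioned later in Section~\ref{sect:relwork_projfunc}, following \citet{Liu2009a}. It is true and easy to justify---the map is non-increasing, and strictly decreasing wherever its value is positive, hence at the level $\lambda_1 > 0$---but you should state that one-line argument (or cite it correctly) instead of leaning on Proposition~\ref{prop:projsplx}. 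Everything else checks out, including your Jacobian $\diag(u) - \nicefrac{1}{d}\cdot uu\transp$, which equals the paper's $E_n - \nicefrac{1}{d}\cdot uu\transp - \diag(v)$ from Lemma~\ref{lem:projfuncgrad}.
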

\begin{proof}
Only a sketch of a proof is presented here.

\ref{lem:splx_proj_analytics_a}
Follows from the characterization of $\hat{t}$ given in Proposition~\ref{prop:splx_proj_subset}.

\ref{lem:splx_proj_analytics_b}
The identity can be validated directly using~\ref{lem:splx_proj_analytics_a} and Proposition~\ref{prop:splx_proj_subset}.

\ref{lem:splx_proj_analytics_c}
Follows by choosing $\epsilon := \nicefrac{1}{2}\cdot\min_{i\inint{1}{n}}\abs{x_i - \hat{t}}$, which is positive by requirement on $x$.

\ref{lem:splx_proj_analytics_d}
Validation follows like in~\ref{lem:splx_proj_analytics_b} using~\ref{lem:splx_proj_analytics_c}.

\ref{lem:splx_proj_analytics_e}
The projection onto $C$ can be written locally in closed form using~\ref{lem:splx_proj_analytics_d}.
In the same neighborhood, the index set of vanishing entries of the projected points does not change.
Hence, the projection is differentiable as a composition of differentiable functions.
\end{proof}
It is clear that there are points in which $s\mapsto\proj_C(s)$ is continuous but not differentiable, for example points that are projected onto one of the vertices of $C$.
The structure in this situation is locally equivalent to that of the absolute value function.
However, for every point where the projection onto $C$ is not differentiable, a subtle change is sufficient to find a point where the projection is differentiable:
\begin{lemma}
\label{lem:splx_proj_critical}
Consider the function $p\colon\R^n\setminus C\to C$, $s\mapsto\proj_C(s)$ and let $x\in\R^n\setminus C$ be a point such that $p$ is not differentiable in $x$.
Then for all $\epsilon > 0$ there exists a point $y\in\R^n$ with $\norm{x - y}_2 < \epsilon$ such that $p$ is differentiable in $y$.
\end{lemma}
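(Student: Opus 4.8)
The plan is to read off, from Lemma~\ref{lem:splx_proj_analytics}, exactly where $s\mapsto\proj_C(s)$ can fail to be differentiable. Writing $p=\proj_C(x)$ with separator $\hat{t}$ and active set $I$ as in Proposition~\ref{prop:splx_proj_subset}, that lemma guarantees differentiability at $x$ as soon as $x_i\neq\hat{t}$ for every $i\inint{1}{n}$. Since $p$ is assumed non-differentiable at $x$, this condition fails. But for every surviving index $i\in I$ one has $p_i=x_i-\hat{t}>0$, so surviving coordinates are strictly above the threshold; hence the only ties occur among non-surviving coordinates, and $T:=\set{i\not\in I | x_i=\hat{t}}$ is nonempty. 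I would first record that $x\neq 0$: were $x$ zero, then $\hat{t}=-\nicefrac{\lambda_1}{d}\neq 0=x_i$ for all $i$, no tie could occur, and $x$ would already be differentiable, contradicting the hypothesis.

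The perturbation I would use is a small positive rescaling $y:=\alpha x$ with $\alpha>1$; the point is that $\proj_C$, unlike $\proj_D$ (cf.\ Corollary~\ref{cor:affine_invariance_projection}), is \emph{not} scale invariant, so scaling genuinely moves the argument across the piecewise-affine seams of the projection. Keeping the candidate active set equal to $I$, the associated separator becomes $\hat{t}'=\alpha\hat{t}+(\alpha-1)\nicefrac{\lambda_1}{d}$, and a direct computation gives, for $i\in I$, that $y_i-\hat{t}'=\alpha(x_i-\hat{t})-(\alpha-1)\nicefrac{\lambda_1}{d}>0$ whenever $\alpha$ is close to $1$, while for the critical indices $i\in T$ one gets $y_i-\hat{t}'=-(\alpha-1)\nicefrac{\lambda_1}{d}<0$, and for the remaining non-surviving indices $y_i-\hat{t}'<0$ as well. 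Thus every coordinate lands strictly off $\hat{t}'$. Summing the surviving entries verifies $\sum_{i\in I}(y_i-\hat{t}')=\lambda_1$, so $q:=\max(y-\hat{t}'\cdot e,\ 0)$ lies in $C$ and has support exactly $I$; since the simplex projection determines its separator uniquely (Proposition~\ref{prop:projsplx}), $\hat{t}'$ is the genuine separator of $y$, the active set of $y$ really is $I$, and $y_i\neq\hat{t}'$ for all $i$.

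With $y_i\neq\hat{t}'$ for every $i$ in hand, Lemma~\ref{lem:splx_proj_analytics}\ref{lem:splx_proj_analytics_e} yields that $s\mapsto\proj_C(s)$ is differentiable at $y$. It then remains only to control the distance: $\norm{x-y}_2=(\alpha-1)\norm{x}_2$, and since $x\neq 0$ I may choose $\alpha>1$ close enough to $1$ that this is below $\epsilon$; because $C$ is closed and $x\not\in C$, such $\alpha$ also keeps $y\in\R^n\setminus C$. The main obstacle is the combinatorial dependence of the active set $I$ on the argument: a careless perturbation could push a tied coordinate across the threshold and change $I$, invalidating the closed-form separator. Choosing the scaling direction $\alpha>1$ (rather than $\alpha<1$) is precisely what drives the tied coordinates strictly \emph{below} the new threshold instead of promoting them into the active set, so that $I$ stays fixed and the feasibility-and-uniqueness check above closes the argument.
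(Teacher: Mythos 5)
Your proof is correct, but it follows a genuinely different route from the paper's. The paper perturbs additively and only at the tied coordinates: with $J := \set{j\inint{1}{n} | x_j = \hat{t}}$ nonempty (by the same contrapositive of Lemma~\ref{lem:splx_proj_analytics} you use), it takes $y := x - \delta\sum_{j\in J}e_j$ with $\delta := \nicefrac{\epsilon}{\sqrt{4\abs{J}}}$, so that $\norm{x-y}_2 = \nicefrac{\epsilon}{2}$, and argues that the separator of $y$ is still $\hat{t}$ because it depends only on the entries strictly above $\hat{t}$; the former ties now lie strictly below $\hat{t}$, and Lemma~\ref{lem:splx_proj_analytics} applies. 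Your multiplicative perturbation $y := \alpha x$ instead shifts the separator to $\hat{t}' = \alpha\hat{t} + (\alpha - 1)\nicefrac{\lambda_1}{d}$, and you verify by hand that every coordinate lands strictly off $\hat{t}'$ and that the candidate $\max(y - \hat{t}'\cdot e,\ 0)$ is feasible with support $I$; this costs you two extra steps the paper avoids---ruling out $x = 0$ and choosing $\alpha$ in terms of $\min_{i\in I}(x_i - \hat{t})$, $\norm{x}_2$ and $\epsilon$---but all of these steps are handled correctly. Both arguments rest on the same tacit fact: a point of the form $\max(y - t\cdot e,\ 0)$ that lies in $C$ must equal $\proj_C(y)$, that is, the separator is determined uniquely. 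You cite Proposition~\ref{prop:projsplx} for this, which strictly speaking asserts only existence, while the paper asserts without proof that the separator is unaffected by entries at or below it; either claim follows in two lines from the strict monotonicity of $t\mapsto\norm{\max(y - t\cdot e,\ 0)}_1$ wherever that norm is positive, so your proof sits at the same level of rigor as the paper's, just along a different perturbation direction.
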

\begin{proof}
Let $\hat{t}\in\R$ be the separator from Proposition~\ref{prop:splx_proj_subset} for the projection onto $C$.
Let the index set of all collisions with $\hat{t}$ be denoted by $J := \set{j\inint{1}{n} | x_j = \hat{t}}$, which is nonempty with Lemma~\ref{lem:splx_proj_analytics} because $p$ is not differentiable in $x$.
Define $\delta := \nicefrac{\epsilon}{\sqrt{4\abs{J}}} > 0$ and consider $y := x - \delta\sum_{j\in J}e_j\in\R^n$.
Clearly, $\norm{x - y}_2 = \nicefrac{\epsilon}{2}$.
From Proposition~\ref{prop:splx_proj_subset} follows that the separating $\hat{t}$ for the projection onto $C$ is independent of the entries of $x$ with indices in $J$, as long as they are less than or equal to $\hat{t}$.
Because $\delta > 0$, these entries in $y$ are strictly smaller than $\hat{t}$, hence $p$ is differentiable in $y$ with Lemma~\ref{lem:splx_proj_analytics}.
\end{proof}
Therefore the set on which $s\mapsto\proj_C(s)$ is not differentiable forms a null set.
The next result gathers the gradients of the individual projections involved in the computation of the sparseness-enforcing projection operator with respect to $\sigma$.
Using the chain rule, the gradient of $\pi_{\geq 0}$ can be derived afterwards as multiplication of the individual gradients.

\begin{lemma}
\label{lem:projfuncgrad}
The individual projections for $\pi_{\geq 0}$ are differentiable almost everywhere. Their gradients are given as follows:
\begin{enumerate}
\item \label{lem:projfuncgrad_a}
$\frac{\partial\proj_H(x)}{\partial x} = E_n - \nicefrac{1}{n}\cdot ee\transp$, where $E_n\in\R^{n\times n}$ is the identity matrix.

\item \label{lem:projfuncgrad_b}
$\frac{\partial\proj_L(x)}{\partial x} = \frac{\sqrt{\rho}}{\norm{x-m}_2}\left(E_n - \nicefrac{1}{\norm{x-m}_2^2}\cdot (x-m)(x-m)\transp\right)$.

\item \label{lem:projfuncgrad_c}
$\frac{\partial\proj_C(x)}{\partial x} = E_n - \nicefrac{1}{d}\cdot uu\transp - \diag(v)$.
Here, $I := \set{i\inint{1}{n} | e_i\transp\proj_C(x) \neq 0}$ is the index set of nonzero entries of the projection onto $C$, $d := \abs{I}$, $u := \sum_{i\in I}e_i\in\R^n$ and $v := e - u\in\R^n$.

\item \label{lem:projfuncgrad_d}
$\frac{\partial\proj_{L_I}(x)}{\partial x} = \frac{\sqrt{\rho_I}}{\norm{x-m_I}_2}\left(E_n - \nicefrac{1}{\norm{x-m_I}_2^2}\cdot (x-m_I)(x-m_I)\transp\right)$.
\end{enumerate}
\end{lemma}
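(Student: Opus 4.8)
The plan is to observe that the substantive work has already been done in the earlier structural results: each of the four projections admits an explicit closed-form expression, and in each case the set on which that expression fails to be smooth is already known to be a null set. What remains is therefore a small number of routine Jacobian computations, which I would organize into two patterns — affine maps for \ref{lem:projfuncgrad_a} and \ref{lem:projfuncgrad_c}, and ``project to a sphere of fixed squared radius about a fixed center'' maps for \ref{lem:projfuncgrad_b} and \ref{lem:projfuncgrad_d}.

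For \ref{lem:projfuncgrad_a} I would take the formula $\proj_H(x) = x + \nicefrac{1}{n}\cdot(\lambda_1 - e\transp x)e$ from Lemma~\ref{lem:plane}\ref{lem:plane_a}, which is affine in $x$ and hence differentiable everywhere; rewriting the non-constant part as $-\nicefrac{1}{n}\cdot(ee\transp)x$ and reading off the linear coefficient gives $E_n - \nicefrac{1}{n}\cdot ee\transp$. For \ref{lem:projfuncgrad_c} I would use Lemma~\ref{lem:splx_proj_analytics}\ref{lem:splx_proj_analytics_d}, which supplies, on a full neighborhood $U$ of any point whose separator meets no coordinate, the closed form $\proj_C(s) = s + \nicefrac{1}{d}\cdot(\lambda_1 - u\transp s)u - v\hada s$. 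This is again affine in $s$; writing $-\nicefrac{1}{d}(u\transp s)u = -\nicefrac{1}{d}(uu\transp)s$ and $v\hada s = \diag(v)s$ and collecting the linear parts yields $E_n - \nicefrac{1}{d}\cdot uu\transp - \diag(v)$. The almost-everywhere claim for \ref{lem:projfuncgrad_c} is exactly Lemma~\ref{lem:splx_proj_critical}, which guarantees that the bad set — points where some coordinate equals the separator $\hat{t}$ — is null, so I would simply cite it.

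For \ref{lem:projfuncgrad_b} and \ref{lem:projfuncgrad_d} the key is to isolate one computation. Both $\proj_L$ and $\proj_{L_I}$ are given, via Lemma~\ref{lem:sphere} and Lemma~\ref{lem:splxrcsn} respectively, as maps of the form $f(x) = c + \sqrt{R}\cdot\nicefrac{(x-c)}{\norm{x-c}_2}$, with $(c,R) = (m,\rho)$ in the first case and $(c,R) = (m_I,\rho_I)$ in the second. Such an $f$ is differentiable precisely for $x\neq c$, and the centers $m$ and $m_I$ each form a null set, which accounts for the ``almost everywhere.'' The one genuine calculation is the Jacobian of the normalized-vector map $x\mapsto\nicefrac{(x-c)}{\norm{x-c}_2}$: applying the quotient rule together with $\nicefrac{\partial\norm{x-c}_2}{\partial x} = \nicefrac{(x-c)\transp}{\norm{x-c}_2}$ produces the rank-one correction and gives $\nicefrac{1}{\norm{x-c}_2}\big(E_n - \nicefrac{1}{\norm{x-c}_2^2}\cdot(x-c)(x-c)\transp\big)$. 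Multiplying by $\sqrt{R}$ and noting that the constant shift $c$ contributes nothing then reproduces both stated formulas at once.

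I expect no serious obstacle, since differentiability itself is inherited from the earlier lemmas; the only points demanding care are getting the rank-one term in the normalization Jacobian right, and, for \ref{lem:projfuncgrad_d}, being explicit about which map is being differentiated. Here one should note that in the chain this map is only ever applied to points already supported on $I$ — by Lemma~\ref{lem:splx_proj_analytics}\ref{lem:splx_proj_analytics_c} the support of $\proj_C$ is locally constant — so the differentiated formula agrees with the true projection onto $L_I$ on the relevant subspace, and its Jacobian is the correct factor for the subsequent chain-rule assembly of the gradient of $\pi_{\geq 0}$.
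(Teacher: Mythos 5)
Your proposal is correct and takes essentially the same route as the paper's proof: parts \ref{lem:projfuncgrad_a} and \ref{lem:projfuncgrad_c} are read off from the affine closed forms in Lemma~\ref{lem:plane} and Lemma~\ref{lem:splx_proj_analytics} (with Lemma~\ref{lem:splx_proj_critical} handling the exceptional null set), while parts \ref{lem:projfuncgrad_b} and \ref{lem:projfuncgrad_d} follow by differentiating the spherical formulas of Lemma~\ref{lem:sphere} and Lemma~\ref{lem:splxrcsn}, your quotient-rule computation on the normalized map $x\mapsto\nicefrac{(x-c)}{\norm{x-c}_2}$ being the same calculation the paper organizes as a quotient rule on the scalar $\delta(x)$ followed by the product rule. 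Your extra remarks on the almost-everywhere claims and on which extension of $\proj_{L_I}$ is actually being differentiated only make explicit what the paper's terse proof leaves implicit.
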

\begin{proof}
\ref{lem:projfuncgrad_a}
Follows from the closed form expression in Lemma~\ref{lem:plane}.

\ref{lem:projfuncgrad_b}
Lemma~\ref{lem:sphere} yields $\proj_L(x) = m + \delta(x)\cdot(x-m)$ with $\delta(x) = \nicefrac{\sqrt{\rho}}{\norm{x - m}_2}$.
With the quotient rule follows $\nicefrac{\partial\delta(x)}{\partial x} = -\nicefrac{\sqrt{\rho}}{\norm{x-m}_2^3}\cdot\left(x - m\right)\transp$, as $\rho$ does not depend on $x$.
The claim then follows by application of the product rule.

\ref{lem:projfuncgrad_c}
Follows from Lemma~\ref{lem:splx_proj_analytics}, similar to~\ref{lem:projfuncgrad_a}, using $v\hada x = \diag(v)x$.

\ref{lem:projfuncgrad_d}
Follows exactly as in~\ref{lem:projfuncgrad_b}.
\end{proof}
Clearly, the gradients for $\proj_H$ and for $\proj_{L}$ are special cases of the gradients of $\proj_C$ and $\proj_{L_I}$, respectively.
Therefore, they need no separate handling in the computation of the overall gradient.
Exploiting the special structure of the matrices involved and the readily sorted input as in Algorithm~\ref{alg:projfunc_explicit}, the gradient computation can be further optimized.
For the remainder of this appendix, let $\0_{a\times b}\in\set{0}^{a\times b}$ and $J_{a\times b}\in\set{1}^{a\times b}$ denote the matrices with $a$ rows and $b$ columns where all entries equal zero and unity, respectively.

\begin{theorem}
\label{thm:projfuncblockgrad}
Let $x\in\R^n$ be sorted in descending order and $\pi_{\geq 0}$ be differentiable in $x$.
Let $h\in\N$ denote the number of iterations Algorithm~\ref{alg:projfunc_explicit} needs to terminate.
In every iteration of the algorithm, store the following values for $i\inint{1}{h}$, where line numbers reference Algorithm~\ref{alg:projfunc_explicit}:
\begin{itemize}
  \item $d_i\in\N$ denoting the current dimensionality as determined by lines~\ref{algl:projfunc_explicit_d_init} and~\ref{algl:projfunc_explicit_projCcall}.
  \item $\delta_i := \sqrt{\nicefrac{\rho}{\varphi}}\in\R$ where $\rho$ and $\varphi$ are determined in lines~\ref{algl:projfunc_explicit_projL_rho} and~\ref{algl:projfunc_explicit_projL_phi}, respectively.
  \item $r(i) := y - m_I\in\R^{d_i}$ as computed in line~\ref{algl:projfunc_explicit_projL_r}.
\end{itemize}
Let $N := d_h = \norm{\pi_{\geq 0}(x)}_0$ denote the number of nonzero entries in the projection onto $D$, and define $s(i) := \tdvect{e_1\transp r(i)}{e_N\transp r(i)}\in\R^N$ as the first $N$ entries of each $r(i)$.
For $i\inint{1}{h}$ let
\begin{displaymath}
  A_i :=
  \delta_iE_N - \nicefrac{\delta_i}{d_i}\cdot J_{N\times N} -\alpha_is(i)s(i)\transp + \nicefrac{\alpha_i}{d_i}\cdot s(i)s(i)\transp J_{N\times N}\in\R^{N\times N}
  \text{ where }\alpha_i := \nicefrac{\delta_i}{\norm{r(i)}_2^2}\text{,}
\end{displaymath}
and let $A := \prod^{i=h}_1 A_i = A_h\cdots A_1\in\R^{N\times N}$.
Then the gradient of $\pi_{\geq 0}$ in $x$ is $\diag\big(A,\ \0_{(n-N)\times(n-N)}\big)$, that is a block diagonal matrix where the quadratic submatrix with row and column indices from $1$ to $N$ is given by $A$, and where all other entries vanish.
\end{theorem}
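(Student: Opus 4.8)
The plan is to read the gradient off as the chain-rule product of the Jacobians of the individual projections, and then to collapse that product into the claimed block-diagonal form by exploiting the geometry of the intermediate sets. First I would invoke the factorization of $\pi_{\geq 0}$ stated at the start of this appendix, observing that $\proj_H$ and $\proj_L$ are the special cases $d = n$ of $\proj_C$ and $\proj_{L_I}$ (Lemma~\ref{lem:projfuncgrad}). Thus, near $x$, $\pi_{\geq 0}$ is a composition of $h$ identical-looking blocks $\proj_{L_{I_i}}\circ\proj_C$ with nested index sets $\discint{1}{n} = I_1\supseteq\dots\supseteq I_h = \discint{1}{N}$ and working dimensions $d_i = \abs{I_i}$. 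Since $\pi_{\geq 0}$ is differentiable in $x$, Lemma~\ref{lem:splx_proj_analytics} and Lemma~\ref{lem:splx_proj_critical} make this index sequence and each intermediate closed form stable in a neighborhood, so the chain rule applies and Lemma~\ref{lem:projfuncgrad} supplies the factors. Writing $G_C^{(i)}$ and $G_L^{(i)}$ for the two Jacobians of iteration $i$ (with $u_i = \sum_{k\in I_i}e_k$, $v_i = e - u_i$), the gradient is the ordered product $B_h\cdots B_1$, where $B_i := G_L^{(i)}G_C^{(i)}$.

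Second, I would simplify each $B_i$. The matrix $G_C^{(i)} = \diag(u_i) - \nicefrac{1}{d_i}\cdot u_iu_i\transp$ is the orthogonal projection onto $V_i := \set{z\in\R^n | \mathrm{supp}(z)\subseteq I_i\text{ and }\sum_{k\in I_i}z_k = 0}$, and the recorded vector $r(i)$ lies in $V_i$, so $G_C^{(i)}r(i) = r(i)$ and $r(i)\transp G_C^{(i)} = r(i)\transp$; substituting into $G_L^{(i)} = \delta_iE_n - \alpha_i r(i)r(i)\transp$ collapses the product to
\begin{displaymath}
  B_i = \delta_iG_C^{(i)} - \alpha_ir(i)r(i)\transp\text{.}
\end{displaymath}
From here I would record the structural facts that drive everything: $B_ir(i) = 0$ (the radial direction lies in the kernel of the sphere projection); $G_C^{(i)}e_j = 0$ for $j\notin I_i$; the nesting identity $G_C^{(i)}G_C^{(i-1)} = G_C^{(i)}$ coming from $V_i\subseteq V_{i-1}$; and, crucially, the recursion
\begin{displaymath}
  r(i) = \delta_{i-1}\cdot G_C^{(i)}r(i-1)\text{,}
\end{displaymath}
which holds because the point entering the $i$-th sphere projection is $G_C^{(i)}$ applied to the previous sphere output and because $G_C^{(i)}$ annihilates the barycenter $m_{I_{i-1}}$. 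Telescoping this through the nesting yields $G_C^{(k)}r(i) = \nicefrac{1}{(\delta_{k-1}\cdots\delta_i)}\cdot r(k)$ for all $i < k$: every earlier $r(i)$ is sent by $G_C^{(k)}$ into the line spanned by $r(k)$.

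Third, I would establish the block-diagonal shape. Vanishing of the rows beyond $N$ is immediate, since the leftmost factor $B_h$ has range inside $V_h\subseteq\mathrm{span}(e_1,\dots,e_N)$. Vanishing of the columns beyond $N$ is the first place where real work is needed. Fixing $j > N$, let $k_0$ be the iteration at which coordinate $j$ is discarded, so $j\notin I_{k_0}$. Using $G_C^{(k_0)}e_j = 0$ together with $G_C^{(k_0)}r(i)\in\mathrm{span}(r(k_0))$, I would expand $G_C^{(k_0)}B_{k_0-1}\cdots B_1e_j$ and find every surviving term proportional to $r(k_0)$; then $B_{k_0}$ kills this vector, because $r(k_0)\in\ker B_{k_0}$ and $B_{k_0}$ annihilates $V_{k_0}^{\perp}$. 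Hence $B_{k_0}\cdots B_1e_j = 0$ and the whole column is zero. This is exactly the cancellation one verifies by hand in the first nontrivial case $h = 2$.

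Finally, I would identify the surviving $N\times N$ block with $A_h\cdots A_1$. The key algebraic observation is the factorization
\begin{displaymath}
  A_i = \left(\delta_iE_N - \alpha_is(i)s(i)\transp\right)\left(E_N - \nicefrac{1}{d_i}\cdot J_{N\times N}\right)\text{,}
\end{displaymath}
which is precisely the compression to the first $N$ coordinates of $G_L^{(i)}$ times the compression of $G_C^{(i)}$. The statement to prove is therefore that compressing the full product $B_h\cdots B_1$ equals the product $A_h\cdots A_1$ of the compressions, which I would do by induction on the number of factors, carrying along the block of the partial product that couples the first $N$ coordinates to the coordinates $N+1,\dots,d_i$ that are still alive at that stage. \textbf{The main obstacle is exactly this coupling:} compression does not distribute over a single matrix product, and the extra summand $\nicefrac{\alpha_i}{d_i}\cdot s(i)s(i)\transp J_{N\times N}$ in $A_i$ is present precisely to reabsorb the contribution routed through those intermediate, ultimately-discarded coordinates. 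Controlling that contribution is again carried out with the recursion $r(i) = \delta_{i-1}G_C^{(i)}r(i-1)$ and the nesting of the $V_i$, which pin down how $r(i)$ sees the discarded directions. Once the induction closes, the gradient is $\diag\big(A_h\cdots A_1,\ \0_{(n-N)\times(n-N)}\big)$, as claimed.
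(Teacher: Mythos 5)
Your proposal is correct, and at the top level it follows the same route as the paper: factor $\pi_{\geq 0}$ into per-iteration maps, take the Jacobians from Lemma~\ref{lem:projfuncgrad}, observe that each iteration gradient $B_i = G_L^{(i)}G_C^{(i)}$ is block diagonal, and collapse the ordered product. The genuine difference lies in the collapsing step, and there your treatment is substantially more complete than the paper's own proof. After carefully computing one iteration's gradient $\diag(M_i,\ \0)$, the paper dismisses the product with the remark that "only the according entries survive the matrix multiplication"; this is not a generic property of block-diagonal matrices with shrinking blocks (the blocks $M_{i+1}\cdot[M_i]_{1:d_{i+1},\,d_{i+1}+1:d_i}$ coupling surviving to discarded coordinates do not vanish for free), and the paper never exhibits the cancellation mechanism. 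Your kernel and recursion facts --- $B_ir(i) = 0$, the annihilation of vectors constant on $I_{i-1}$ by $G_C^{(i)}$, the recursion $r(i) = \delta_{i-1}G_C^{(i)}r(i-1)$ and its telescoped consequence $G_C^{(k)}r(i)\in\operatorname{span}(r(k))$ --- are exactly what makes those coupling blocks vanish, so your column argument for $j > N$ supplies precisely the justification the paper omits. You also resolve an ambiguity the paper leaves open: via your factorization $A_i = \left(\delta_iE_N - \alpha_is(i)s(i)\transp\right)\left(E_N - \nicefrac{1}{d_i}\cdot J_{N\times N}\right)$, the matrix $A_i$ is the product of the compressions of $G_L^{(i)}$ and $G_C^{(i)}$, which differs from the compression of the product $B_i$ exactly by the fourth summand; read literally, the paper's recipe of keeping "only the top left $N\times N$ entries of the gradients of the individual iterations" would produce the latter, not the former. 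One simplification you could exploit to close your final induction at once: that fourth summand is inert. By your own telescoping, every matrix standing to the left of $A_i$ in the product annihilates both $s(i)$ and the all-ones vector, whence $A_h\cdots A_1$, the product of the compressions of the $B_i$, and the true top-left block of $B_h\cdots B_1$ all collapse to the same matrix $\delta_1\cdots\delta_{h-1}\big(\delta_hE_N - \nicefrac{\delta_h}{N}\cdot J_{N\times N} - \alpha_hs(h)s(h)\transp\big)$; so the coupling contributions you plan to track cancel outright rather than being "reabsorbed", and the induction closes immediately with the facts you already listed.
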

\begin{proof}
The gradient of projections onto $H$ is merely a special case of projections onto $C$, which also applies to the respective projections onto $L$ and $L_I$, see Lemma~\ref{lem:projfuncgrad}.
Hence, the very first iteration is a special case of iterations with $i > 1$.
Consider one single iteration $i\inint{1}{h}$ of Algorithm~\ref{alg:projfunc_explicit}, that is the computation of $\proj_{L_I}\circ\proj_C$ for some $I\subseteq\discint{1}{n}$.
Write $d := d_i$, $\delta := \delta_i$, $\alpha := \alpha_i$ and $r := r(i)$ for short.
Because the input vector $x$ is sorted by requirement, all intermediate vectors that are projected are sorted as well using Lemma~\ref{lem:proj_props}.
Thus $I = \discint{1}{d}$ holds.

With Lemma~\ref{lem:projfuncgrad}, the gradient $G_C\in\R^{n\times n}$ of the projection onto both $H$ and $C$ is of the form $G_C := E_n - \nicefrac{1}{d}\cdot uu\transp - \diag(v)$, where $u := \sum_{i=1}^de_i$ and $v := e - u$.
Let $q := \left(r_1,\dotsc,r_d,0,\dotsc,0\right)\transp\in\R^n$ be a copy of $r$ padded with zeros to achieve full dimensionality $n$.
The gradient of the projection onto $L$, and in general $L_I$, is given by $G_L := \delta E_n - \alpha qq\in\R^{n\times n}$ using Lemma~\ref{lem:projfuncgrad}.
The gradient of the whole iteration is then given by the chain rule, yielding
\begin{displaymath}
  G := G_LG_C = \delta E_n - \nicefrac{\delta}{d}\cdot uu\transp - \delta\diag(v) - \alpha qq\transp + \nicefrac{\alpha}{d}\cdot qq\transp uu\transp + \alpha qq\transp\diag(v)\in\R^{n\times n}\text{.}
\end{displaymath}
Write $\0 := \0_{(n-d)\times(n-d)}$, then $G$ is a block diagonal matrix of a matrix from $\R^{d\times d}$ and $\0$:
Note that $E_n - \diag(v) = \diag(E_d, \0)$, $uu\transp = \diag(J_{d\times d}, \0)$, and $qq\transp = \diag(rr\transp, \0)$.
Therefore, $qq\transp\diag(v) = \diag(rr\transp, \0)\cdot\big(\begin{smallmatrix}0&0\\0&E_{n-d}\end{smallmatrix}\big) = 0$, and $qq\transp uu\transp = \diag(rr\transp J_{d\times d}, \0)$.
Thus
\begin{align*}
  G &= \delta \diag(E_d, \0) - \nicefrac{\delta}{d}\cdot\diag(J_{d\times d}, \0) - \alpha\diag(rr\transp, \0) + \nicefrac{\alpha}{d}\cdot\diag(rr\transp J_{d\times d}, \0)\\
  &= \diag\left(\delta E_d - \nicefrac{\delta}{d}\cdot J_{d\times d} - \alpha rr\transp + \nicefrac{\alpha}{d}\cdot rr\transp J_{d\times d},\ \0\right)\text{.}
\end{align*}
By denoting the gradient of iteration $i$ by matrix $G_i\in\R^{n\times n}$ for $i\inint{1}{h}$ and by application of the chain rule follows that the gradient of all iterations is given by $\prod^{i=h}_1 G_i$.
In this matrix, all entries but the top left submatrix of dimensionality $N \times N$ are vanishing, where $N = d_h$.
This is because the according statement applies to $G_i$ for the top left submatrix of dimensionality $d_i\times d_i$, and $d_1 > \cdots > d_h$ holds, and only the according entries survive the matrix multiplication.
Therefore it is sufficient to compute only the top left $N\times N$ entries of the gradients of the individual iterations, as the remaining entries are not relevant for the final gradient.
This is reflected by the definition of the matrices $A_i$ for $i\inint{1}{h}$ from the claim.
\end{proof}
The gradient can thus be computed using matrix-matrix multiplications, where the matrices are square and the edge length is the number of nonzero entries in the result of the projection.
This computation is more efficient than using the $n\times n$ matrices of the individual projections.
However, when the target degree of sparseness is low, and thus the amount of nonzero entries $N$ in the result of the projection is large, gradient computation can become very inefficient.
In practice, often only the product of the gradient with an arbitrary vector is required.
In this case, the procedure can be sped up by exploiting the special structure of the gradient of $\pi_{\geq 0}$:
\begin{corollary}
\label{cor:projfuncgraddgemv}
Let $x\in\R^n$ be sorted in descending order and $\pi_{\geq 0}$ be differentiable in $x$.
The product of the gradient of $\pi_{\geq 0}$ in $x$ with an arbitrary vector can be computed using vector operations only.
\end{corollary}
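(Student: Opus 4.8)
The plan is to start from the explicit description of the gradient furnished by Theorem~\ref{thm:projfuncblockgrad}. Since the gradient of $\pi_{\geq 0}$ in $x$ equals $\diag\big(A,\ \0_{(n-N)\times(n-N)}\big)$ with $A = A_h\cdots A_1$, its product with an arbitrary vector $z\in\R^n$ reduces to forming $A\tilde{z}$, where $\tilde{z}\in\R^N$ collects the first $N$ entries of $z$, and then padding the outcome with $n - N$ trailing zeros. It therefore suffices to show that $A\tilde{z}$ can be evaluated without ever materializing a full matrix.

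First I would invoke associativity of matrix multiplication to write $A\tilde{z} = A_h\big(A_{h-1}\big(\cdots\big(A_1\tilde{z}\big)\cdots\big)\big)$, so that the product is obtained by applying the factors $A_1,\dotsc,A_h$ successively to a running vector $w\in\R^N$ initialized as $w := \tilde{z}$. The reduction then rests on a single observation: writing $J_{N\times N} = ee\transp$ with $e\in\R^N$ the all-ones vector, each factor from Theorem~\ref{thm:projfuncblockgrad} is a scalar multiple of the identity plus dyadic products built exclusively from the two vectors $e$ and $s(i)$,
\begin{displaymath}
  A_i = \delta_iE_N - \tfrac{\delta_i}{d_i}ee\transp - \alpha_is(i)s(i)\transp + \tfrac{\alpha_i\scp{e}{s(i)}}{d_i}s(i)e\transp\text{.}
\end{displaymath}

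Consequently, applying $A_i$ to the current vector $w$ expands into
\begin{displaymath}
  A_iw = \delta_iw - \tfrac{\delta_i}{d_i}\scp{e}{w}e - \alpha_i\scp{s(i)}{w}s(i) + \tfrac{\alpha_i\scp{e}{s(i)}}{d_i}\scp{e}{w}s(i)\text{,}
\end{displaymath}
which needs only the two scalar products $\scp{e}{w}$, that is the sum of the entries of $w$, and $\scp{s(i)}{w}$, together with the precomputable constant $\scp{e}{s(i)}$ and a handful of scalar-vector multiplications and additions. Each such step costs $O(N)$ and uses vector operations only, giving $O(hN)$ overall, in contrast to the cost of assembling $A$ explicitly. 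Since each $A_i\transp$ possesses the same identity-plus-rank-one structure in $e$ and $s(i)$, the product $\tilde{z}\transp A = \big(A\transp\tilde{z}\big)\transp$ arising from the chain rule is handled in exactly the same manner, so both orientations are covered.

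The only point that really demands care is the rewriting $J_{N\times N} = ee\transp$ together with the bookkeeping that every term of $A_i$ is either diagonal or a rank-one product in the fixed pair $e, s(i)$; once this is seen, the reduction to dot products and axpy-type updates is immediate. I expect no genuine obstacle beyond verifying that the coefficients collapse as displayed and that the successive application in the stated order reproduces $A\tilde{z}$. The well-definedness and local stability of the quantities $d_i$, $\delta_i$, $\alpha_i$ and $s(i)$ that underlie the computation are already guaranteed by the hypothesis that $\pi_{\geq 0}$ is differentiable in the sorted point $x$, as established in Theorem~\ref{thm:projfuncblockgrad}.
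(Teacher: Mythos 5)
Your proposal is correct and follows essentially the same route as the paper's proof: both reduce the computation via Theorem~\ref{thm:projfuncblockgrad} to the non-vanishing $N\times N$ block, use associativity to apply the iteration factors $A_i$ one at a time, and exploit that each factor is a scaled identity plus rank-one terms (using $J_{N\times N} = ee\transp$, equivalently $s(i)s(i)\transp J_{N\times N} = \scp{e}{s(i)}\,s(i)e\transp$) so that each application costs only dot products and scalar-vector updates. Your explicit per-factor formula agrees with the paper's expression for $Az$, and your additional remark about the transposed product is a harmless extra.
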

\begin{proof}
Note that because of the associativity of the matrix product it is enough to consider the product of the gradient $G\in\R^{n\times n}$ of one iteration of Algorithm~\ref{alg:projfunc_explicit} with one vector $y\in\R^n$.
Because of the statements of Theorem~\ref{thm:projfuncblockgrad}, it suffices to consider the top left $N\times N$ entries of $G$ and the first $N$ entries of $y$, as all other entries vanish.
Therefore let $A := \delta E_N - \nicefrac{\delta}{d}\cdot J_{N\times N} -\alpha s s\transp + \nicefrac{\alpha}{d}\cdot s s\transp J_{N\times N}\in\R^{N\times N}$ be the non-vanishing block of $G$ as given by Theorem~\ref{thm:projfuncblockgrad}, let $u := J_{N\times 1}\in\R^N$ be the vector of ones such that $u u\transp = J_{N\times N}$, and let $z := \tdvect{y_1}{y_N}\in\R^N$ denote the vector with the first entries of $y$.
Using matrix product associativity and distributivity over multiplication with a scalar yields
\begin{displaymath}
  Az = \delta\left(z - \nicefrac{1}{d}\cdot\scp{z}{u} \cdot u\right) + \alpha\left(\nicefrac{1}{d}\cdot\scp{s}{u}\scp{z}{u}- \scp{s}{z}\right) s\text{,}
\end{displaymath}
where $\scp{z}{u} = \sum_{i=1}^N z_i$ and $\scp{s}{u} = \sum_{i=1}^N s_i$.
Hence $Az$ can be computed in-place from $z$ by subtraction of a scalar value from all entries, rescaling by $\delta$, and adding a scaled version of vector $s$.
\end{proof}
Although in Theorem~\ref{thm:projfuncblockgrad} and Corollary~\ref{cor:projfuncgraddgemv} it was necessary that the input vector is sorted, the general case can easily be recovered:
\begin{proposition}
\label{prop:projgrad_sorted}
Let $x\in\R^n$ be a point, $\tau\in S_n$ such that $y := P_\tau x\in\R^n$ is sorted in descending order and $\pi_{\geq 0}$ be differentiable in $y$ with gradient $G\in\R^{n\times n}$.
Then $\pi_{\geq 0}$ is also differentiable in $x$, and the gradient is $P_\tau\transp GP_\tau$.
\end{proposition}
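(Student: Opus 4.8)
The plan is to reduce the statement to the permutation-equivariance of $\pi_{\geq 0}$ combined with the chain rule. The two ingredients that make this work are that the feasible set $D = S_{\geq 0}^{(\lambda_1,\lambda_2)}$ is permutation-invariant and that permutation matrices are orthogonal, hence distance-preserving: $\norm{P_\tau v}_2 = \norm{v}_2$ for all $v\in\R^n$. Together these should force $\proj_D$ to commute with coordinate permutations, and everything else is a routine transfer of differentiability along a linear change of variables.

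First I would establish the set-level equivariance $\proj_D(P_\tau z) = P_\tau\,\proj_D(z)$ for every $z\in\R^n$. For the inclusion $\supseteq$, take $p\in\proj_D(z)$; then $P_\tau p\in D$ by permutation-invariance, and for any $w\in D$ one also has $P_\tau\transp w\in D$, so
\[
  \norm{P_\tau p - P_\tau z}_2 = \norm{p - z}_2 \le \norm{P_\tau\transp w - z}_2 = \norm{w - P_\tau z}_2\text{,}
\]
which shows $P_\tau p\in\proj_D(P_\tau z)$. The reverse inclusion follows by applying the same argument with $\tau$ replaced by $\tau^{-1}$ and $z$ by $P_\tau z$. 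A consequence I would record next is that the non-uniqueness set $R$ is permutation-invariant: $z\in R$ iff $\proj_D(z)$ is not a singleton, iff (by the equivariance just proved) $\proj_D(P_\tau z)$ is not a singleton, iff $P_\tau z\in R$. Hence on $\R^n\setminus R$ the equivariance descends to the function level, yielding the pointwise identity $\pi_{\geq 0}(z) = P_\tau\transp\,\pi_{\geq 0}(P_\tau z)$.

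With this in hand I would transfer differentiability. Since $\pi_{\geq 0}$ is differentiable at $y = P_\tau x$, it is in particular single-valued and defined on some open neighborhood $V$ of $y$ with $V\cap R=\emptyset$. Because $z\mapsto P_\tau z$ is a linear homeomorphism and $R$ is permutation-invariant, the pullback $U := P_\tau\transp V$ is an open neighborhood of $x$ with $U\cap R=\emptyset$, and the identity $\pi_{\geq 0}(z) = P_\tau\transp\,\pi_{\geq 0}(P_\tau z)$ holds for every $z\in U$. On $U$ the map $\pi_{\geq 0}$ is therefore the composition of three maps, each differentiable at the relevant point: the linear map $z\mapsto P_\tau z$ with derivative $P_\tau$, then $\pi_{\geq 0}$ at $y$ with derivative $G$, then the linear map $v\mapsto P_\tau\transp v$ with derivative $P_\tau\transp$. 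The chain rule then gives differentiability of $\pi_{\geq 0}$ at $x$ with gradient $P_\tau\transp G P_\tau$, as claimed.

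The argument is mostly mechanical; the one point demanding care is the distinction between a pointwise identity and an identity of functions valid on a whole neighborhood. The chain rule cannot be invoked from the value of $\pi_{\geq 0}$ at the single point $x$ alone, so the real crux is the permutation-invariance of $R$: it is exactly this that guarantees the pullback neighborhood $U$ avoids $R$ and that the equivariance identity holds throughout $U$ rather than merely at $x$.
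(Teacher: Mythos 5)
Your proof is correct and takes essentially the same route as the paper: the paper's own one-line proof invokes exactly the equivariance identity $\pi_{\geq 0}(x) = P_\tau\transp\,\pi_{\geq 0}(P_\tau x)$ together with $P_\tau\transp = P_{\tau^{-1}} = P_\tau^{-1}$ and the chain rule. You merely supply the details the paper leaves implicit---deriving the equivariance from the permutation-invariance of $D$ and the orthogonality of $P_\tau$, noting that the non-uniqueness set $R$ is permutation-invariant, and checking that the identity holds on a whole neighborhood of $x$ so that the chain rule legitimately applies.
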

\begin{proof}
Follows with $P_{\tau}\transp = P_{\tau^{-1}} = P_{\tau}^{-1}$, $\pi_{\geq 0}(x) = P_\tau\transp \pi_{\geq 0}(P_\tau x) = P_\tau \pi_{\geq 0}(y)$ and the chain rule.
\end{proof}
Likewise, the gradient for the unrestricted projection $\pi$ can be computed from the gradient for $\pi_{\geq 0}$:
\begin{proposition}
\label{prop:projgrad_reflective}
Let $x\in\R^n$ be a point such that $\pi_{\geq 0}$ is differentiable in $\abs{x}$ with gradient $G\in\R^{n\times n}$.
Let $s\in\set{\pm 1}^n$ be given such that $\pi(x) = s\hada\pi_{\geq 0}\left(\abs{x}\right)$.
Then $\pi$ is differentiable in $x$, and the gradient is $\diag(s) G \diag(s)$.
\end{proposition}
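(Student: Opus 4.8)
The plan is to realize $\pi$ locally as a composition of differentiable maps and read off the Jacobian by the chain rule, isolating the only genuine difficulty at the coordinates where $x$ vanishes. Throughout, write $s\in\set{\pm 1}^n$ for the sign pattern of $x$ from Lemma~\ref{lem:proj_nonneg_sols} (so $s_j=1$ iff $x_j\geq 0$), and for a perturbed argument $x'$ let $s(x')\in\set{\pm 1}^n$ be its sign pattern, so that by definition $\pi(x')=s(x')\hada\pi_{\geq 0}(\abs{x'})$. Split the indices into $P:=\set{j\inint{1}{n} | x_j\neq 0}$ and $Z:=\set{j\inint{1}{n} | x_j=0}$.

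First I would dispose of the clean case $Z=\emptyset$. On a small ball $U$ around $x$ the sign pattern is constant, $s(x')=s$ for all $x'\in U$, hence $\abs{x'}=\diag(s)\,x'$ holds identically on $U$. Therefore $\pi|_U=\diag(s)\circ\pi_{\geq 0}\circ\ell$, where $\ell\colon x'\mapsto\diag(s)\,x'$ is linear with Jacobian $\diag(s)$, the middle factor $\pi_{\geq 0}$ is differentiable at $\ell(x)=\abs{x}$ with Jacobian $G$ by hypothesis, and the outer factor is again linear with Jacobian $\diag(s)$. The chain rule gives differentiability of $\pi$ at $x$ with $\frac{\partial\pi(x)}{\partial x}=\diag(s)\,G\,\diag(s)$, using that $\diag(s)$ is its own Jacobian. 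This already settles the statement whenever $x$ has no vanishing entry, and notably needs nothing about the structure of $G$.

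The remaining case $Z\neq\emptyset$ is the main obstacle, because for $j\in Z$ both $s(x')_j$ fails to be locally constant and $t\mapsto\abs{t}$ fails to be differentiable at $0$, so the identity $\abs{x'}=\diag(s)\,x'$ breaks precisely in these coordinates. Quantitatively, for $x'=x+\xi$ one has $\abs{x'}=\abs{x}+\eta$ with $\eta_j=s_j\xi_j$ for $j\in P$ but $\eta_j=\abs{\xi_j}$ for $j\in Z$, so $\eta-\diag(s)\,\xi$ is supported on $Z$ and is only $O(\norm{\xi}_2)$, not $o(\norm{\xi}_2)$. The plan is to show that these coordinates never contribute, by establishing two structural facts: (i) $e_j\transp\pi_{\geq 0}(\abs{x})=0$ for every $j\in Z$, i.e.\ a vanishing input coordinate forces a vanishing output coordinate; and (ii) both the $j$-th row and the $j$-th column of $G$ vanish for every $j\in Z$. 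I would derive (ii) from (i) using the block structure of the gradient: by Theorem~\ref{thm:projfuncblockgrad}, transported to unsorted arguments through Proposition~\ref{prop:projgrad_sorted}, the matrix $G$ is, up to conjugation by a permutation, of the form $\diag\big(A,\ \0_{(n-N)\times(n-N)}\big)$ with $N=\norm{\pi_{\geq 0}(\abs{x})}_0$, so exactly the rows and columns belonging to vanishing output coordinates are zero; hence (i) implies (ii). Granting (i) and (ii), fact (ii) gives $G\eta=G\diag(s)\,\xi$ exactly (since $G$ annihilates the $Z$-columns on which $\eta-\diag(s)\,\xi$ is supported), so in the expansion $\pi(x')=s(x')\hada\big(\pi_{\geq 0}(\abs{x})+G\diag(s)\,\xi+o(\norm{\xi}_2)\big)$ the components with $j\in P$ behave as in the clean case, while for $j\in Z$ the possibly flipping prefactor $s(x')_j$ multiplies $e_j\transp\pi_{\geq 0}(\abs{x})=0$ by (i) and the linear term $e_j\transp G\diag(s)\,\xi=0$ by (ii); thus the $j$-th component is $o(\norm{\xi}_2)$ regardless of sign, matching $e_j\transp\diag(s)\,G\,\diag(s)\,\xi=0$. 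Assembling the components yields $\pi(x')=\pi(x)+\diag(s)\,G\,\diag(s)\,\xi+o(\norm{\xi}_2)$.

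The genuine difficulty, and the step I expect to be the crux, is fact (i): that differentiability of $\pi_{\geq 0}$ at $\abs{x}$ forces every vanishing input coordinate to be mapped to a vanishing output coordinate. I would attack it through the algorithmic description of $\pi_{\geq 0}$ from Appendix~\ref{sect:projfuncproof}: order-preservation (Lemma~\ref{lem:proj_props}) places the zero entries of $\abs{x}$ last, and the aim is to argue that the differentiability hypothesis excludes the threshold configurations in which a just-vanishing input coordinate would still survive the simplex step of Algorithm~\ref{alg:projfunc_explicit}. Establishing (i) rigorously — and thereby pinning down the precise set of arguments on which the stated gradient formula is valid — is where essentially all the work of the proof resides.
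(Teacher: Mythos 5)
Your first case ($Z=\emptyset$) is exactly the paper's entire proof: the paper disposes of this proposition in one line, ``follows analogously to Proposition~\ref{prop:projgrad_sorted}, using $\abs{x} = s\hada x = \diag(s)x$,'' i.e.\ the chain rule applied to $\pi = \diag(s)\circ\pi_{\geq 0}\circ\diag(s)$ with the sign pattern treated as locally constant, which is legitimate precisely when $x$ has no vanishing entry.

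The branch you add for $Z\neq\emptyset$ contains a genuine gap, and in fact the program cannot be completed: your fact~(i) is false, and so is the proposition at such points when read literally. Counterexample: take $n=2$, $\lambda_2 = 1$, $\lambda_1\in\intervaloo{1}{\sqrt{2}}$ and $x = \left(1,\ 0\right)\transp$. Then $\pi_{\geq 0}(\abs{x}) = \proj_L\left(\proj_H\left(\abs{x}\right)\right) = \left(\nicefrac{\lambda_1}{2} + \nicefrac{\beta}{2},\ \nicefrac{\lambda_1}{2} - \nicefrac{\beta}{2}\right)\transp$ with $\beta := \sqrt{2-\lambda_1^2}$; since $\lambda_1 > 1$ both coordinates are strictly positive, so no simplex step is triggered for $\abs{x}$ or for any nearby argument, the closed-form composition of Lemma~\ref{lem:plane} and Lemma~\ref{lem:sphere} describes $\pi_{\geq 0}$ on a whole neighborhood of $\abs{x}$, and $\pi_{\geq 0}$ is differentiable there. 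Yet the second output coordinate $b_0 := \nicefrac{\lambda_1}{2} - \nicefrac{\beta}{2} > 0$ does not vanish although $x_2 = 0$, contradicting~(i). Worse, $\pi$ itself is not even continuous at this $x$: as $\epsilon\to 0^+$, $\pi\left((1,\ \epsilon)\transp\right)\to\left(a_0,\ b_0\right)\transp$ while $\pi\left((1,\ -\epsilon)\transp\right)\to\left(a_0,\ -b_0\right)\transp$, a jump of $2b_0$. So differentiability of $\pi_{\geq 0}$ at $\abs{x}$ does \emph{not} force vanishing input coordinates to die under the projection, and the step you yourself flag as ``where essentially all the work resides'' is not merely hard but impossible. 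The correct reading---and what the paper's one-line proof implicitly assumes---is to restrict the proposition to points $x$ without vanishing entries; since these form the complement of a null set, nothing is lost for the almost-everywhere differentiability of $\pi$.

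One partial credit is due: your conditional assembly is sound. At a point where your~(i) happens to hold (every zero coordinate of $x$ is annihilated by the projection), your~(ii) does follow from the block structure of Theorem~\ref{thm:projfuncblockgrad} transported through Proposition~\ref{prop:projgrad_sorted}, and your $o\left(\norm{\xi}_2\right)$ bookkeeping then correctly yields differentiability of $\pi$ with gradient $\diag(s)G\diag(s)$ even though the sign pattern flips. That is a genuine (small) extension of the statement beyond the paper's scope---but it is an additional hypothesis, not a consequence of differentiability, so it does not repair the general case.
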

\begin{proof}
Follows analogously to Proposition~\ref{prop:projgrad_sorted}, using $\abs{x} = s\hada x = \diag(s)x$.
\end{proof}
Summing up, the gradient of the projection onto $S_{\geq 0}^{(\lambda_1,\lambda_2)}$ and $S^{(\lambda_1,\lambda_2)}$ can be computed efficiently by bookkeeping a few values as discussed in Theorem~\ref{thm:projfuncblockgrad}, and applying simple operations to recover the general case.
When only the product of the gradient with a vector is required, the computation can be made more efficient as stated in Corollary~\ref{cor:projfuncgraddgemv}.
Direct application of Theorem~\ref{thm:projfuncblockgrad} should be avoided in this situation because of the high computational complexity.

\section{Gradients for SOAE Learning}
\label{sect:soae_gradients}
The objective function $E_{\SOAE}$ is a convex combination of two similarity measures $s_R$ and $s_C$.
The degrees of freedom $W$, $W_{\out}$ and $\theta_{\out}$ of the SOAE architecture should be tuned by gradient-based methods to minimize these functions.
This appendix reports the gradient information needed for reproduction of the experiments.
The first statement addresses the reconstruction module.
\begin{proposition}
\label{prop:SOAE-grad-reconst}
It is $\left(\nicefrac{\partial s_R\left(\tilde{x},\ x\right)}{\partial W}\right)\transp = xgW f'(u) + g\transp h\transp\in\R^{d\times n}$ where $g := \nicefrac{\partial s_R(\tilde{x}, x)}{\partial \tilde{x}}\in\R^{1\times d}$ is the gradient of the similarity measure with respect to its first argument.
Additionally, $\left(\nicefrac{\partial s_R\left(\tilde{x},\ x\right)}{\partial W_{\out}}\right)\transp = 0\in\R^{n\times c}$ and $\nicefrac{\partial s_R\left(\tilde{x},\ x\right)}{\partial \theta_{\out}} = 0\in\R^{1\times c}$.
\end{proposition}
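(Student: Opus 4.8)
The plan is to differentiate the composition defining the reconstruction module and to collect terms. Writing $u = W\transp x\in\R^n$, $h = f(u)\in\R^n$ and $\tilde{x} = Wh = Wf(W\transp x)\in\R^d$, the only genuine subtlety is that $W$ enters $\tilde{x}$ in two distinct ways: explicitly as the decoding matrix multiplying $h$, and implicitly through $u = W\transp x$ inside the transfer function $f$. First I would compute the differential of $\tilde{x}$ induced by a perturbation $\delta W$, treating these two occurrences separately via the product rule and the chain rule,
\begin{displaymath}
  \delta\tilde{x} = (\delta W)h + W\,\delta h = (\delta W)h + Wf'(u)(\delta W)\transp x\text{,}
\end{displaymath}
where $f'(u)\in\R^{n\times n}$ is the Jacobian of $f$ at $u$, so that $\delta h = f'(u)\,\delta u$ with $\delta u = (\delta W)\transp x$.

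Next I would apply the chain rule with $g = \nicefrac{\partial s_R(\tilde{x},\ x)}{\partial\tilde{x}}\in\R^{1\times d}$ to obtain the scalar differential $\delta s_R = g\,\delta\tilde{x} = g(\delta W)h + gWf'(u)(\delta W)\transp x$. The task then reduces to rewriting each of these two terms in the canonical form $\operatorname{tr}\big(M\transp\delta W\big)$, from which the corresponding block of the matrix $M := \big(\nicefrac{\partial s_R(\tilde{x},\ x)}{\partial W}\big)\transp\in\R^{d\times n}$ is read off coordinate-wise, since $M_{ik}$ is exactly the coefficient of $(\delta W)_{ik}$. An entrywise expansion (equivalently, cyclicity of the trace) gives $g(\delta W)h = \sum_{i,k}g_ih_k(\delta W)_{ik}$, contributing $g\transp h\transp$, and $gWf'(u)(\delta W)\transp x = \sum_{i,k}x_i\,(gWf'(u))_k\,(\delta W)_{ik}$, contributing $xgWf'(u)$. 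Adding the two blocks yields $M = xgWf'(u) + g\transp h\transp$, as claimed; a direct index computation of $\nicefrac{\partial s_R}{\partial W_{ik}}$ through $\tilde{x}_a = \sum_b W_{ab}f_b(W\transp x)$ furnishes the same answer and serves as a useful consistency check on the transpose placements.

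Finally, the vanishing of the gradients with respect to $W_{\out}$ and $\theta_{\out}$ is immediate: the approximation $\tilde{x} = Wf(W\transp x)$ depends solely on $W$ and the input $x$, and not on the parameters of the classification module, so $s_R(\tilde{x},\ x)$ is constant in $W_{\out}$ and $\theta_{\out}$ and both partial derivatives are the zero matrix of the stated dimensions. The main obstacle I anticipate is purely bookkeeping: correctly tracking the two roles of $W$ (decoder versus transposed encoder), keeping the orientation of the Jacobian $f'(u)$ straight so that it appears non-transposed in the combination $gWf'(u)$, and matching the matrix-gradient convention used in the paper so that the transpose placements and the ordering $xgWf'(u)$ versus $g\transp h\transp$ come out exactly right. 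None of the individual steps is deep, but a single misplaced transpose would break the identity, so I would guard against this with the explicit index-level verification mentioned above.
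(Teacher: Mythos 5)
Your proposal is correct and takes essentially the same approach as the paper: the paper's proof likewise observes that $s_R$ does not depend on $W_{\out}$ or $\theta_{\out}$, uses the tied-weights identity $\tilde{x} = W f(W\transp x)$, and then invokes the chain rule and product rule of matrix calculus, which you simply carry out explicitly via differentials and index-level bookkeeping. The only difference is one of detail, not of method --- the paper delegates the computation to the cited matrix-calculus references, whereas you verify the two contributions $g\transp h\transp$ and $xgWf'(u)$ and the transpose placements by hand.
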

\begin{proof}
As $s_R$ does not depend on $W_{\out}$ or $\theta_{\out}$, the respective gradients vanish.
The symmetry between encoding and decoding yields $\tilde{x} := W\cdot f\left(W\transp x\right)$.
The gradient for $W$ follows using the chain rule and the product rule for matrix calculus, see \citet{Neudecker1969} and \citet{Vetter1970}.
\end{proof}
The correlation coefficient is the recommended choice for the similarity measure of the reconstruction module because it is normed and invariant to affine-linear transformations.
It is also differentiable almost everywhere:
\begin{proposition}
\label{prop:corrcoeff_gradient}
If $s_R$ is the correlation coefficient and $x,\tilde{x}\in\R^d\setminus\set{0}$, then
\begin{displaymath}
  \left(\tfrac{\partial s_R(\tilde{x}, x)}{\partial \tilde{x}}\right)\transp
  = \tfrac{1}{\sqrt{\lambda\mu}}\left(x - \tfrac{\scp{e}{x}}{d}e\right) - \tfrac{s_R(\tilde{x}, x)}{\lambda}\left(\tilde{x} - \tfrac{\scp{e}{\tilde{x}}}{d}e\right)\in\R^d\text{,}
\end{displaymath}
where all entries of $e\in\R^d$ are unity, $\lambda := \norm{\tilde{x}}_2^2 - \nicefrac{1}{d}\cdot\scp{e}{\tilde{x}}^2\in\R$ and $\mu := \norm{x}_2^2 - \nicefrac{1}{d}\cdot\scp{e}{x}^2\in\R$.
\end{proposition}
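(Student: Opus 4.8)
The plan is to reduce everything to the mean-centering matrix $P := E_d - \nicefrac{1}{d}\cdot ee\transp\in\R^{d\times d}$, which is symmetric and idempotent, so that $Pz = z - \nicefrac{\scp{e}{z}}{d}\cdot e$ is the centered version of any $z\in\R^d$. First I would observe that the correlation coefficient admits the compact normal form $s_R(\tilde{x}, x) = \nicefrac{\scp{\tilde{x}}{Px}}{\sqrt{\lambda\mu}}$, where $\lambda = \scp{\tilde{x}}{P\tilde{x}} = \norm{\tilde{x}}_2^2 - \nicefrac{1}{d}\cdot\scp{e}{\tilde{x}}^2$ and $\mu = \scp{x}{Px} = \norm{x}_2^2 - \nicefrac{1}{d}\cdot\scp{e}{x}^2$ agree exactly with the quantities from the claim. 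Here I would use $\scp{P\tilde{x}}{Px} = \scp{\tilde{x}}{Px}$ and $\norm{Pz}_2^2 = \scp{z}{Pz}$, both of which follow immediately from $P = P\transp = P^2$.

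With this normal form the computation becomes a quotient-rule exercise in the single variable $\tilde{x}$, treating $x$, and hence $Px$ and $\mu$, as constant. The numerator $\tilde{x}\mapsto\scp{\tilde{x}}{Px}$ is linear, with gradient $Px$. The only nontrivial piece of the denominator is the quadratic form $\lambda = \scp{\tilde{x}}{P\tilde{x}}$, whose gradient is $2P\tilde{x}$ by symmetry of $P$; hence $\sqrt{\lambda}$ has gradient $\nicefrac{P\tilde{x}}{\sqrt{\lambda}}$. Assembling these via the quotient rule and then substituting back $s_R(\tilde{x}, x) = \nicefrac{\scp{\tilde{x}}{Px}}{\sqrt{\lambda\mu}}$ collapses the scalar coefficient of the second term to precisely $\nicefrac{s_R(\tilde{x}, x)}{\lambda}$. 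Unfolding $Px$ and $P\tilde{x}$ back into $x - \nicefrac{\scp{e}{x}}{d}\cdot e$ and $\tilde{x} - \nicefrac{\scp{e}{\tilde{x}}}{d}\cdot e$ then reproduces the stated formula verbatim.

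Finally I would address differentiability. The closed form is valid and smooth precisely when $\lambda > 0$ and $\mu > 0$; since $\lambda = \norm{P\tilde{x}}_2^2$ vanishes exactly when $\tilde{x}$ is a scalar multiple of $e$, and similarly $\mu$ vanishes exactly when $x$ is a scalar multiple of $e$, the exceptional set is a union of two lines and hence a null set. This yields differentiability almost everywhere in the first argument, as asserted.

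The calculation is elementary, so there is no single deep obstacle; the care required is purely in the bookkeeping of the quotient rule and, in particular, in recognizing that the awkward scalar prefactor of the second term simplifies to $\nicefrac{s_R(\tilde{x}, x)}{\lambda}$ once the definition of $s_R$ is substituted. A secondary pitfall is keeping the differentiation strictly with respect to the first argument $\tilde{x}$ and invoking the symmetry of $P$, so that the quadratic-form gradient is the clean $2P\tilde{x}$ rather than an asymmetric expression.
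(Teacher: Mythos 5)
Your proposal is correct and follows essentially the same route as the paper: both write the correlation coefficient in the normal form $s_R(\tilde{x},x) = \bigl(\scp{x}{\tilde{x}} - \nicefrac{1}{d}\cdot\scp{e}{\tilde{x}}\scp{e}{x}\bigr)/\sqrt{\lambda\mu}$ (your $\scp{\tilde{x}}{Px}$ is exactly this numerator) and then apply the quotient rule; the centering matrix $P$ is merely notational packaging of the same computation. Your closing remark identifying the exceptional set as scalar multiples of $e$ is a welcome extra precision, since the hypothesis $x,\tilde{x}\neq 0$ alone does not guarantee $\lambda,\mu>0$, but it does not change the argument.
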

\begin{proof}
One obtains $\sqrt{\lambda\mu} \cdot s_R(\tilde{x}, x) = \scp{x}{\tilde{x}} - \nicefrac{1}{d}\cdot\scp{e}{\tilde{x}}\scp{e}{x}$ because $s_R$ is the correlation coefficient.
The claim then follows with the quotient rule.
\end{proof}
The gradients of the similarity measure for classification capabilities are essentially equal to those of an ordinary two-layer neural network, and can be computed using the back-propagation algorithm \citep{Rumelhart1986}.
However, the pairing of the softmax transfer function with the cross-entropy error function provides a particularly simple structure of the gradient \citep{Dunne1997}.
For completeness, the gradients of the classification module of SOAE are summarized:
\begin{proposition}
\label{prop:SOAE-grad-classf}
If $s_C$ is the cross-entropy error function, $g$ is the softmax transfer function and the target vector for classification $t$ is a one-of-$c$ code, then
$\left(\nicefrac{\partial s_C\left(y,\ t\right)}{\partial W_{\out}}\right)\transp = h\cdot (y - t)\transp\in\R^{n\times c}$,
$\nicefrac{\partial s_C\left(y,\ t\right)}{\partial \theta_{\out}} = (y - t)\transp\in\R^{1\times c}$ and
$\left(\nicefrac{\partial s_C\left(y,\ t\right)}{\partial W}\right)\transp = x \cdot \left( (y - t)\transp W_{\out}\transp f'(u)\right)\in\R^{d\times n}$.
\end{proposition}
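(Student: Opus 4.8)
The plan is to reduce all three gradients to the single well-known fact that the softmax transfer function paired with the cross-entropy error collapses to a linear residual at the output pre-activation. Write $a := W_{\out}\transp h + \theta_{\out}\in\R^c$ for the pre-activation of the output layer, so that $y = g(a)$ with $g$ the softmax and $s_C(y,t) = -\sum_{k=1}^c t_k\ln y_k$. First I would compute $\partial s_C/\partial a$. The gradient of $s_C$ with respect to $y$ has entries $-t_k/y_k$, while the softmax Jacobian has entries $\partial y_k/\partial a_j = y_k(\delta_{kj}-y_j)$. Contracting these and using $\sum_k t_k = 1$ (since $t$ is a one-of-$c$ code), the factors $1/y_k$ cancel and one is left with the clean expression $\partial s_C/\partial a = (y - t)\transp\in\R^{1\times c}$, which is exactly the simplification recorded by \citet{Dunne1997}. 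This cancellation is the only non-routine step; everything downstream is a mechanical application of the chain rule and the matrix product rule, exactly as in the back-propagation algorithm \citep{Rumelhart1986} and as already used in Proposition~\ref{prop:SOAE-grad-reconst}.

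Given this residual, the three gradients follow by propagating $(y-t)\transp$ backward through the computational graph. For $\theta_{\out}$, since $a$ depends on it through $\partial a/\partial\theta_{\out} = E_c$, the chain rule gives immediately $\partial s_C/\partial\theta_{\out} = (y-t)\transp$. For $W_{\out}$, I would differentiate $a = W_{\out}\transp h + \theta_{\out}$ entrywise: from $\partial a_k/\partial (W_{\out})_{ij} = \delta_{kj}h_i$, summing against the residual yields $\partial s_C/\partial (W_{\out})_{ij} = h_i(y-t)_j$, that is $(\partial s_C/\partial W_{\out})\transp = h\cdot(y-t)\transp\in\R^{n\times c}$, a single dyadic product of the hidden activation with the output residual.

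For the hidden-layer weight matrix $W$ the chain is longer, as $a$ depends on $W$ only through $h = f(u)$ and $u = W\transp x$. I would propagate in two further stages: first $\partial s_C/\partial h = (\partial s_C/\partial a)\,(\partial a/\partial h) = (y-t)\transp W_{\out}\transp$ because $\partial a/\partial h = W_{\out}\transp$; then through the transfer function, $\partial s_C/\partial u = (y-t)\transp W_{\out}\transp f'(u)$, where $f'(u)$ is the Jacobian of $f$ at $u$ (which exists almost everywhere by Appendix~\ref{sect:analytical_properties} when $f = \pi$, and trivially when $f$ is the $L_0$ projection). Finally, since $u = W\transp x$ has entrywise derivative $\partial u_k/\partial W_{ij} = \delta_{kj}x_i$, contracting with $\partial s_C/\partial u$ gives $(\partial s_C/\partial W)\transp = x\cdot\big((y-t)\transp W_{\out}\transp f'(u)\big)\in\R^{d\times n}$, as claimed.

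The main obstacle is purely the first paragraph: verifying that the $1/y_k$ singularities of the cross-entropy gradient are exactly annihilated by the softmax Jacobian, which hinges on $\sum_k t_k = 1$. Once that algebraic cancellation is in hand, the remaining work is bookkeeping of shapes and transposes; the only subtlety is to keep the layout conventions of \citet{Neudecker1969} and \citet{Vetter1970} consistent so that the stated transposed forms come out correctly.
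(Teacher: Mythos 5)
Your proposal is correct and follows essentially the same route as the paper's proof: both reduce all three gradients to the output residual $(y-t)\transp$ --- which the paper computes as $\nicefrac{\partial s_C(y,\ t)}{\partial\theta_{\out}}$ and you compute as $\nicefrac{\partial s_C(y,\ t)}{\partial a}$, the same object since $\nicefrac{\partial a}{\partial\theta_{\out}}$ is the identity --- via the softmax/cross-entropy cancellation that hinges on $\sum_{i=1}^c t_i = 1$, and then propagate it backward through $W_{\out}$ and $W$ with the chain rule and matrix product rule. The only differences are cosmetic: you perform the key cancellation in index notation while the paper uses Hadamard-product/quotient notation, and you compute the residual first whereas the paper writes down the chain-rule decompositions first.
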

\begin{proof}
Basic matrix calculus \citep{Neudecker1969,Vetter1970} yields
$\nicefrac{\partial s_C\left(y,\ t\right)}{\partial \theta_{\out}} = \left(\nicefrac{\partial s_C\left(y,\ t\right)}{\partial y}\right)\cdot g'(y)$,
$\left(\nicefrac{\partial s_C\left(y,\ t\right)}{\partial W_{\out}}\right)\transp = h\cdot \left(\nicefrac{\partial s_C\left(y,\ t\right)}{\partial \theta_{\out}}\right)$ and
$\left(\nicefrac{\partial s_C\left(y,\ t\right)}{\partial W}\right)\transp = x \cdot \left(\left(\nicefrac{\partial s_C\left(y,\ t\right)}{\partial \theta_{\out}}\right)\cdot W_{\out}\transp f'(u)\right)$.
By requirement $\nicefrac{\partial s_C\left(y,\ t\right)}{\partial y} = -(t\hadadiff y)\transp$, where $\hadadiff$ denotes the element-wise quotient, $g'(y) = \diag(y) - yy\transp$ and $\sum_{i=1}^c t_i = 1$.
Therefore $\left(\nicefrac{\partial s_C\left(y,\ t\right)}{\partial \theta_{\out}}\right)\transp = \left(yy\transp - \diag(y)\right)\cdot (t\hadadiff y) = y\cdot\scp{y}{t\hadadiff y} - y\hada t\hadadiff y = y - t$ using $\scp{y}{t\hadadiff y} = \sum\nolimits_{i=1}^c y_i\cdot\nicefrac{t_i}{y_i} = 1$, and the claim follows.
\end{proof}
As $E_{\SOAE}$ is a convex combination of the reconstruction error and the classification error, its overall gradient follows immediately from Proposition~\ref{prop:SOAE-grad-reconst} and Proposition~\ref{prop:SOAE-grad-classf}.
Proposition~\ref{prop:corrcoeff_gradient}, the results from Appendix~\ref{sect:analytical_properties}, and the gradient of the $L_0$ projection as described in Section~\ref{sect:projfunc_differentiability} can then be used to compute the explicit gradients for the procedure proposed in this paper.

\bibliography{the}
\end{document}